\pdfoutput=1

\newif\ifdraftmode
\draftmodefalse  % Change to \draftmodetrue for drafting

\ifdraftmode
  \documentclass[letterpaper, 11pt]{article}
\else
  \documentclass[letterpaper, 10pt, conference]{ieeeconf}
  \IEEEoverridecommandlockouts
\fi

\providecommand{\IEEEoverridecommandlockouts}{}
\providecommand{\overrideIEEEmargins}{}

\let\labelindent\relax

\usepackage{multicol}

\makeatletter
\let\NAT@parse\undefined
\makeatother

    \usepackage{amsthm}
	\usepackage{amsmath}
	\usepackage{verbatim} % multi-line comments
	\usepackage{amsfonts}
	\usepackage{amssymb}
	\usepackage{ragged2e}
	\usepackage{graphicx}
	\usepackage{cancel}
	\usepackage{mathtools}
	\usepackage{tabularx}
     \usepackage{adjustbox} % vertical alignment of figures
	\usepackage{arydshln}
	\usepackage{tensor}
	\usepackage{array}
	\usepackage[dvipsnames]{xcolor}
	\usepackage{listings}
	\usepackage{textcomp}
	\usepackage{mathrsfs}
	\usepackage{bbm}
	\usepackage{tikz}
	\usepackage{tikz-cd}
	\usepackage{enumitem}
	\usepackage{arydshln}
	\usepackage{relsize}
	\usepackage{multirow}
	\usepackage{scalerel}
	\usepackage{upgreek}
	\usepackage{ifthen}
	\usepackage{yhmath}
	\usepackage{blkarray}
	\usepackage{dashrule}
	\usepackage{subcaption}
	\usepackage{overpic}
    \usepackage[commandnameprefix=ifneeded]{changes}
    \usepackage[numbers]{natbib} 
    \usepackage{booktabs}

		\newcommand{\set}[1]{\left\{#1\right\}}
		
		\newcommand{\naturals}{\mathbb{N}}
		\newcommand{\N}{\naturals}

		\newcommand{\reals}{\mathbb{R}}
		\newcommand{\R}{\reals}

		\newcommand{\iprod}[1]{\left<#1\right>}

		\newcommand{\bmat}[1]{\begin{bmatrix}#1\end{bmatrix}}

		\DeclareMathOperator{\subto}{s.t.}

		\newtheorem{theorem}{Theorem}
        \newtheorem*{theorem*}{Theorem}

		\newtheorem{lemma}{Lemma}
		\newtheorem{corollary}{Corollary}

	\usepackage{letltxmacro}
	\LetLtxMacro\orgvdots\vdots
	\LetLtxMacro\orgddots\ddots

	\makeatletter
	\DeclareRobustCommand\vdots{%
		\mathpalette\@vdots{}%
	}
	\newcommand*{\@vdots}[2]{%
		\sbox0{$#1\cdotp\cdotp\cdotp\m@th$}%
		\sbox2{$#1.\m@th$}%
		\vbox{%
			\dimen@=\wd0 %
			\advance\dimen@ -3\ht2 %
			\kern.5\dimen@
			\dimen@=\wd2 %
			\advance\dimen@ -\ht2 %
			\dimen2=\wd0 %
			\advance\dimen2 -\dimen@
			\vbox to \dimen2{%
				\offinterlineskip
				\copy2 \vfill\copy2 \vfill\copy2 %
			}%
		}%
	}
	\DeclareRobustCommand\ddots{%
		\mathinner{%
			\mathpalette\@ddots{}%
			\mkern\thinmuskip
		}%
	}
	\newcommand*{\@ddots}[2]{%
		\sbox0{$#1\cdotp\cdotp\cdotp\m@th$}%
		\sbox2{$#1.\m@th$}%
		\vbox{%
			\dimen@=\wd0 %
			\advance\dimen@ -3\ht2 %
			\kern.5\dimen@
			\dimen@=\wd2 %
			\advance\dimen@ -\ht2 %
			\dimen2=\wd0 %
			\advance\dimen2 -\dimen@
			\vbox to \dimen2{%
				\offinterlineskip
				\hbox{$#1\mathpunct{.}\m@th$}%
				\vfill
				\hbox{$#1\mathpunct{\kern\wd2}\mathpunct{.}\m@th$}%
				\vfill
				\hbox{$#1\mathpunct{\kern\wd2}\mathpunct{\kern\wd2}\mathpunct{.}\m@th$}%
			}%
		}%
	}
	\makeatother

	\tikzset{
	  symbol/.style={
		draw=none,
		every to/.append style={
		  edge node={node [sloped, allow upside down, auto=false]{$#1$}}}
	  }
	}

\usepackage[bookmarks=true]{hyperref}

\usepackage{cleveref}
\Crefname{figure}{Figure}{Figures}
\Crefname{table}{Table}{Tables}
\Crefname{equation}{Eq.}{Eqs.}
\Crefname{section}{Section}{Sections}
\Crefname{subsection}{Subsection}{Subsections}
\Crefname{appendix}{Appendix}{Appendices}

\usepackage[commentColor=black]{algpseudocodex}
\tikzset{algpxIndentLine/.style={draw=black}}
\algrenewcommand{\alglinenumber}[1]{\bfseries\footnotesize #1}
\algrenewcommand{\textproc}{}
\algrenewcommand{\algorithmicrequire}{\textbf{Input:}}
\algrenewcommand{\algorithmicensure}{\textbf{Output:}}
\usepackage{algorithm}
\makeatletter
\newcommand{\algorithmname}{\ALG@name}
\renewcommand{\floatc@ruled}[2]{{\@fs@cfont #1:} #2\par}
\makeatother

\usepackage{dblfloatfix}

\usetikzlibrary{shapes.geometric}
\usetikzlibrary{angles, quotes, calc}
\usetikzlibrary{arrows.meta}

\usepackage[acronym]{glossaries}

\newacronym{gcs}{GCS}{Graph of Convex Sets}
\newacronym{qp}{QP}{Quadratic Program}
\newacronym{qcqp}{QCQP}{Quadratically Constrained Quadratic Program}
\newacronym{qcqps}{QCQPs}{Quadratically Constrained Quadratic Programs}
\newacronym{sdp}{SDP}{Semidefinite Program}
\newacronym{psd}{PSD}{Positive Semidefinite}
\newacronym{sdr}{SDR}{Semidefinite Relaxation}
\newacronym{sdrs}{SDRs}{Semidefinite Relaxations}
\newacronym{mip}{MIP}{Mixed-Integer Program}
\newacronym{micp}{MICP}{Mixed-Integer Convex Program}
\newacronym{spp}{SPP}{Shortest Path Problem}
\newacronym{com}{COM}{Center of Mass}
\newacronym{soc}{SOC}{Second-Order Cone}
\newacronym{mpc}{MPC}{Model-Predictive Control}
\newacronym{admm}{ADMM}{Alternating Direction Method of Multipliers}
\newacronym{lp}{LP}{Linear Program}
\newacronym{madmm}{mADMM}{Multiblock ADMM}
\newacronym{socp}{SOCP}{Second-Order Cone Program}
\newacronym{rsoc}{RSOC}{Rotated Second-Order Cone}
\newacronym{sos}{SOS}{Sum-of-Squares}
\newacronym{lmi}{LMI}{Linear Matrix Inequality}
\newacronym{pmi}{PMI}{Polynomial Matrix Inequality}
\newacronym{bvp}{BVP}{Boundary Value Problem}
\newacronym{nlp}{NLP}{Nonlinear Programming}

\usepackage{preamble/color-edits}

\addauthor{bpg}{cyan}
\addauthor{rt}{teal}
\addauthor{aa}{magenta}
\addauthor{pp}{red}
\addauthor{claude}{orange}

\title{Semidefinite Relaxations for \\ Collision-Free Motion Planning}

\author{
Bernhard Paus Graesdal$^1$,
Alexandre Amice$^1$,
Pablo A. Parrilo$^1$, and
Russ Tedrake$^1$% <- this % stops a space
\thanks{$^1$ Department of Electrical Engineering and Computer Science, Massachusetts Institute of Technology, Cambridge, MA, USA.
E-mail: \texttt{\{graesdal, amice, parrilo, russt\}@mit.edu}}
}

\begin{document}

\maketitle
\thispagestyle{empty}
\pagestyle{empty}

\begin{abstract}
We study semidefinite relaxations for collision-free motion planning.
We focus on a point robot moving from start to goal through spherical obstacles in $\R^n$, subject to path continuity constraints and squared derivative costs; a setting that is conceptually simple yet captures the hardness of collision-free motion planning.
We formulate this problem exactly as a nonconvex problem over polynomial curves, and present a natural semidefinite relaxation.
We contribute two key theoretical insights;
to our knowledge this is the first theoretical analysis of semidefinite relaxations for collision-free motion planning.
First, we show that solving the convex relaxation is equivalent to solving, to global optimality, a related motion planning problem in a potentially higher-dimensional space.
This geometric interpretation yields necessary and sufficient conditions for tightness, and a clear intuition for when the relaxation is loose.
Second, we show that the relaxation admits a symmetry reduction that makes it significantly smaller than one might expect, with PSD cone sizes that scale linearly with the polynomial degree and are independent of the ambient dimension.
The resulting relaxation is 10 to 100 times faster than direct nonlinear programming transcriptions solved with SNOPT and IPOPT, exhibits significantly lower variance in solve times, and reliably finds a locally optimal path for the original problem.
We demonstrate its effectiveness as a convex steering function in an RRT planner for minimum-snap quadrotor planning with $C^4$ continuous trajectories.

\end{abstract}

\section{Introduction}
\label{sec:introduction}

\begin{figure}[!ht]
\centering
\includegraphics[width=1.0\columnwidth,trim={120 0 20 0},clip]{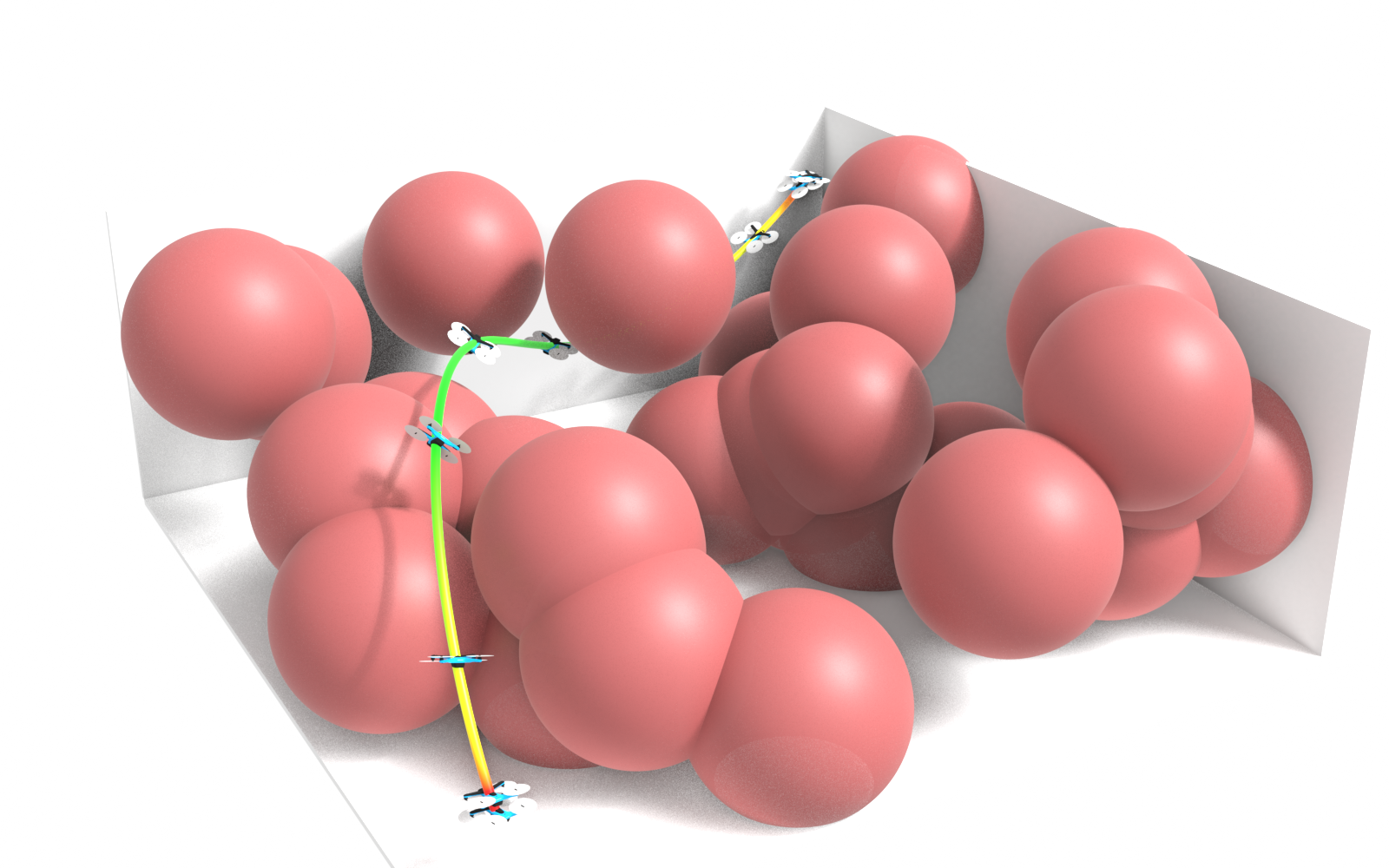}
\caption{
A collision-free, $C^4$-continuous, minimum-snap quadrotor trajectory through a cluttered field of spherical obstacles, planned by an RRT that uses our semidefinite relaxation as a convex steering function.
Our relaxation considers both obstacle-avoidance and other trajectory constraints, while being fast and reliable enough to make optimization a viable inner loop.
The color coding of the trajectory indicates velocity (red is slow, green is fast).
}
\label{fig:hero}
\end{figure}

Collision-free motion planning --
the problem of finding a smooth trajectory between two configurations that avoids a set
of obstacles -- 
is a fundamental challenge in robotics and autonomous systems.
Even in simple settings,
the problem is computationally difficult due to the nonconvexity of the obstacle avoidance
constraints.
In the case of a point robot navigating among sphere obstacles in $\mathbb{R}^2$,
the minimum length path can be computed in polynomial time~\cite{chang2005shortest},
but the problem becomes NP-hard in $\mathbb{R}^3$ for polyhedral
obstacles~\cite{canny1987new},
and the complexity for disjoint unit sphere obstacles in $\mathbb{R}^3$ remains an open
problem~\cite[Section~31.5]{toth2017handbook}.
In robotics, however, minimum path length is rarely the only consideration;
motion planning typically involves costs such as time, energy, or snap, subject to constraints such as boundary conditions, derivative bounds, and continuity constraints.
For this more general problem setting,
no efficient algorithm is known.

Several approaches have been proposed to solve collision-free motion planning in practice.
Optimization-based methods parametrize the trajectory as a piecewise polynomial and naturally handle different costs and constraints.
A popular approach is to use trajectory optimization with \acrfull*{nlp}~\cite{hargraves1987direct,zucker2013chomp,tracy2023differentiable}, but nonconvex optimization can be brittle and sensitive to the initial guess.
Another approach sidesteps the nonconvexity of obstacle avoidance by precomputing a convex decomposition of the free-space~\cite{dai2023certifiedpolyhedraldecompositionscollisionfree,werner2024faster,deits2015computing},
which can itself be a hard problem,
and then plan through the decomposition with convex optimization~\cite{deits2015efficient,marcucci2023motion,marcucci2025biconvex, werner2025superfast, kondo2026mighty}.
This has proven particularly effective for multiquery planning in a single environment.
On the other hand, sampling-based planners are capable of global reasoning~\cite{lavalle1998rapidly,kavraki2002probabilistic,karaman2011sampling},
but are unable to naturally reason about trajectory constraints.
An alternative is to combine sampling-based planners with local trajectory optimization,
either by post-processing a sampled geometric path~\cite{richter2016polynomial},
by precomputing motion primitives offline~\cite{ortiz2024idb},
or by embedding a local solver as a steering function within the sampler~\cite{stoneman2014embedding}.
Geometric sampling with post-processing does not reason about dynamic feasibility during exploration,
often making the sampled path a poor initialization for optimization~\cite{ortiz2024idb}.
Using optimization as the steering function addresses this, but jointly reasoning about obstacles and trajectory constraints is typically nonconvex, and the resulting \acrshort*{nlp} solves can be brittle with highly variable solve times, often making this impractical.

In recent years, semidefinite relaxations have emerged as a powerful tool for solving certain classes of nonconvex optimization problems.
Through the \acrfull*{sos}/moment hierarchy~\cite{parrilo2003semidefinite,lasserre2001global}, one can construct a sequence of increasingly strong relaxations, at the cost of growing computational complexity.
These relaxations use convex optimization, and therefore do not suffer from being brittle or relying on an initial guess.
The application of semidefinite relaxations has seen success in estimation and perception problems such as point cloud registration~\cite{yang2020teaser}, pose-graph SLAM~\cite{rosen2019se} and range-aided SLAM~\cite{papalia2024certifiably}, and has lately been applied to some motion planning problems, such as planning through contact~\cite{graesdal2024towards,kang2025global}.
However, the behavior of semidefinite relaxations for collision-free motion planning is poorly understood, and to our knowledge no theoretical analysis of their properties in this setting has been given.

In this paper,
we study semidefinite relaxations for collision-free motion planning.
In particular, we focus on the problem of a point robot navigating from a start to a goal in a field of spherical obstacles in $\R^n$, minimizing a squared derivative cost (such as minimum energy or minimum snap) subject to path continuity constraints and boundary conditions.
This setting is conceptually simple yet captures the hardness of motion planning.
We formulate this problem for polynomial curves exactly as a nonconvex problem,
and present a natural semidefinite relaxation of it.
Our core contribution is a geometric interpretation of the relaxation: we show it is equivalent to solving, to global optimality, a related collision-free motion planning problem in a space of potentially higher dimension than the original ambient space $\R^n$. This yields necessary and sufficient conditions for when the relaxation is exact, and a clear intuition for when the relaxation is loose.

Beyond the theoretical analysis,
we show that the relaxation provides a fast and reliable convex subroutine for motion planning.
This is enabled by two properties of our formulation.
First,
we use a classical result from \acrshort*{sos} to certify exactly when a polynomial curve avoids a spherical obstacle,
without the need for discretization.
The resulting \acrfull*{lmi} condition takes a particularly nice form,
with \acrshort*{psd} constraints that scale linearly with the polynomial trajectory degree $d$ and are independent of the ambient dimension $n$.
Second,
the \acrshort*{lmi} condition is nonconvex in the trajectory coefficients,
so we formulate a first-order semidefinite relaxation to obtain a convex program.
A naive application of the well-known Shor relaxation~\cite{shor1987quadratic} yields a large program,
but we exploit symmetry to reduce the size of the PSD cone by a factor of $n$.
Together,
these properties yield a relaxation that is 10 to 100 times faster than direct nonlinear programming transcriptions solved with SNOPT~\cite{gillSNOPT2005} and IPOPT~\cite{waechterIPOPT2006},
exhibits significantly lower variance in solve times,
and reliably finds a locally optimal path.
We demonstrate the effectiveness of using our method as a convex and robust steering function in an RRT,
planning $C^4$ continuous,
minimum-snap collision-free trajectories through highly cluttered environments.

\section{Related Works}
\label{sec:related_work}
In this section, we review relevant works on semidefinite relaxations for motion planning and optimal control.
As discussed in the introduction, there is also a substantial body of work on semidefinite relaxations for perception and estimation problems, which we do not review here.

Several works apply semidefinite relaxations to motion planning and control problems and empirically observe tight relaxations, though without analyzing when or why they are tight.
Teng et al.~\cite{teng2023convex} formulate kinodynamic motion planning for rigid body systems as exact polynomial optimization problems, using a variational integrator to discretize the dynamics on Lie groups.
They apply the moment hierarchy and empirically find that tight solutions are obtained at the second order of the hierarchy for most systems they consider.
Similarly, H\"aring et al.~\cite{haring2024trajectory} study minimum-energy control of the unicycle model and empirically show that the second-order relaxation is tight.
In~\cite{vega2025convex}, Vega et al. formulate spacecraft maneuver planning with a single spherical keep-out zone (analogous to considering a single spherical obstacle) as a \acrfull*{qcqp} with linearized dynamics, and apply the Shor relaxation to obtain empirically tight solutions.
Recently, a setting similar to ours was considered by Mahajan et al.~\cite{mahajan2026tinysdp}, who apply the Shor relaxation to collision-free \acrshort*{mpc} with spherical obstacles and discretized collision-avoidance constraints, focusing on 
developing a custom cached Riccati-based \acrshort*{admm} solver that enables real-time rates on embedded hardware.

A related line of work applies semidefinite relaxations to time-scaled optimal control.
Yang et al.~\cite{yang2025new} develop a tailored semidefinite relaxation for linear and piecewise-affine systems and combine it with convex decompositions of the free-space and \acrfull*{gcs}~\cite{marcucci2024shortest} to jointly optimize mode sequences and time-optimal collision-free trajectories.
Dong et al.~\cite{dong2026fast} apply a similar relaxation to jointly optimize trajectories and time allocations for optimal control problems with spatio-temporal constraints, such as a quadrotor required to pass through a sequence of known waypoints within specified time windows, and exploit the banded sparsity of the multiple shooting formulation.

Semidefinite relaxations have also been applied to contact-rich planning.
Graesdal et al.~\cite{graesdal2024towards} relax the contact dynamics of planar pushing with the first-order semidefinite relaxation, strengthen the formulation with implied constraints, and use \acrshort*{gcs} to plan optimal contact mode sequences.
Kang et al.~\cite{kang2025global} apply the moment hierarchy to a broader class of contact-rich planning problems,
encoding the discrete mode sequence directly in the hierarchy rather than through \acrshort*{gcs},
and leverage problem-specific sparsity to obtain solutions at the second or third order of the hierarchy.
Finally, Wei and D\"umbgen~\cite{wei2026global} take a different approach: they sample trajectory rollouts and use KernelSOS, a global optimization technique that builds a nonparametric surrogate of the cost landscape and minimizes it using semidefinite optimization, to find promising regions of the search space that are then refined with a sampling-based local optimizer.

In contrast to these works, which apply semidefinite relaxations to various planning problems and observe tightness empirically, our goal is to understand \emph{when} the relaxation is tight.
We therefore study a deliberately simplified setting, rich enough to capture the core difficulty of collision-free planning yet simple enough to admit a precise analysis.
This setting also reveals a symmetry that reduces the \acrshort*{sdp} to a size independent of the ambient dimension, which can be combined with the correlative sparsity exploited in prior work.

\section{Motion Planning}
\label{sec:motion_planning}
In this section, we state the optimal collision-free motion planning problem exactly, as an optimization over curves where the entire curve must be collision-free.
This problem is infinite-dimensional since the decision variable is a curve, and cannot be solved numerically as stated.
To address this, we restrict the curves to polynomials and define the class of curve costs we consider.
The result is a semi-infinite \acrshort*{qcqp}: the decision variables are finite-dimensional, but the obstacle-avoidance constraints must hold at every point along the curve.
In the next section, we use \acrshort*{sos} to reformulate the collision-avoidance constraint exactly as \acrshort*{lmi} conditions.

\subsection{Problem Statement}

We seek a collision-free path that minimizes the sum of the squared $\ell_2$-norm of its derivatives, such as velocity or snap, subject to boundary conditions and continuity constraints.
Let the trajectory be represented as a curve $q: [0, 1] \rightarrow \R^n$.
We require the trajectory $q$ to be $\eta$-times continuously differentiable, that is, $q \in \mathcal{C}^\eta$, and denote $q^{(i)} = (d^{i}/d s^{i})q$ as the $i$-th derivative.
We consider $m$ spherical obstacles with centers $c_j \in \R^n$ and radii $r_j > 0$, for $j = 1, \ldots, m$.
We formulate the motion planning problem as the following nonconvex optimization problem:
\begin{subequations}\label{eq:path_planning}
\begin{align}
    \min \quad & \int_0^1 \| q^{(k)}(s) \|_2^2 \, ds\\
    \subto \quad
    & \|q(s) - c_j\|_2^2 - r_j^2 
     \ge 0, \quad \forall s \in [0, 1],
    \label{eq:obstacle_const}
    \\
    &\quad\quad\quad\quad \quad \quad \quad \quad \quad\;\ j = 1, \ldots, m, \\
      &q^{(i)}(0) = q_0^{(i)}, 
      \label{eq:motion_planning_bc1}
      \\
      & q^{(i)}(1) = q_1^{(i)}, \quad i = 0, \ldots, \ell
      \label{eq:motion_planning_bc2}
      \\
    & q \in \mathcal{C}^\eta,
\end{align}
\end{subequations}
where $k \in \mathbb{N}_+$ is the derivative order for the cost, and there are boundary conditions up to the $\ell$-th derivative on the curve.
For motion planning, common choices for $k$ include $k=1$ (minimum energy), $k=2$ (minimum acceleration), and $k=4$ (minimum snap).
The problem is nonconvex due to~\eqref{eq:obstacle_const}.
The problem is infinite-dimensional: the decision variable $q$ is a function in $C^\eta$, and for each obstacle $j$, the avoidance condition~\eqref{eq:obstacle_const} represents a continuum of nonconvex constraints indexed by $s \in [0,1]$.
As such, it is unclear how to solve problem~\eqref{eq:path_planning} numerically.

\subsection{Polynomial Curves}
We parametrize the curve as a piecewise polynomial,
where each segment is a polynomial curve $\gamma: [0,1] \rightarrow \R^n$ of degree $d$:
\begin{align}
    \gamma(s) = \Gamma b_d(s),
    \label{eq:poly_curve}
\end{align}
where $b_d(s) : \R \rightarrow  \R^{d+1}$ is a basis of polynomials of degree less than or equal to $d$ in the variable $s$,
and $\Gamma \in \R^{n \times (d+1)}$ is the corresponding coefficient matrix.
With this parametrization,
the decision variables for each segment reduce to the coefficient matrix $\Gamma$,
and~\eqref{eq:path_planning} becomes an optimization problem with a finite number of variables,
albeit still with an infinite number of constraints.

To keep the presentation clean, we develop all formulations for a single segment. This is not a restriction of the formulation, as the cost and collision-avoidance constraints in~\eqref{eq:path_planning} decompose across the polynomial segments, which are coupled only through linear continuity constraints, so the extension to multiple segments is straightforward.
We likewise present these formulations in a basis-independent form, specializing to the Bernstein basis only for our numerical experiments and a few key results.

\subsection{Curve Costs}
\label{sec:curve_costs}
For a polynomial curve $\gamma$ of degree $d$ as defined in~\eqref{eq:poly_curve}, the squared $\mathcal{L}_2$ norm of the $k$-th derivative can be expressed as
\begin{equation}
    \int_0^1 \|\gamma^{(k)}(s)\|_2^2 \, ds = \mathrm{tr}(G^{(k)}_d \, \Gamma^T \Gamma),
    \label{eq:curve_cost_general}
\end{equation}
where 
\begin{align}
G^{(k)}_d = D_k G_{d-k} D_k^T \in \mathbb{S}^{d+1}
\end{align}
is the $k$-th derivative Gram matrix of degree $d$, $D_k \in \R^{(d+1)\times(d+1-k)}$ is the differentiation matrix defined by $b^{(k)}_d(s) = D_kb_{d-k}(s)$, and
\begin{equation}
    G_d = \int_0^1 b_d(s) \, b_d(s)^T \, ds
    \label{eq:gram_matrix_def}
\end{equation}
is the basis-dependent Gram matrix of degree $d$.
Both $G_d$ and $G^{(k)}_d$ can be computed analytically for any choice of basis.
To keep notation light, we write $G$ and $G^{(k)}$ when the degree $d$ is clear from context.
See~\cref{sec:gram_matrix} for explicit expressions for polynomials represented as Bézier curves, i.e. with the Bernstein basis.

The nullspace of $G^{(k)}_d$ characterizes the zero-cost directions of the polynomial trajectory, which we use in our theoretical analysis in \Cref{sec:theoretical_results}.
We present the following standard result on $G^{(k)}_d$ without proof:
\begin{lemma}
    \label{lemma:G_null}
    $G^{(k)}_d$ has rank $d+1-k$, and its nullspace is the space of polynomials of degree at most $k-1$.
    That is, $G^{(k)}_d v = 0$ if and only if $v^T b_d(s)$ is a polynomial of degree at most $k-1$.
\end{lemma}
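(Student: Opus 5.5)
The plan is to prove both claims through the quadratic form associated with $G^{(k)}_d$, rather than through its explicit entries. Fix any $v \in \R^{d+1}$ and let $p(s) = v^T b_d(s)$ be the corresponding scalar polynomial of degree at most $d$. Using $b^{(k)}_d(s) = D_k b_{d-k}(s)$ and the definition~\eqref{eq:gram_matrix_def} of $G_{d-k}$, we get
\begin{align*}
v^T G^{(k)}_d v = v^T D_k \left(\int_0^1 b_{d-k}(s) b_{d-k}(s)^T ds\right) D_k^T v = \int_0^1 \big(p^{(k)}(s)\big)^2 ds .
\end{align*}
This is the scalar case of~\eqref{eq:curve_cost_general}. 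It shows in particular that $G^{(k)}_d$ is positive semidefinite.

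The first key step is to identify the nullspace. Because $G^{(k)}_d$ is PSD, $G^{(k)}_d v = 0$ holds if and only if $v^T G^{(k)}_d v = 0$. The direction ($\Leftarrow$) is where PSD is needed: write $G^{(k)}_d = L L^T$, so $v^T G^{(k)}_d v = \|L^T v\|^2 = 0$ forces $L^T v = 0$ and hence $G^{(k)}_d v = 0$. Next, $\int_0^1 (p^{(k)})^2\,ds = 0$ holds if and only if the polynomial $p^{(k)}$ vanishes on $[0,1]$. A polynomial with infinitely many zeros is identically zero, so this happens exactly when $p^{(k)} \equiv 0$, that is, when $p$ has degree at most $k-1$. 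This gives the claimed characterization: $G^{(k)}_d v = 0$ if and only if $v^T b_d(s)$ is a polynomial of degree at most $k-1$.

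The second step is the rank. Since $b_d$ is a basis of the polynomials of degree at most $d$, the map $v \mapsto v^T b_d$ is a linear isomorphism from $\R^{d+1}$ onto that space. Under this isomorphism the nullspace corresponds to the polynomials of degree at most $k-1$, which form a space of dimension $k$, assuming $k \le d$; if $k > d$ the matrix is simply zero. Rank--nullity then gives $\mathrm{rank}\, G^{(k)}_d = d+1-k$.

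The only point that needs care is the implication from a vanishing quadratic form to $G^{(k)}_d v = 0$, and PSD handles it. As an alternative route to the rank, one can note that $G_{d-k}$ is positive definite, since $b_{d-k}$ is a basis and so $\int_0^1 (w^T b_{d-k})^2\,ds > 0$ for $w \neq 0$. It follows that $\mathrm{rank}\, G^{(k)}_d = \mathrm{rank}\, D_k$. Moreover, $D_k$ has full column rank $d+1-k$, because differentiation maps the polynomials of degree at most $d$ onto those of degree at most $d-k$. I expect no real obstacle in either argument.
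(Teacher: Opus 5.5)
The paper states this lemma as a standard result without proof, so there is no argument of the authors' to compare against. Your proof is correct and complete on its own:
\begin{itemize}
\item The identity $v^T G^{(k)}_d v = \int_0^1 \big(p^{(k)}(s)\big)^2\,ds$, with $p = v^T b_d$, is exactly the scalar case of~\eqref{eq:curve_cost_general}.
\item Positive semidefiniteness correctly turns $G^{(k)}_d v = 0$ into the vanishing of this quadratic form.
\item Continuity, together with the fact that a polynomial with infinitely many zeros is identically zero, then gives $p^{(k)} \equiv 0$.
\item Rank--nullity through the isomorphism $v \mapsto v^T b_d$ gives the rank.
\end{itemize}

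Your alternative route is also sound. Since $G_{d-k} \succ 0$, the rank of $G^{(k)}_d$ equals the rank of $D_k$. Differentiation maps polynomials of degree at most $d$ onto those of degree at most $d-k$, so $v \mapsto D_k^T v$ is onto $\R^{d+1-k}$, and $D_k$ has full column rank.

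The one point worth stating explicitly is the range of $k$. The lemma implicitly assumes $k \le d$, so that $G_{d-k}$ and $D_k$ are defined. Your remark covers $k = d+1$ consistently: the matrix is zero and its rank is $0 = d+1-k$. For $k > d+1$ the formula $d+1-k$ would be negative, so this restriction belongs to the statement's hypotheses rather than being a gap in your argument.

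The paper's later use of the lemma, the uniqueness of the obstacle-free QP when $k \le 2(\ell+1)$, needs only the nullspace characterization, which your first step supplies directly.
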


\subsection{Semi-Infinite QCQP Reformulation}
We reformulate the motion planning problem~\eqref{eq:path_planning} in the polynomial coefficient matrix $\Gamma \in \R^{n \times (d+1)}$.
The result is a semi-infinite \acrshort*{qcqp}, which has a finite number of decision variables, but obstacle avoidance must still hold point-wise for all $s \in [0,1]$.

To keep notation light, we present the derivation for a single unit sphere centered at $c \in \R^n$.
The condition that $\gamma$ avoids the sphere is equivalent to the polynomial nonnegativity condition:
\begin{align}
    p(s) := \|\gamma(s) - c\|_2^2 - 1 \geq 0 \quad \forall s \in [0,1].
    \label{eq:nonneg_collision}
\end{align}
To write the nonnegativity condition~\eqref{eq:nonneg_collision} in terms of the coefficient matrix $\Gamma$,
we first express the shifted curve $\gamma(s) - c$ in the polynomial basis $b_d(s)$.
We introduce a vector $u \in \R^{d+1}$ satisfying $u^T b_d(s) = 1$,
so that $\gamma(s) - c = \Gamma b_d(s) - c \cdot 1 = (\Gamma - cu^T) b_d(s)$.
Using this, together with $1 \cdot 1 = b_d(s)^T uu^T b_d(s)$,~\eqref{eq:nonneg_collision} can be rewritten as the quadratic form
\begin{align}
    p(s)
    &= b_d(s)^T \big((\Gamma - cu^T)^T(\Gamma - cu^T) - uu^T\big) b_d(s)
    \label{eq:nonneg_collision_quadratic}
    \\
    &\geq 0 \quad \forall s \in [0,1],
    \nonumber
\end{align}
which is a univariate polynomial in $s$ of degree $2d$.

Combining the cost, the obstacle condition~\eqref{eq:nonneg_collision_quadratic}, and the boundary conditions, the polynomial motion planning problem is
\begin{align}
\min \quad &\mathrm{tr}(G^{(k)} \Gamma^T \Gamma)
\label{eq:polynomial_nonconvex_prog}
\\
\subto \quad & b_d(s)^T \big((\Gamma - cu^T)^T(\Gamma - cu^T) - uu^T\big) b_d(s) \geq 0
\nonumber
\\
&\quad\quad\quad \forall s \in [0,1], \nonumber \\
& \Gamma A = B,
\nonumber
\end{align}
where $A$ and $B$ encode the boundary conditions~\eqref{eq:motion_planning_bc1} and~\eqref{eq:motion_planning_bc2} on the first $\ell$ derivatives:
\begin{align}
    A &= \begin{bmatrix}
        b_d(0) & b_d(1) & \ldots & b_d^{(\ell)}(0) & b_d^{(\ell)}(1)
    \end{bmatrix}, \nonumber \\
    B &= \begin{bmatrix}
        q_0 & q_1 & \ldots & q_0^{(\ell)} & q_1^{(\ell)}
    \end{bmatrix}.
    \label{eq:bc_matrices}
\end{align}
The problem is naturally written in matrix form.
The cost and obstacle constraint are both quadratic in $\Gamma$, making the problem a semi-infinite \acrshort*{qcqp}.

\section{Certifying Collision-Avoidance}
\label{sec:obstacle_avoidance}
Condition~\eqref{eq:nonneg_collision_quadratic} is the nonnegativity of a univariate polynomial over an interval, which we know how to rewrite exactly using \acrshort*{sos}.
In this section, we show that the resulting conditions take a particularly nice form that is independent of the ambient dimension $n$ of the curve and scales linearly with the polynomial degree $d$.
The conditions are quadratic in the polynomial coefficients $\Gamma$, so for a fixed trajectory, certifying collision-avoidance reduces to an \acrshort*{sdp}.
When searching over trajectories, the conditions are nonconvex, which we address in the next section.
We first derive the condition in a basis-free fashion, before presenting a concrete instance of it for the Bernstein basis.

\subsection{Basis-Independent Derivation}
The key tool is the following classical result from real algebraic geometry:
\begin{theorem*}[Markov-Lukács {\cite[\S 1.21]{szeg1939orthogonal}}]
A univariate polynomial $p \in \R[s]$ of degree $2d$ is nonnegative over an interval $[a,b]$ if and only if
\begin{align}
    p(s) = \sigma_0(s) + (s-a)(b-s) \sigma_1(s),
\end{align}
where $\sigma_0, \sigma_1$ are \acrshort*{sos} with
$\text{deg}(\sigma_0) \leq 2d$ and
$\text{deg}(\sigma_1) \leq 2d - 2$.
\end{theorem*}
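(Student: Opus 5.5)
The direction ``if'' is immediate: on $[a,b]$ we have $(s-a)(b-s)\ge 0$, and SOS polynomials are nonnegative everywhere, so $p=\sigma_0+(s-a)(b-s)\sigma_1\ge 0$ on $[a,b]$. All the work is in the ``only if'' direction. My plan is to reduce to the classical fact that a univariate polynomial nonnegative on all of $\R$ is a sum of two squares. Then I would handle the interval by factoring $p$ into roots and pairing factors appropriately. Throughout I may first rescale by the affine map $s\mapsto a+(b-a)s$ to take $[a,b]=[0,1]$, which preserves degrees and the SOS property.

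\textbf{Step 1 (global case).} I would first show that a polynomial $f$ with $f\ge0$ on $\R$ is SOS, indeed of the form $g^2+h^2$, with $\deg g,\deg h\le \deg f/2$. Real roots of $f$ must have even multiplicity. Complex roots come in conjugate pairs, so $f=c\,r(s)^2\prod_j (s-z_j)(s-\bar z_j)$ with $c>0$. Writing $\prod_j(s-z_j)=g_1+ih_1$ with $g_1,h_1$ real gives $\prod_j|s-z_j|^2=g_1^2+h_1^2$, and hence $f=(\sqrt c\,r g_1)^2+(\sqrt c\,r h_1)^2$.

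\textbf{Step 2 (interval case).} Now let $p\ge 0$ on $[0,1]$ with $\deg p\le 2d$, and factor $p$ over $\R$. Every factor of the following kinds is already of the desired form:
\begin{itemize}
\item complex-conjugate pairs;
\item real roots outside $[0,1]$ giving factors of constant sign on $[0,1]$;
\item even-multiplicity real roots.
\end{itemize}
The remaining factors are odd-multiplicity roots $t\in(0,1)$, which are impossible since $p$ would change sign there. That leaves roots at the endpoints and linear factors $(s-t)$ with $t<0$ or $t>1$, which I would write as $s-t=s+|t|$ or $t-s$ with positive shift.

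The key identities are:
\[
s=s^2+s(1-s),\qquad 1-s=(1-s)^2+s(1-s),\qquad s(1-s)\cdot s(1-s)=\bigl(s(1-s)\bigr)^2 .
\]
For a factor $s+\alpha$ with $\alpha>0$, I would write it as $\alpha(1-s)+(1+\alpha)s$, a positive combination of $s$ and $1-s$. So, up to the SOS and globally nonnegative part, $p$ is a product of linear forms, each of which is a nonnegative combination of $s$ and $1-s$. Expanding, $p$ becomes a nonnegative combination of products $s^i(1-s)^j$ times a global SOS factor, with $i+j$ at most the degree of the non-SOS part. This is essentially Bernstein positivity.

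Each product $s^i(1-s)^j$ then reduces to the form $\sigma_0+s(1-s)\sigma_1$. Since $s^i(1-s)^j$ equals an even-power square times $s$, $1-s$, $s(1-s)$, or $1$, I would apply the identities above to the leftover $s$ or $1-s$. Sums and products with the global SOS factor preserve the form, because SOS times SOS is SOS.

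\textbf{Main obstacle: degree bounds.} I expect the hard part to be controlling degrees, namely getting $\deg\sigma_0\le 2d$ and $\deg\sigma_1\le 2d-2$. The worry is that the Bernstein-style expansion could blow up degrees, since a positive combination of products of linear forms does not increase degree but the conversion must not either. A monomial $s^i(1-s)^j$ with $i+j\le 2d$ converts into terms of degree $\le i+j$, so this looks fine, but odd total degree needs care. If $\deg p=2d-1$ is odd, I would multiply by nothing and instead use the identities, which raise the $\sigma_0$ part to degree $i+j+1\le 2d$; this is still fine.

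Should this bookkeeping become messy, my fallback is the classical Lukács parametrization directly: write $p=A^2+B^2+s(1-s)(C^2+D^2)$ by interlacing roots, following Szegő's argument.
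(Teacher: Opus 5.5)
Your proof is correct. Note that the paper gives no proof of this statement; it cites Szeg\H{o}. The cited classical result (Luk\'acs) is stronger, with single squares: $p = A^2 + (s-a)(b-s)B^2$ with $\deg A \le d$ and $\deg B \le d-1$. Your factor-and-expand argument works as follows:
\begin{itemize}
\item put the conjugate pairs and even-multiplicity real roots into a global SOS factor $\Sigma$;
\item orient each remaining linear factor (root at an endpoint or outside $[0,1]$) as a nonnegative combination $\alpha(1-s)+\beta s$;
\item expand to $\Sigma\sum_{i+j=N}c_{ij}s^i(1-s)^j$ with $c_{ij}\ge 0$.
\end{itemize}
This gives only sums of several squares. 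That is exactly what the statement as written needs, and also what the collision-avoidance lemma needs, since it only asks for $Q_0,Q_1\succeq 0$. So your route gives up the single-square form in exchange for a short, elementary argument. The interlacing fallback is unnecessary.

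Your ``main obstacle'' goes away by a parity observation. $\Sigma$ has even degree, so $N=\deg p-\deg\Sigma$ has the same parity as $\deg p$. When $\deg p = 2d$, every term has $i\equiv j \pmod 2$:
\begin{itemize}
\item even--even terms are squares of degree $N$;
\item odd--odd terms are $s(1-s)$ times a square of degree $N-2$.
\end{itemize}
Multiplying by $\Sigma$ gives $\deg\sigma_0\le 2d$ and $\deg\sigma_1\le 2d-2$ directly, without using $s=s^2+s(1-s)$ or $1-s=(1-s)^2+s(1-s)$. Those identities are needed only when $\deg p<2d$ is odd, and your bookkeeping there is right: $\deg\sigma_0\le \deg p+1\le 2d$ and $\deg\sigma_1\le \deg p-1\le 2d-2$.

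Two small repairs:
\begin{itemize}
\item A factor $s+\alpha$ is not ``already of the desired form'', since it is not SOS. Remove it from your first list; you treat it correctly later anyway.
\item After orienting every linear factor to be nonnegative on $[0,1]$, state that the leading constant is positive. This holds because $p\ge 0$ on $[0,1]$ and $p\not\equiv 0$, so $p$ is positive at some interior point where all factors are positive.
\end{itemize}
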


Applying this theorem to~\eqref{eq:nonneg_collision_quadratic} immediately gives:

\begin{lemma}[Collision avoidance for polynomial curves]
\label{lem:collision_avoidance}
A polynomial curve $\gamma(s) = \Gamma b_d(s)$ of degree $d$ in $s$ lies entirely outside a unit sphere centered at $c$ if and only if there exist Gram matrices $Q_0 \in \mathbb{S}^{d+1}_+$ and $Q_1 \in \mathbb{S}^{d}_+$ such that
\begin{align}
    \mathcal{L}((\Gamma - cu^T)^T(\Gamma - cu^T), Q_0, Q_1) = 0,
    \label{eq:collision_free_conditions}
\end{align}
where $\mathcal{L}$ is the linear map obtained by matching polynomial coefficients
in the Markov-Lukács decomposition,
and $u \in \R^{d+1}$ satisfies $u^T b_d(s) = 1$.
\end{lemma}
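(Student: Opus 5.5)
The plan is to chain three equivalences: geometric collision avoidance, nonnegativity of $p$ on $[0,1]$, the Markov-Lukács certificate, and finally the Gram-matrix form of that certificate. The first link is already in hand. The curve lies outside the unit sphere centered at $c$ if and only if $p(s) = \|\gamma(s)-c\|_2^2 - 1 \ge 0$ for all $s\in[0,1]$. By \eqref{eq:nonneg_collision_quadratic}, this polynomial is
\begin{align*}
p(s) = b_d(s)^T\big(M - uu^T\big)b_d(s), \qquad M := (\Gamma - cu^T)^T(\Gamma - cu^T).
\end{align*}
Its degree is at most $2d$. Since a polynomial of lower degree can be regarded as one of formal degree $2d$ with vanishing leading coefficients, and the degree bounds in Markov-Lukács are only upper bounds, I will invoke the theorem with $[a,b]=[0,1]$. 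This gives nonnegativity on $[0,1]$ if and only if
\begin{align*}
p(s) = \sigma_0(s) + s(1-s)\sigma_1(s),
\end{align*}
with $\sigma_0,\sigma_1$ SOS, $\deg\sigma_0\le 2d$ and $\deg\sigma_1\le 2d-2$.

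Next I will use the standard Gram-matrix characterization of univariate SOS polynomials. A polynomial $\sigma$ of degree at most $2e$ is SOS if and only if $\sigma(s) = b_e(s)^T Q\, b_e(s)$ for some $Q\succeq 0$. The forward direction writes $\sigma=\sum_i g_i^2$ with each $\deg g_i\le e$, since leading terms of squares cannot cancel. Expanding each $g_i$ in the basis $b_e$ then gives $Q = \sum_i v_i v_i^T$. The converse follows by factoring $Q = LL^T$. Applying this with $e=d$ for $\sigma_0$ yields $Q_0\in\mathbb{S}^{d+1}_+$. Applying it with $e=d-1$ for $\sigma_1$ yields $Q_1\in\mathbb{S}^{d}_+$, which matches the sizes in the statement. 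Here $b_{d-1}$ is the degree-$(d-1)$ basis; in the Bernstein case it is the lower-degree Bernstein basis.

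The remaining step is to define $\mathcal{L}$ and check that it is linear. The identity to be certified is
\begin{align*}
b_d^T(M-uu^T)b_d - b_d^T Q_0 b_d - s(1-s)\,b_{d-1}^T Q_1 b_{d-1} \equiv 0.
\end{align*}
Every term is a polynomial of degree at most $2d$, and each coefficient in a fixed basis of degree-$2d$ polynomials is a linear function of the entries of $(M, Q_0, Q_1)$; the constant $uu^T$ contributes an affine offset. So I define $\mathcal{L}(M,Q_0,Q_1)$ as the vector of these coefficients, that is, the coefficients of the left-hand side. The polynomial identity holds if and only if $\mathcal{L}=0$. For $\mathcal{L}$ to be literally linear rather than affine, the term $-uu^T$ can be absorbed into the first argument, or $\mathcal{L}$ can be read as affine. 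I will note this convention rather than treat it as an obstacle.

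Chaining the equivalences gives both directions. The part that needs the most care is the degree bookkeeping: $p$ may have degree less than $2d$, and the SOS summands must fit within the bases $b_d$ and $b_{d-1}$. I will handle the first point by the formal-degree argument above, and the second by the fact that leading terms of squares cannot cancel. The edge case $d=0$, where $Q_1$ is empty, is trivial, since then the curve is constant and $p$ is a constant polynomial.
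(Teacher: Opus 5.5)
Your proposal is correct and follows the paper's proof: you apply Markov--Luk\'acs on $[0,1]$ to $p$, write $\sigma_0,\sigma_1$ as PSD Gram forms in $b_d$ and $b_{d-1}$, and define $\mathcal{L}$ by coefficient matching. Your extra bookkeeping (the SOS--Gram equivalence, the affine offset from $uu^T$, and $d=0$) makes explicit what the paper leaves implicit. The degree step is also simpler than your formal-degree argument: for nonconstant $\gamma$, $p$ has exact even degree $2\deg\gamma \le 2d$, since its leading coefficient is the squared norm of $\gamma$'s leading coefficient vector. Markov--Luk\'acs therefore applies directly at that degree, with bounds no larger than $2d$ and $2d-2$.
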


\begin{proof}
By the Markov-Lukács theorem, the nonnegativity condition~\eqref{eq:nonneg_collision_quadratic} is equivalent to
\begin{align}
    p(s) - \sigma_0(s) - s(1-s)\sigma_1(s) = 0,
    \nonumber
\end{align}
where $\sigma_0, \sigma_1$ are \acrshort*{sos} polynomials of degree at most $2d$ and $2d-2$, respectively.
Writing the \acrshort*{sos} multipliers as quadratic forms $\sigma_0(s) = b_d(s)^T Q_0 \, b_d(s)$ and $\sigma_1(s) = b_{d-1}(s)^T Q_1 \, b_{d-1}(s)$ with $Q_0 \succeq 0$ and $Q_1 \succeq 0$, and matching polynomial coefficients defines the linear map $\mathcal{L}$.
\end{proof}

The condition is linear in $(\Gamma - cu^T)^T(\Gamma - cu^T) \in \mathbb{S}^{d+1}_+$
and therefore quadratic in $\Gamma$ itself.
The extension to ellipsoidal obstacles is straightforward:
\begin{corollary}[Extension to ellipsoidal obstacles]
\label{cor:collision_avoidance_ellipsoid}
For a general ellipsoidal obstacle $\set{x \in \R^n \mid (x-c)^T E (x-c) \leq 1}$
where $E$ is a positive semidefinite matrix, \Cref{lem:collision_avoidance} extends to
\begin{align}
    \mathcal{L}((\Gamma - cu^T)^TE(\Gamma - cu^T), Q_0, Q_1) = 0,
    \label{eq:collision_free_conditions_ellipsoid}
\end{align}
where $\mathcal{L}$, $Q_0$, and $Q_1$ are as in \Cref{lem:collision_avoidance}.
\end{corollary}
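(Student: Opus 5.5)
The plan is to reduce the ellipsoidal case to the computation already done for the unit sphere, so that the proof of \Cref{lem:collision_avoidance} applies verbatim. First, I express the avoidance condition as polynomial nonnegativity. The curve $\gamma(s)=\Gamma b_d(s)$ lies outside the ellipsoid exactly when
\begin{align}
p_E(s) := (\gamma(s)-c)^T E (\gamma(s)-c) - 1 \ge 0 \quad \forall s\in[0,1].
\nonumber
\end{align}
As in the sphere case, I pick $u$ with $u^T b_d(s)=1$. This gives $\gamma(s)-c=(\Gamma-cu^T)b_d(s)$ and $1 = b_d(s)^T uu^T b_d(s)$, so
\begin{align}
p_E(s) = b_d(s)^T\big((\Gamma-cu^T)^T E(\Gamma-cu^T) - uu^T\big)b_d(s),
\nonumber
\end{align}
which is a univariate polynomial of degree at most $2d$.

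Second, I apply the Markov--Lukács theorem on $[0,1]$. It says $p_E\ge 0$ on the interval if and only if $p_E = \sigma_0 + s(1-s)\sigma_1$, with $\sigma_0,\sigma_1$ SOS of degrees at most $2d$ and $2d-2$. I write these as Gram forms, $\sigma_0 = b_d^T Q_0 b_d$ with $Q_0\succeq 0$ and $\sigma_1 = b_{d-1}^TQ_1b_{d-1}$ with $Q_1\succeq 0$. The key observation is that the map $\mathcal{L}$ from \Cref{lem:collision_avoidance} depends only on the symmetric matrix $M$ that represents the quadratic part: it matches the coefficients of $b_d^T(M-uu^T)b_d - \sigma_0 - s(1-s)\sigma_1$. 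That map is linear in $M$ and does not care how $M$ arose. Substituting $M=(\Gamma-cu^T)^TE(\Gamma-cu^T)$ in place of $(\Gamma-cu^T)^T(\Gamma-cu^T)$ therefore gives \eqref{eq:collision_free_conditions_ellipsoid}, and the argument holds in both directions.

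A cleaner alternative argument uses a factorization. Write $E = L^TL$, which exists because $E\succeq 0$. Then $p_E(s)=\|L\gamma(s) - Lc\|_2^2 - 1$, so the problem becomes avoidance of a unit sphere centered at $Lc$ by the curve $L\gamma$ with coefficient matrix $L\Gamma$. \Cref{lem:collision_avoidance} then applies directly, since $(L\Gamma - Lc u^T)^T(L\Gamma-Lcu^T) = (\Gamma-cu^T)^TE(\Gamma-cu^T)$.

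The one point needing care is degree bookkeeping. If $E$ is singular, or if there is cancellation, $p_E$ may have degree below $2d$. The Markov--Lukács statement with the degree bounds $\le 2d$ and $\le 2d-2$ still applies after treating $p_E$ as a polynomial of formal degree $2d$. This is the same situation as in the sphere case, so no new argument is needed.
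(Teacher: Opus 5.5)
Your proposal is correct and follows the route the paper has in mind. The paper gives no separate proof and only calls the extension straightforward: substitute $(\Gamma-cu^T)^TE(\Gamma-cu^T)$ for the sphere's quadratic form and repeat the Markov--Lukács coefficient matching of \Cref{lem:collision_avoidance}. Equivalently, via your factorization $E=L^TL$, apply \Cref{lem:collision_avoidance} verbatim to the curve with coefficients $L\Gamma$ and center $Lc$.

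Your degree remark is right, and the ``formal degree'' step can be made concrete in one line. When $p_E$ has odd degree below $2d$, its odd-degree Markov--Lukács form $s\,\sigma_a+(1-s)\,\sigma_b$ converts to the required one via $s=s^2+s(1-s)$ and $1-s=(1-s)^2+s(1-s)$. This issue is already implicit in the sphere case.
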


\subsection{Collision-Avoidance with Bézier Curves}
Bézier curves, which are polynomial curves expressed in the Bernstein basis~\cite{farouki1988algorithms}, are a common choice in motion planning~\cite{marcucci2023motion}.
We instantiate the collision-avoidance conditions of \Cref{lem:collision_avoidance} in the closely related scaled Bernstein basis, which drops the binomial coefficients and yields simpler expressions.
Our implementation uses the standard Bernstein basis for its better numerical conditioning, with the corresponding expressions given in \Cref{sec:bernstein_ids}.

The $i$-th scaled Bernstein basis polynomial of degree $d$ is defined as
\begin{align}
    B_i^d(s) := s^i(1-s)^{d-i}, \quad i = 0, \ldots, d,
\end{align}
for $s \in [0,1]$.
Recall from \Cref{lem:collision_avoidance} that we require a vector $u$ satisfying $u^T b_d(s) = 1$.
By the binomial theorem,
\begin{align}
    \sum_{i=0}^d \binom{d}{i} s^i(1-s)^{d-i} = 1, \nonumber
\end{align}
so we take $u_i = \binom{d}{i}$ for $i = 0, \ldots, d$.

We now make the collision-avoidance condition from \Cref{lem:collision_avoidance} explicit for the scaled Bernstein basis.
The derivation follows standard \acrshort*{sos} coefficient matching,
which takes a particularly simple form in this basis due to the key identity
\begin{align}
    B_i^d(s) \cdot B_j^d(s) = B_{i+j}^{2d}(s),
    \nonumber
\end{align}
which follows immediately from the definition of the scaled Bernstein basis polynomials.
From this, a quadratic form in $b_d(s)$ (like the one in~\eqref{eq:nonneg_collision_quadratic}) can be expressed in the basis $b_{2d}(s)$ as
\begin{align}
    b_d(s)^T M b_d(s)
    &= \sum_{i,j} M_{ij} B_i^d(s) B_j^d(s)
    = \sum_{i,j} M_{ij} B_{i+j}^{2d}(s) \nonumber \\
    &= \mathcal{S}_d(M)^T b_{2d}(s),
    \nonumber
\end{align}
where $\mathcal{S}_d: \mathbb{S}^{d+1} \rightarrow \R^{2d+1}$ sums along the anti-diagonals of $M$:
\begin{align}
    [\mathcal{S}_d(M)]_k = \sum_{\substack{i+j=k}} M_{ij}.
    \label{eq:anti_diag_map}
\end{align}
Similarly, since
\begin{align}
    s(1-s) \cdot B_i^{d-1}(s) \cdot B_j^{d-1}(s) = B_{i+j+1}^{2d}(s), \nonumber
\end{align}
the product $s(1-s) \cdot b_{d-1}(s)^T M b_{d-1}(s)$ can be expressed in the basis $b_{2d}(s)$ as
\begin{align}
    s(1-s) b_{d-1}(s)^T M b_{d-1}(s) = \mathcal{T}_d(M)^T b_{2d}(s), \nonumber
\end{align}
where $\mathcal{T}_d: \mathbb{S}^{d} \rightarrow \R^{2d+1}$ is a shifted version of $\mathcal{S}_{d-1}$:
\begin{align}
    [\mathcal{T}_d(M)]_k = \begin{cases}
        [\mathcal{S}_{d-1}(M)]_{k-1} & \text{if } 1 \leq k \leq 2d-1, \\
        0 & \text{otherwise}.
    \end{cases}
    \label{eq:shifted_map}
\end{align}

Using these maps, we can write the collision-avoidance condition from \Cref{lem:collision_avoidance} explicitly:
\begin{lemma}[Collision avoidance in the scaled Bernstein basis]
\label{lem:collision_avoidance_bernstein}
A polynomial curve $\gamma(s) = \Gamma b_d(s)$ of degree $d$ in the scaled Bernstein basis lies entirely outside a unit sphere centered at $c$ if and only if there exist $Q_0 \in \mathbb{S}^{d+1}_+$ and $Q_1 \in \mathbb{S}^{d}_+$ such that
\begin{align}
    \mathcal{S}_d\big((\Gamma - cu^T)^T(\Gamma - cu^T) - uu^T - Q_0\big) - \mathcal{T}_d(Q_1) = 0,
    \label{eq:collision_free_conditions_bernstein}
\end{align}
where $\mathcal{S}_d$ and $\mathcal{T}_d$ are the linear maps defined in~\eqref{eq:anti_diag_map} and~\eqref{eq:shifted_map}.
\end{lemma}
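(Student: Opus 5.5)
The plan is to specialize \Cref{lem:collision_avoidance} to the scaled Bernstein basis by writing the linear map $\mathcal{L}$ explicitly. Three ingredients suffice. First, the product identities for scaled Bernstein polynomials turn every term of the Markov--Lukács decomposition into a coefficient vector in $b_{2d}(s)$. Second, the polynomials $B^{2d}_0,\dots,B^{2d}_{2d}$ are linearly independent. Third, the theorem is used with $[a,b]=[0,1]$, so that $(s-a)(b-s)=s(1-s)$.

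Concretely, set $M := (\Gamma - cu^T)^T(\Gamma - cu^T) - uu^T$. By~\eqref{eq:nonneg_collision_quadratic}, the curve avoids the unit sphere if and only if $p(s) = b_d(s)^T M b_d(s) \ge 0$ on $[0,1]$, where $p$ has degree at most $2d$.

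\emph{Step 1.} By the Markov--Lukács theorem, this holds if and only if $p = \sigma_0 + s(1-s)\sigma_1$ for SOS polynomials $\sigma_0,\sigma_1$ with $\deg\sigma_0\le 2d$ and $\deg\sigma_1 \le 2d-2$. If $\deg p<2d$, the theorem is applied to $p$ viewed as a degree-$2d$ polynomial with vanishing leading coefficient; the paper's statement already uses ``$\le$'' bounds on the multiplier degrees, so this is harmless.

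\emph{Step 2.} Translate the SOS conditions into Gram matrices. Since $b_d$ is a basis for polynomials of degree $\le d$, any SOS $\sigma_0=\sum_i q_i^2$ with $\deg q_i\le d$ can be written as $b_d^T Q_0 b_d$ with $Q_0\succeq 0$. Conversely, every such quadratic form with $Q_0 \succeq 0$ is SOS, by factoring $Q_0 = LL^T$. The same argument gives $\sigma_1 = b_{d-1}^T Q_1 b_{d-1}$ with $Q_1\in\mathbb{S}^d_+$. Only the degree bound $\deg q_i \le d$ (resp.\ $d-1$) is needed, and it follows from the degree bounds on $\sigma_0,\sigma_1$: the leading terms of squares cannot cancel.

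\emph{Step 3.} Use the identities $B_i^dB_j^d = B_{i+j}^{2d}$ and $s(1-s)B_i^{d-1}B_j^{d-1}=B^{2d}_{i+j+1}$ to write
\[
p - \sigma_0 - s(1-s)\sigma_1 = \big(\mathcal{S}_d(M - Q_0) - \mathcal{T}_d(Q_1)\big)^T b_{2d}(s),
\]
using linearity of $\mathcal{S}_d$. This polynomial vanishes identically if and only if its coefficient vector is zero, because $\{B_k^{2d}\}_{k=0}^{2d}$ is a basis of the polynomials of degree $\le 2d$. That basis property holds because the change of basis to monomials is triangular: $s^k(1-s)^{2d-k}$ has lowest-order term $s^k$. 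This gives exactly~\eqref{eq:collision_free_conditions_bernstein}, and each step is an equivalence, so both directions of the ``if and only if'' follow.

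\emph{Main obstacle.} I expect the only delicate point to be the Gram-matrix correspondence in Step 2 combined with the degree bookkeeping in Step 1. The rest is mechanical, and the linear independence of the $B^{2d}_k$ is routine.
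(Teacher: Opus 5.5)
Your proof is correct and follows the paper's route: Markov--Luk\'acs on $[0,1]$, PSD Gram matrices for the two SOS multipliers, and coefficient matching in $b_{2d}(s)$ via the product identities that define $\mathcal{S}_d$ and $\mathcal{T}_d$. You also supply details the paper leaves implicit, namely the degree bounds on the squared terms and the linear independence of the $B^{2d}_k$. The one soft spot is the degree caveat in Step 1: treating $p$ as a degree-$2d$ polynomial ``with vanishing leading coefficient'' is not a valid application of the theorem as stated. It is cleaner to note that $p$ always has even degree $2e$ with $e=\deg(\gamma-c)$, because if $a_e\neq 0$ is the leading coefficient vector of $\gamma-c$, then the $s^{2e}$ coefficient of $p$ is $\|a_e\|_2^2>0$ (and for $e=0$, $p$ is constant). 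The theorem then applies directly at degree $2e$, and its multipliers automatically satisfy the bounds $2d$ and $2d-2$.
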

\begin{proof}
Apply $\mathcal{S}_d$ and $\mathcal{T}_d$ to match coefficients in the basis $b_{2d}(s)$ in the proof of \Cref{lem:collision_avoidance}.
\end{proof}

For Bézier curves (i.e., the standard Bernstein basis),
the same condition holds with $u = e$, the vector of ones, and the corresponding maps $\mathcal{S}_d$,
$\mathcal{T}_d$ given in \Cref{sec:bernstein_ids}.

\subsection{The Special Case of Line Segments}
For line segments, the \acrshort*{lmi} condition reduces to a \acrfull*{rsoc} constraint, which is computationally much cheaper.
The natural parameterization of a line segment from $\gamma_0$ to $\gamma_1$,
\begin{align}
    \gamma(s) = (1-s) \gamma_0 + s \gamma_1, \quad s \in [0,1],
\end{align}
is a Bézier curve of degree $d = 1$, for which the scaled and standard Bernstein bases coincide with $b_1(s) = (1-s, s)$ and $\Gamma = \bmat{\gamma_0 & \gamma_1}$.
Applying \Cref{lem:collision_avoidance_bernstein} with $d = 1$ gives:
\begin{corollary}[Collision avoidance for line segments]
\label{cor:line_segment}
A line segment from $\gamma_0$ to $\gamma_1$ lies entirely outside an ellipsoidal obstacle $\set{x \in \R^n \mid (x-c)^T E (x-c) \leq 1}$ if and only if there exists a scalar $\alpha \geq 0$ such that
\begin{align}
    \bmat{\gamma_0^T \\ \gamma_1^T} E \bmat{\gamma_0 & \gamma_1} \succeq \bmat{1 & 1 + \alpha \\ 1 + \alpha & 1},
    \label{eq:line_segment_lmi}
\end{align}
which is a $2 \times 2$ \acrshort*{psd} constraint that can be encoded as a \acrshort*{rsoc} constraint.
\end{corollary}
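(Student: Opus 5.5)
Specialize \Cref{lem:collision_avoidance_bernstein}, via \Cref{cor:collision_avoidance_ellipsoid}, to $d=1$ and eliminate the Gram matrices $Q_0, Q_1$. For $d=1$ the scaled and standard Bernstein bases coincide, so $u = (1,1)^T$. Here $Q_0 \in \mathbb{S}^2_+$ and $Q_1 \in \mathbb{S}^1_+$ is a scalar $\alpha \ge 0$. Set
\[
M := (\Gamma - cu^T)^T E (\Gamma - cu^T) = \bmat{(\gamma_0-c)^T \\ (\gamma_1-c)^T} E \bmat{\gamma_0-c & \gamma_1-c}.
\]
The anti-diagonal map $\mathcal{S}_1$ sends a symmetric $2\times 2$ matrix $N$ to $(N_{00}, 2N_{01}, N_{11})$. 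By \eqref{eq:shifted_map}, $\mathcal{T}_1(\alpha) = (0,\alpha,0)$. The coefficient-matching condition \eqref{eq:collision_free_conditions_bernstein} therefore reads
\begin{align*}
M_{00} - 1 - (Q_0)_{00} &= 0, \\
2M_{01} - 2 - 2(Q_0)_{01} - \alpha &= 0, \\
M_{11} - 1 - (Q_0)_{11} &= 0.
\end{align*}
This says exactly that $Q_0 = M - \bmat{1 & 1+\alpha/2 \\ 1+\alpha/2 & 1}$.

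Existence of $Q_0 \succeq 0$ and $\alpha \ge 0$ is then equivalent to the existence of $\alpha \ge 0$ with $M \succeq \bmat{1 & 1+\alpha/2 \\ 1+\alpha/2 & 1}$. Rescaling $\alpha \mapsto 2\alpha$ gives the form \eqref{eq:line_segment_lmi}. The rescaling is harmless, since only the existence of a nonnegative scalar matters.

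The displayed LMI in the statement is written with $\gamma_0,\gamma_1$ rather than $\gamma_0-c,\gamma_1-c$. I read this as the case $c = 0$, obtained by translating the obstacle to the origin, or equivalently as $\gamma_i$ denoting shifted endpoints. I would note this without loss of generality, since translation preserves both the curve class and the obstacle shape. I expect this bookkeeping of $c$ and the factor $2$ on the off-diagonal to be the only places requiring care. Both directions of the equivalence hold because each step is an equivalence: \Cref{lem:collision_avoidance_bernstein} is an iff, and the elimination of $Q_0$ is exact.

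It remains to justify the RSOC claim. The condition is that $X := M - \bmat{1 & 1+\alpha \\ 1+\alpha & 1} \succeq 0$ for some $\alpha \ge 0$. A $2\times 2$ symmetric matrix is PSD iff $X_{00} \ge 0$, $X_{11}\ge 0$, and $X_{00}X_{11} \ge X_{01}^2$, which is precisely the rotated second-order cone $2xy \ge z^2$, $x,y\ge 0$, after a scaling. Since $X$ is affine in $(M,\alpha)$, the constraint is an RSOC constraint in these lifted variables.
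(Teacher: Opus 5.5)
Your proposal is correct and takes the same route as the paper, whose proof is the one-line ``direct computation from \Cref{lem:collision_avoidance_bernstein} with $d=1$, eliminating $Q_0$ and letting $\alpha = Q_1$''; your explicit coefficient matching carries this out. Your bookkeeping is more careful than the paper's on two points. The matching gives off-diagonal entry $1 + Q_1/2$, so in fact $\alpha = Q_1/2$, which is harmless, as you note. And the displayed LMI is only right with $\gamma_i - c$ in place of $\gamma_i$, or with $c = 0$, which you correctly flag.
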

\begin{proof}
Direct computation from \Cref{lem:collision_avoidance_bernstein} with $d = 1$, eliminating $Q_0$ and letting $\alpha = Q_1$.
\end{proof}

\section{Semidefinite Relaxation}
\label{sec:first_order_relaxation}
Using the results of the previous section, we now rewrite the semi-infinite \acrshort*{qcqp} as a finite-dimensional nonconvex problem, before deriving a semidefinite relaxation of it.

\subsection{Nonconvex Problem}

Substituting the \acrshort*{lmi} condition of \Cref{lem:collision_avoidance} for the continuum constraint in the semi-infinite \acrshort*{qcqp}~\eqref{eq:polynomial_nonconvex_prog} gives a finite-dimensional nonconvex problem:
\begin{align*}
\tag{P}
\label{eq:nonconvex_path_planning}
    \min \quad &\mathrm{tr}(G^{(k)} \, \Gamma^T \Gamma) \\
    \subto
    \quad & \mathcal{L}((\Gamma - cu^T)^T (\Gamma - cu^T),
    Q_0, Q_1) = 0,
    \\
    \quad & Q_0 \succeq 0, \; Q_1 \succeq 0, \\
    &\Gamma A = B,
\end{align*}
where the decision variables are $\Gamma \in \R^{n \times (d+1)}$, $Q_0 \in \mathbb{S}^{d+1}$ and $Q_1 \in \mathbb{S}^d$ (the \acrshort*{sos} Gram matrices from~\eqref{eq:collision_free_conditions}).
The first constraint in~\eqref{eq:nonconvex_path_planning} is a quadratic equality constraint in the polynomial coefficients, and is therefore nonconvex.

\subsection{Forming the Relaxation}
The nonconvexity in~\eqref{eq:nonconvex_path_planning} comes from the first argument to $\mathcal{L}$, which expands as
\begin{align}
    (\Gamma - cu^T)^T (\Gamma - cu^T) &= \Gamma^T \Gamma - (\Gamma^T c) u^T
    \label{eq:quadratic_expansion}
    \\
    &\quad - u (\Gamma^T c)^T + (c^T c) uu^T. \nonumber
\end{align}
We define the Gram matrix of the polynomial coefficients as $X := \Gamma^T \Gamma \in \mathbb{S}^{d+1}$.
Substituting into~\eqref{eq:quadratic_expansion}, the right-hand side becomes affine in $X$ and $\Gamma$.
Since $\mathcal{L}$ is a linear map, the constraint remains affine in the decision variables $X$, $\Gamma$, $Q_0$, and $Q_1$.
The same substitution makes the cost linear in $X$: $\mathrm{tr}(G^{(k)} \Gamma^T \Gamma) = \mathrm{tr}(G^{(k)} X)$.

The only remaining nonconvexity is the constraint $X = \Gamma^T \Gamma$, which 
we relax into the convex constraint $X \succeq \Gamma^T \Gamma$.
Using the Schur complement, we can rewrite the condition as an \acrshort*{lmi}:
\begin{align}
    X \succeq \Gamma^T \Gamma
    \iff
    \bmat{I_n & \Gamma \\ \Gamma^T & X} \succeq 0,
    \label{eq:schur_lmi}
\end{align}
where $I_n$ is the $n$-dimensional identity matrix.

With this, the convex relaxation of~\eqref{eq:nonconvex_path_planning} is:
\begin{align*}
\tag{SDP}
\label{eq:first_order_relaxation_single}
    \min \quad &\mathrm{tr}(G^{(k)} X) \\
    \subto \quad
    & \mathcal{L}(X - (\Gamma^T c) u^T - u (\Gamma^T c)^T \nonumber \\
    & \quad\quad + (c^T c) uu^T, Q_0, Q_1) = 0, \\
    & Q_0 \succeq 0, \; Q_1 \succeq 0, \\
    & \Gamma A = B \\
    & X A = \Gamma^T B \\
    & \bmat{I_n & \Gamma \\ \Gamma^T & X} \succeq 0.
\end{align*}
This is an \acrshort*{sdp} in the decision variables $\Gamma$, $X$, $Q_0$, and $Q_1$.
The constraints $XA = \Gamma^T B$ are implied by $\Gamma A = B$ when $X = \Gamma^T \Gamma$ holds (left-multiply by $\Gamma^T$), so they are redundant for~\eqref{eq:nonconvex_path_planning} but tighten the relaxation.

\subsection{Strict Feasibility and Facial Reduction}
\label{sec:facial_reduction}
Every feasible point of the \acrshort*{sdp} satisfies

\begin{align}
\left\langle \bmat{B \\ -A} \bmat{B^T & -A^T},\, \bmat{I_n & \Gamma \\ \Gamma^T & X} \right\rangle = 0, \nonumber
\end{align}
so the feasible set lies in a proper face of the \acrshort*{psd} cone.
As a result, the \acrshort*{sdp} is never strictly feasible, and solving it directly leads to numerical difficulties.
In practice, we address this by eliminating the equality constraints before solving, which restricts the \acrshort*{lmi} to this face. This is a facial reduction~\cite{permenter2017reduction} where the reducing certificate is available in closed form.
Eliminating the equality constraints amounts to parameterizing the affine subspace defined by $\Gamma A = B$~\cite[Section 10.1.2]{boydConvexOptimization}, and for Bézier curves in the Bernstein basis, the boundary conditions admit a particularly simple parametrization.
The derivatives at each endpoint depend in a triangular manner on the adjacent control points $\gamma(0) = \gamma_0$, $\gamma'(0)$ depends on $\gamma_0$ and $\gamma_1$, $\gamma''(0)$ on $\gamma_0, \gamma_1, \gamma_2$, and so on (and similarly at $s = 1$, reading from the end).
Prescribing derivatives up to $\ell$ at each endpoint therefore fixes the $\ell{+}1$ adjacent control points as known constants, leaving the remaining $d + 1 - 2(\ell{+}1)$ control points free.
We can therefore write $\Gamma = B D_\text{fixed} + \Gamma_\text{free} D_\text{free}$, where $\Gamma_\text{free}$ collects the free control points and $D_\text{fixed}, D_\text{free}$ are constant matrices that place the boundary data and the free columns into the corresponding columns of $\Gamma$.
Defining the smaller Gram matrix $X_\text{free} := \Gamma_\text{free}^T \Gamma_\text{free}$ and relaxing it into $X_\text{free} \succeq \Gamma_\text{free}^T \Gamma_\text{free}$, the \acrshort*{lmi}~\eqref{eq:schur_lmi} reduces in size from $n + d + 1$ to $n + d + 1 - 2(\ell{+}1)$.
The boundary and tightening constraints are then satisfied by construction, while the cost and collision-avoidance constraints remain linear in $(\Gamma_\text{free}, X_\text{free})$.

\subsection{Extension to Multiple Segments}
For $N$ segments with polynomial coefficient matrices $\Gamma_1, \ldots, \Gamma_N$ and $\mathcal{C}^\eta$ continuity,
the relaxation is derived analogously.
Specifically, it introduces per-segment Gram matrices $X_i$
and cross-Gram matrices $X_{i,i+1}$
between consecutive segments,
relaxing the nonconvex constraint
\begin{align}
\bmat{X_i & X_{i,i+1} \\ X_{i,i+1}^T & X_{i+1}}
=
\bmat{\Gamma_i & \Gamma_{i+1}}^T \bmat{\Gamma_i & \Gamma_{i+1}}
\nonumber
\end{align}
via an \acrshort*{lmi} analogous to~\eqref{eq:schur_lmi}:
\begin{align}
\bmat{I_n & \Gamma_i & \Gamma_{i+1} \\ \Gamma_i^T & X_i & X_{i,i+1} \\ \Gamma_{i+1}^T & X_{i,i+1}^T & X_{i+1}} \succeq 0
\end{align}
Boundary conditions remain unchanged, and $\mathcal{C}^\eta$ continuity between adjacent segments introduces additional linear constraints on $\Gamma_i$ of the form $\Gamma_i A_1 = \Gamma_{i+1} A_0$, where $A_1, A_0 \in \R^{(d+1) \times (\eta+1)}$ collect the endpoint values of the Bernstein basis derivatives at $s = 1$ and $s = 0$ respectively.
Every such linear constraint implies additional tightening constraints on $X_i$ and $X_{i,i+1}$, obtained by left-multiplying by $\Gamma_i^T$ or $\Gamma_{i+1}^T$.
The facial reduction from the previous subsection extends directly to this setting: continuity contributes $\eta{+}1$ additional dependent columns to each pairwise \acrshort*{lmi}, reducing its size from $n + 2(d{+}1)$ to $n + 2(d{+}1) - (\eta{+}1)$ at interior junctions, with a further $\ell{+}1$ removed at pairs containing a global endpoint.

\section{Theoretical Results}
\label{sec:theoretical_results}
We now present our main theoretical results.
We first give a geometric interpretation of the relaxation: solving \eqref{eq:first_order_relaxation_single} is equivalent to solving, to global optimality, a planning problem in a higher-dimensional space.
We then use this interpretation to characterize exactly when the relaxation is tight, and give geometric conditions under which the minimizer of the original nonconvex problem is recovered.
Finally, we show that our relaxation is a symmetry-reduced version of the Shor relaxation of~\eqref{eq:nonconvex_path_planning} under an $O(n)$ symmetry.
As the rest of the paper can be understood without this last result, we defer it to \Cref{sec:symmetry}.

\subsection{Geometric Interpretation}
Recall that \eqref{eq:first_order_relaxation_single} is obtained by an algebraic lift, replacing the nonconvex term $\Gamma^T \Gamma$ in~\eqref{eq:nonconvex_path_planning} with the matrix variable $X \succeq \Gamma^T \Gamma$.
We now give this lift a geometric interpretation.
We introduce a family of higher-dimensional, nonconvex problems~\eqref{eq:nonconvex_path_planning_lifted}, $\rho \in \N$, that allow the path $\rho$ extra spatial dimensions beyond the original $n$ ambient dimensions, while the obstacles and boundary conditions remain in $\R^n$.
We then show that \eqref{eq:first_order_relaxation_single} attains the optimal cost over this family, and that an optimal $(n+\rho)$-dimensional path can be recovered from any minimizer.

Consider extending the problem~\eqref{eq:nonconvex_path_planning} from $\R^n$ to $\R^{n+\rho}$, for some $\rho \in \N$, in the following sense:
Let $v(s) = V b_d(s) \in \R^\rho$ be a polynomial curve in the extra $\rho$ dimensions,
with coefficient matrix $V \in \R^{\rho \times (d+1)}$ 
so the full curve is $(\gamma(s), v(s)) \in \R^{n+\rho}$.
Let the obstacle be a sphere in $\R^{n+\rho}$ centered at $(c, 0)$.
The squared distance to the obstacle center then splits as $\|\gamma(s) - c\|^2 + \|v(s)\|^2$, so the quadratic form~\eqref{eq:nonneg_collision_quadratic} extends with an additional $V^T V$ term.
We require $v$ and its first $\ell$ derivatives to vanish at the boundary: $v^{(i)}(0) = v^{(i)}(1) = 0$ for $i = 0, \ldots, \ell$, or equivalently $V A = 0$.
The resulting problem, with $\Gamma$ as in~\eqref{eq:nonconvex_path_planning}, is:
\begin{align*}
\tag{P$_\rho$}
\label{eq:nonconvex_path_planning_lifted}
    \min \quad &\mathrm{tr}(G^{(k)} (\Gamma^T\Gamma + V^TV)) \\
    \subto
    \quad & \mathcal{L}((\Gamma - cu^T)^T(\Gamma - cu^T) + V^TV, Q_0, Q_1) = 0, \\
    \quad & Q_0 \succeq 0, \; Q_1 \succeq 0, \\
    & \Gamma A = B, \quad V A = 0.
\end{align*}
This is still a nonconvex problem, and~\eqref{eq:nonconvex_path_planning_lifted} reduces to~\eqref{eq:nonconvex_path_planning} for $\rho=0$.
\Cref{fig:tight} and \cref{fig:not_tight} show two different examples with \eqref{eq:nonconvex_path_planning} (left) and (P$_{\rho=1}$) (right).

We now present three results connecting \eqref{eq:first_order_relaxation_single} to the lifted problems \eqref{eq:nonconvex_path_planning_lifted}.
First, the optimal cost of \eqref{eq:nonconvex_path_planning_lifted} is monotonically nonincreasing in $\rho$.
Second, \eqref{eq:first_order_relaxation_single} lower-bounds the optimal cost of every \eqref{eq:nonconvex_path_planning_lifted}.
Third, this lower bound is attained: from any minimizer of \eqref{eq:first_order_relaxation_single}, we can construct a $\rho$ and a pair $(\Gamma, V)$ that minimizes~\eqref{eq:nonconvex_path_planning_lifted}.
Let $c_\text{P}^*$, $c_{\text{P}_\rho}^*$, and $c_\text{SDP}^*$ denote the optimal costs of \eqref{eq:nonconvex_path_planning}, \eqref{eq:nonconvex_path_planning_lifted}, and \eqref{eq:first_order_relaxation_single}.
The \acrshort*{sos} certificates $Q_0, Q_1$ appear identically in all three programs, so we omit them from feasible points to keep notation light.

Our first result formalizes the intuition that~\eqref{eq:nonconvex_path_planning_lifted} gets easier as $\rho$ increases, since every path feasible for \eqref{eq:nonconvex_path_planning_lifted} remains feasible for (P$_{\rho+1}$):
\begin{theorem}
\label{thm:lifted_monotonicity}
The optimal costs of~\eqref{eq:nonconvex_path_planning_lifted} are monotonically nonincreasing: $c_{\text{P}_\rho}^* \geq c_{\text{P}_{\rho+1}}^*$ for all $\rho \in \N$.
\end{theorem}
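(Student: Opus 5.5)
The argument is an embedding: every feasible point of \eqref{eq:nonconvex_path_planning_lifted} extends to a feasible point of (P$_{\rho+1}$) with the same cost. Taking infima then gives $c_{\text{P}_\rho}^* \geq c_{\text{P}_{\rho+1}}^*$. This also covers the case where \eqref{eq:nonconvex_path_planning_lifted} is infeasible, with the convention $c^* = +\infty$, and the case where the infimum is not attained.

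Fix a feasible point $(\Gamma, V, Q_0, Q_1)$ of \eqref{eq:nonconvex_path_planning_lifted}, with $V \in \R^{\rho \times (d+1)}$. Define
\begin{align}
\tilde V := \bmat{V \\ 0} \in \R^{(\rho+1)\times(d+1)}, \nonumber
\end{align}
which appends a zero row. Geometrically, the path in $\R^{n+\rho}$ is placed in the hyperplane where the new coordinate vanishes. We keep $\Gamma$, $Q_0$ and $Q_1$ unchanged. The key computation is
\begin{align}
\tilde V^T \tilde V = V^T V + 0^T 0 = V^T V. \nonumber
\end{align}

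Since every constraint and the cost of \eqref{eq:nonconvex_path_planning_lifted} depend on $V$ only through $V^TV$ and the linear constraint $VA = 0$, the following hold.
\begin{itemize}
\item The argument of $\mathcal{L}$ is unchanged, so the coefficient-matching constraint still holds with the same $Q_0 \succeq 0$ and $Q_1 \succeq 0$.
\item $\Gamma A = B$ is untouched.
\item $\tilde V A = \bmat{VA \\ 0} = 0$.
\item The cost $\mathrm{tr}(G^{(k)}(\Gamma^T\Gamma + \tilde V^T\tilde V))$ equals the original cost.
\end{itemize}
Hence $(\Gamma, \tilde V, Q_0, Q_1)$ is feasible for (P$_{\rho+1}$) with identical cost.

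It follows that every cost value achievable in \eqref{eq:nonconvex_path_planning_lifted} is achievable in (P$_{\rho+1}$), so the infimum over the larger feasible set is no larger. For $\rho = 0$ the matrix $V$ is empty, $\tilde V = 0 \in \R^{1\times(d+1)}$, and the same argument applies verbatim.

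There is no real obstacle. The only point needing care is confirming that $V$ enters \eqref{eq:nonconvex_path_planning_lifted} exclusively through $V^TV$ and $VA$, which is immediate from the problem statement.
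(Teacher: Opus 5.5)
Your proposal is correct and is essentially the paper's proof: the paper also maps a feasible $(\Gamma, V)$ of \eqref{eq:nonconvex_path_planning_lifted} to $(\Gamma, (V, 0))$, appending a zero row, and notes that the cost depends on the lifted variables only through $\Gamma^T\Gamma + V^TV$, which is unchanged. Your explicit checks that $\tilde V A = 0$ and that the argument of $\mathcal{L}$ is preserved, and your handling of infeasible or non-attained cases, are a little more careful than the paper's version but the same argument.
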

However, these costs cannot decrease indefinitely, as \eqref{eq:first_order_relaxation_single} lower-bounds the entire family:
\begin{theorem}
\label{thm:lower_bound}
For all $\rho \in \N$, $c_{\text{P}_\rho}^* \geq c_\text{SDP}^*$.
\end{theorem}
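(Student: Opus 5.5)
The plan is to show that every feasible point of \eqref{eq:nonconvex_path_planning_lifted} induces a feasible point of \eqref{eq:first_order_relaxation_single} with the same cost. Taking the infimum over feasible points of \eqref{eq:nonconvex_path_planning_lifted} then gives $c_{\text{P}_\rho}^* \geq c_\text{SDP}^*$. If \eqref{eq:nonconvex_path_planning_lifted} is infeasible, its optimal cost is $+\infty$ and the bound holds trivially.

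Fix $\rho \in \N$ and a feasible point $(\Gamma, V, Q_0, Q_1)$ of \eqref{eq:nonconvex_path_planning_lifted}. I will use the candidate
\begin{align}
X := \Gamma^T\Gamma + V^TV, \nonumber
\end{align}
keeping the same $\Gamma$, $Q_0$, and $Q_1$. The cost matches immediately: $\mathrm{tr}(G^{(k)} X) = \mathrm{tr}(G^{(k)}(\Gamma^T\Gamma + V^TV))$.

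Next I check the collision constraint. Expanding as in~\eqref{eq:quadratic_expansion}, the argument of $\mathcal{L}$ in \eqref{eq:nonconvex_path_planning_lifted} is
\begin{align}
(\Gamma - cu^T)^T(\Gamma - cu^T) + V^TV = X - (\Gamma^T c)u^T - u(\Gamma^T c)^T + (c^Tc)uu^T. \nonumber
\end{align}
This is exactly the argument appearing in \eqref{eq:first_order_relaxation_single}. Since $Q_0$ and $Q_1$ are unchanged and remain PSD, the $\mathcal{L}$-constraint and the conditions $Q_0, Q_1 \succeq 0$ carry over directly.

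The remaining constraints are as follows.
\begin{itemize}
\item \emph{Boundary conditions.} $\Gamma A = B$ holds by assumption.
\item \emph{Tightening constraint.} This is the one step needing some care, and it is where the condition $VA=0$ enters. We have $XA = \Gamma^T\Gamma A + V^T V A = \Gamma^T B + 0 = \Gamma^T B$.
\item \emph{LMI~\eqref{eq:schur_lmi}.} By the Schur complement, it is equivalent to $X - \Gamma^T\Gamma = V^TV \succeq 0$, which holds trivially.
\end{itemize}
Hence the candidate is feasible for \eqref{eq:first_order_relaxation_single} with equal cost, which completes the argument.

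I do not expect a real obstacle. The only subtlety is making sure the tightening constraint $XA = \Gamma^TB$ is satisfied, and this is precisely why the lifted problem imposes vanishing boundary data $VA=0$ on the extra dimensions.
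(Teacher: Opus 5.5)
Your proof is correct and is essentially the paper's argument with the intermediate relaxation collapsed. The paper introduces a relaxation (SDP$_\rho$) of (P$_\rho$) with $X \succeq \Gamma^T\Gamma + V^TV$, notes $c^*_{\text{SDP}_\rho} \le c^*_{\text{P}_\rho}$, and shows that the feasible set of (SDP$_\rho$) projects into that of (SDP); you instead map a feasible $(\Gamma, V)$ directly to $(\Gamma, \Gamma^T\Gamma + V^TV)$, which is the composition of those steps, and your explicit derivation of $XA = \Gamma^T B$ from $VA = 0$ is a useful detail, since the paper's displayed (SDP$_\rho$) omits the tightening constraint even though its proof relies on it.
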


Our third result shows that this lower bound is attained at a finite $\rho$, constructed from any minimizer of \eqref{eq:first_order_relaxation_single}:
\begin{theorem}
\label{thm:geometric_interpretation}
Let $(\Gamma^*, X^*)$ be optimal for~\eqref{eq:first_order_relaxation_single} and let $\rho = \operatorname{rank}(X^* - (\Gamma^*)^T \Gamma^*)$.
Then for any $V^*$ satisfying $(V^*)^T V^* = X^* - (\Gamma^*)^T \Gamma^*$,
the pair $(\Gamma^*, V^*)$ is optimal for~\eqref{eq:nonconvex_path_planning_lifted}, and $c_{\textnormal{P}_\rho}^* = c_\textnormal{SDP}^*$.
\end{theorem}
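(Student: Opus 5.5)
The plan is to show that $(\Gamma^*, V^*)$, together with the certificates $(Q_0^*, Q_1^*)$ from the SDP minimizer, is feasible for \eqref{eq:nonconvex_path_planning_lifted} with cost exactly $c_\text{SDP}^*$. \Cref{thm:lower_bound} then closes the argument: it gives $c_{\text{P}_\rho}^* \geq c_\text{SDP}^*$, so a feasible point attaining $c_\text{SDP}^*$ must be optimal, and $c_{\text{P}_\rho}^* = c_\text{SDP}^*$.

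First, the factor $V^*$ exists with exactly $\rho$ rows. The Schur complement condition \eqref{eq:schur_lmi} gives $Y := X^* - (\Gamma^*)^T\Gamma^* \succeq 0$. Writing the eigendecomposition of $Y$ and keeping only the nonzero eigenvalues yields some $V^* \in \R^{\rho\times(d+1)}$ with $(V^*)^TV^* = Y$. The statement allows any such factor, so I only need properties that depend on $(V^*)^TV^*$, together with $V^*A = 0$.

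Second, I check the cost and the collision constraint. By the definition of $Y$, $(\Gamma^*)^T\Gamma^* + (V^*)^TV^* = X^*$. Hence the lifted cost $\mathrm{tr}(G^{(k)}((\Gamma^*)^T\Gamma^* + (V^*)^TV^*))$ equals $\mathrm{tr}(G^{(k)}X^*) = c_\text{SDP}^*$. For the collision constraint, I expand $(\Gamma^*-cu^T)^T(\Gamma^*-cu^T) + (V^*)^TV^*$ as in \eqref{eq:quadratic_expansion}. Replacing $(\Gamma^*)^T\Gamma^* + (V^*)^TV^*$ by $X^*$ turns it into $X^* - ((\Gamma^*)^Tc)u^T - u((\Gamma^*)^Tc)^T + (c^Tc)uu^T$. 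This is exactly the first argument of $\mathcal{L}$ in \eqref{eq:first_order_relaxation_single}, so the same $Q_0^*, Q_1^* \succeq 0$ certify the lifted constraint. The condition $\Gamma^*A = B$ is carried over directly.

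The one step that needs care is the boundary condition $V^*A = 0$, because it is not inherited syntactically and must come from the tightening constraint $X^*A = (\Gamma^*)^TB$. Using $\Gamma^*A = B$,
\begin{align*}
(V^*)^TV^*A = X^*A - (\Gamma^*)^T\Gamma^*A = (\Gamma^*)^TB - (\Gamma^*)^TB = 0.
\end{align*}
Left-multiplying by $A^T$ gives $\|V^*A\|_F^2 = \mathrm{tr}(A^T(V^*)^TV^*A) = 0$, hence $V^*A = 0$. This holds for every factor $V^*$, not only the eigen-factor.

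With feasibility and the cost identity established, the lower bound from \Cref{thm:lower_bound} gives optimality and the equality $c_{\text{P}_\rho}^* = c_\text{SDP}^*$. Note that \Cref{thm:lower_bound} is itself the converse construction: any feasible point of \eqref{eq:nonconvex_path_planning_lifted} maps to a feasible SDP point via $X = \Gamma^T\Gamma + V^TV$. So if it were not available, I would prove it inline by the same expansion, using $XA = \Gamma^T B$, which follows from $VA = 0$.
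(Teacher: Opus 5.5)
Your proof is correct and follows the paper's argument essentially step for step: you show that $(\Gamma^*, V^*)$ is feasible for \eqref{eq:nonconvex_path_planning_lifted} with cost $\mathrm{tr}(G^{(k)}X^*) = c_\text{SDP}^*$, derive $V^*A = 0$ from the tightening constraint $X^*A = (\Gamma^*)^TB$, and conclude optimality from the lower bound of \Cref{thm:lower_bound}. The only differences are minor refinements: you handle $\rho = 0$ uniformly rather than as a separate case, and you spell out the step $(V^*)^TV^*A = 0 \Rightarrow V^*A = 0$ via $\|V^*A\|_F^2 = 0$, which the paper leaves implicit.
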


Together, \Cref{thm:lifted_monotonicity,thm:lower_bound,thm:geometric_interpretation} give a clean geometric interpretation of the relaxation: \eqref{eq:first_order_relaxation_single} solves the easiest problem in the family of higher-dimensional problems, in the sense that it attains the lowest optimal cost, $c_\text{SDP}^* = \min_{\rho \in \N} c_{\text{P}_\rho}^*$.
A numerical example for $\R^2$ is shown in \Cref{fig:lifted_interpretation}.

While the number of extra dimensions used by the lifted problem, $\rho = \operatorname{rank}(X - \Gamma^T\Gamma)$, is already bounded by $d+1$ since $X - \Gamma^T\Gamma \in \mathbb{S}^{d+1}$, the boundary conditions give a tighter bound:
\begin{lemma}
\label{lem:rho_bound}
The dimension $\rho$ in \Cref{thm:geometric_interpretation} satisfies $\rho \le (d+1) - 2(\ell+1)$, the number of free control points.
\end{lemma}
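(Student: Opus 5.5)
Write $Y := X^* - (\Gamma^*)^T \Gamma^* \succeq 0$. The approach is to show that $A$ lies in the nullspace of $Y$, i.e. $YA = 0$, and then bound the rank of $Y$ by $(d+1) - \operatorname{rank}(A)$.

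\textbf{Step 1: $YA = 0$.} The constraints of \eqref{eq:first_order_relaxation_single} give $X^* A = (\Gamma^*)^T B$ and $\Gamma^* A = B$. Hence
\begin{align*}
YA = X^*A - (\Gamma^*)^T\Gamma^* A = (\Gamma^*)^T B - (\Gamma^*)^T B = 0,
\end{align*}
so every column of $A$ lies in $\ker Y$. Equivalently, in the factorization $Y = (V^*)^T V^*$, we have $\|V^* a\|^2 = a^T Y a = 0$, so $V^* A = 0$. This matches the lifted boundary condition in \eqref{eq:nonconvex_path_planning_lifted}.

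\textbf{Step 2: rank--nullity.} From Step 1, $\operatorname{rank}(Y) \le (d+1) - \dim \operatorname{range}(A) = (d+1) - \operatorname{rank}(A)$. It remains to show $\operatorname{rank}(A) = 2(\ell+1)$.

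\textbf{Step 3: $A$ has full column rank.} The columns of $A$ are the linear functionals $p \mapsto p^{(i)}(0)$ and $p \mapsto p^{(i)}(1)$, for $i = 0,\dots,\ell$, acting on polynomials of degree at most $d$ through their coefficient vectors. These $2(\ell+1)$ functionals are linearly independent whenever $2(\ell+1) \le d+1$, by Hermite interpolation: for any prescribed endpoint jets there is a unique polynomial of degree at most $2\ell+1 \le d$ matching them, so the map $p \mapsto (\text{jets at } 0 \text{ and } 1)$ is surjective.

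For the Bernstein basis this can also be seen directly from the triangular structure described in \Cref{sec:facial_reduction}. The derivatives at $s=0$ are triangular in $\gamma_0,\dots,\gamma_\ell$ with nonzero diagonal, and those at $s=1$ are triangular in the last $\ell+1$ control points. These two index sets are disjoint because $2(\ell+1) \le d+1$, so $A$ has full column rank.

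\textbf{Main obstacle.} The key point is Step 1, which uses the tightening constraint $XA = \Gamma^T B$. Without it, $Y$ would only satisfy $A^T Y A$-type consequences through the objective and the bound would fail. The remaining steps are routine. Implicitly the statement assumes $2(\ell+1) \le d+1$, i.e. that the number of free control points is nonnegative; I would state this assumption explicitly.
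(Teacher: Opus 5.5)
Your proof is correct and follows the paper's argument essentially verbatim. The boundary and tightening constraints give $(X^* - (\Gamma^*)^T\Gamma^*)A = 0$, so $\operatorname{col}(A) \subseteq \ker(X^* - (\Gamma^*)^T\Gamma^*)$, and Hermite interpolation gives $\operatorname{rank}(A) = 2(\ell+1)$. Your explicit assumption $2(\ell+1) \le d+1$ is a worthwhile addition: the paper uses it only implicitly, and without it the stated bound fails, since its right-hand side is negative while $\rho \ge 0$.
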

In practice, we find $\rho$ to be smaller.
In~\cref{sec:empirical_analysis}, we show that across experiments in $\R^2$, $\R^3$, and $\R^5$, we never observe $\rho > 1$, suggesting that, for the random instances we solve, a single extra dimension already attains the lowest cost in the family.

\begin{figure*}[t]
    \centering
    \begin{subfigure}[t]{0.49\textwidth}
        \centering
        \includegraphics[width=0.45\linewidth]{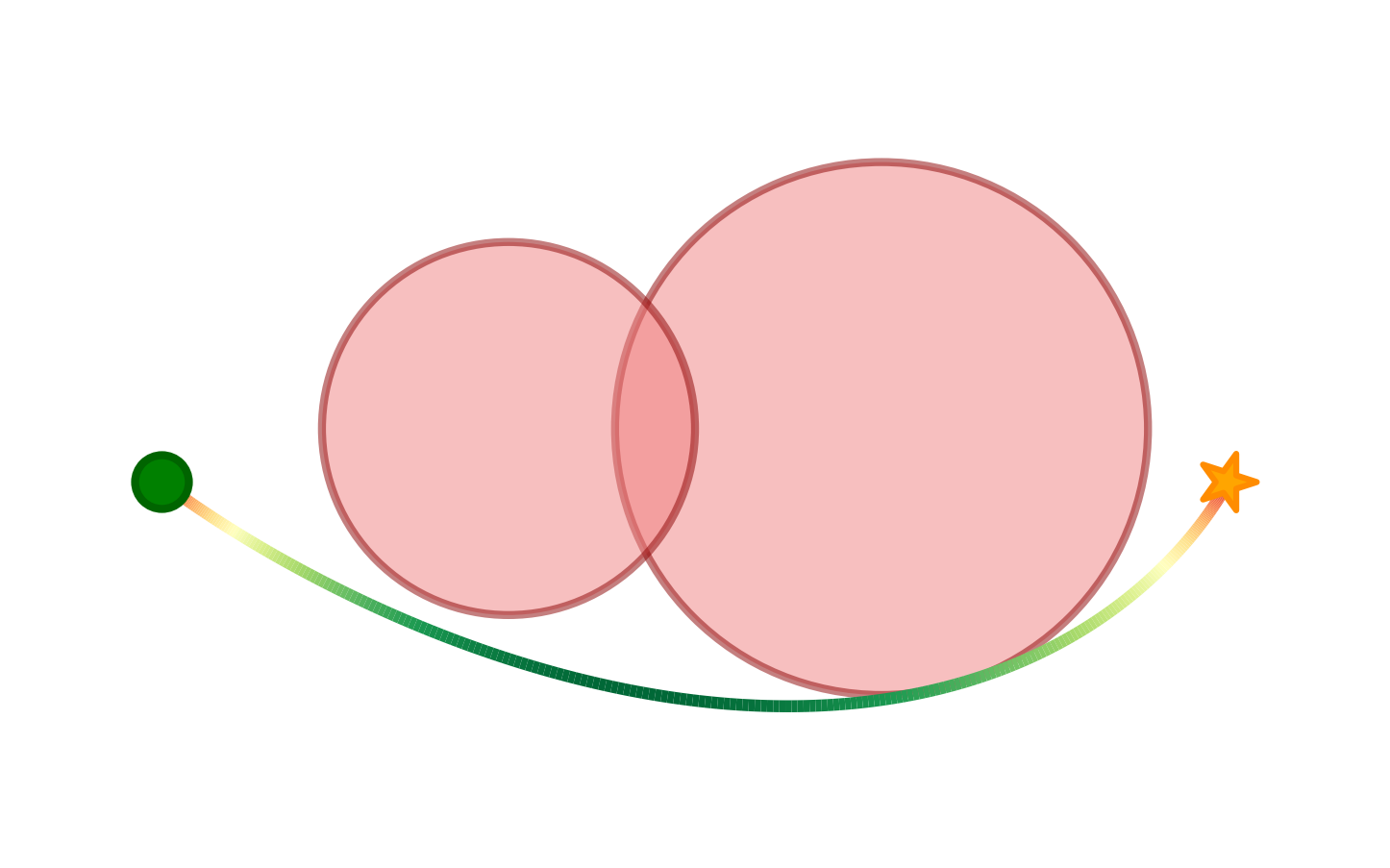}\hfill
        \includegraphics[width=0.54\linewidth]{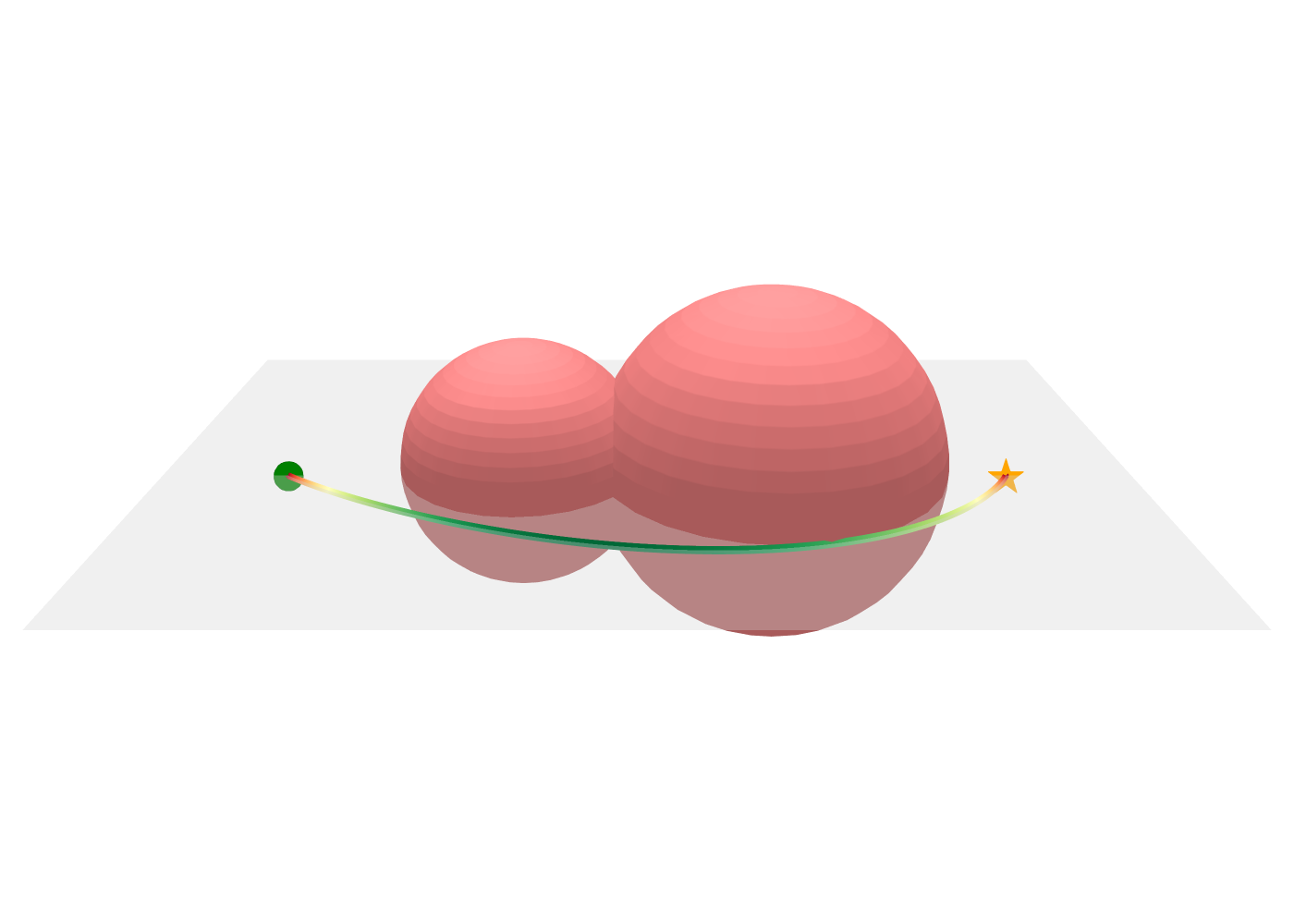}
        \caption{Tight example.}
        \label{fig:tight}
    \end{subfigure}
    \hfill
    \begin{subfigure}[t]{0.49\textwidth}
        \centering
        \includegraphics[width=0.45\linewidth]{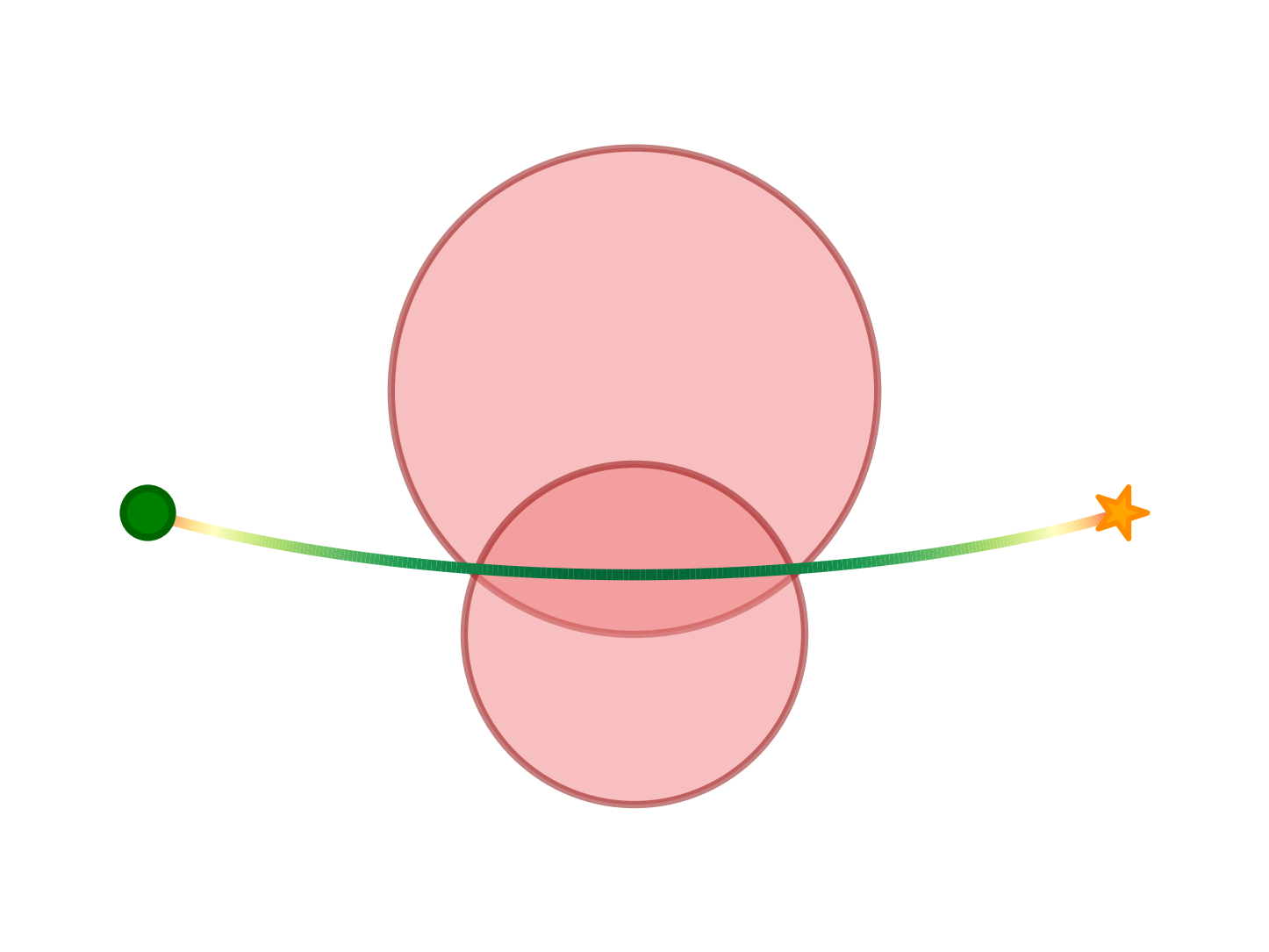}\hfill
        \includegraphics[width=0.54\linewidth]{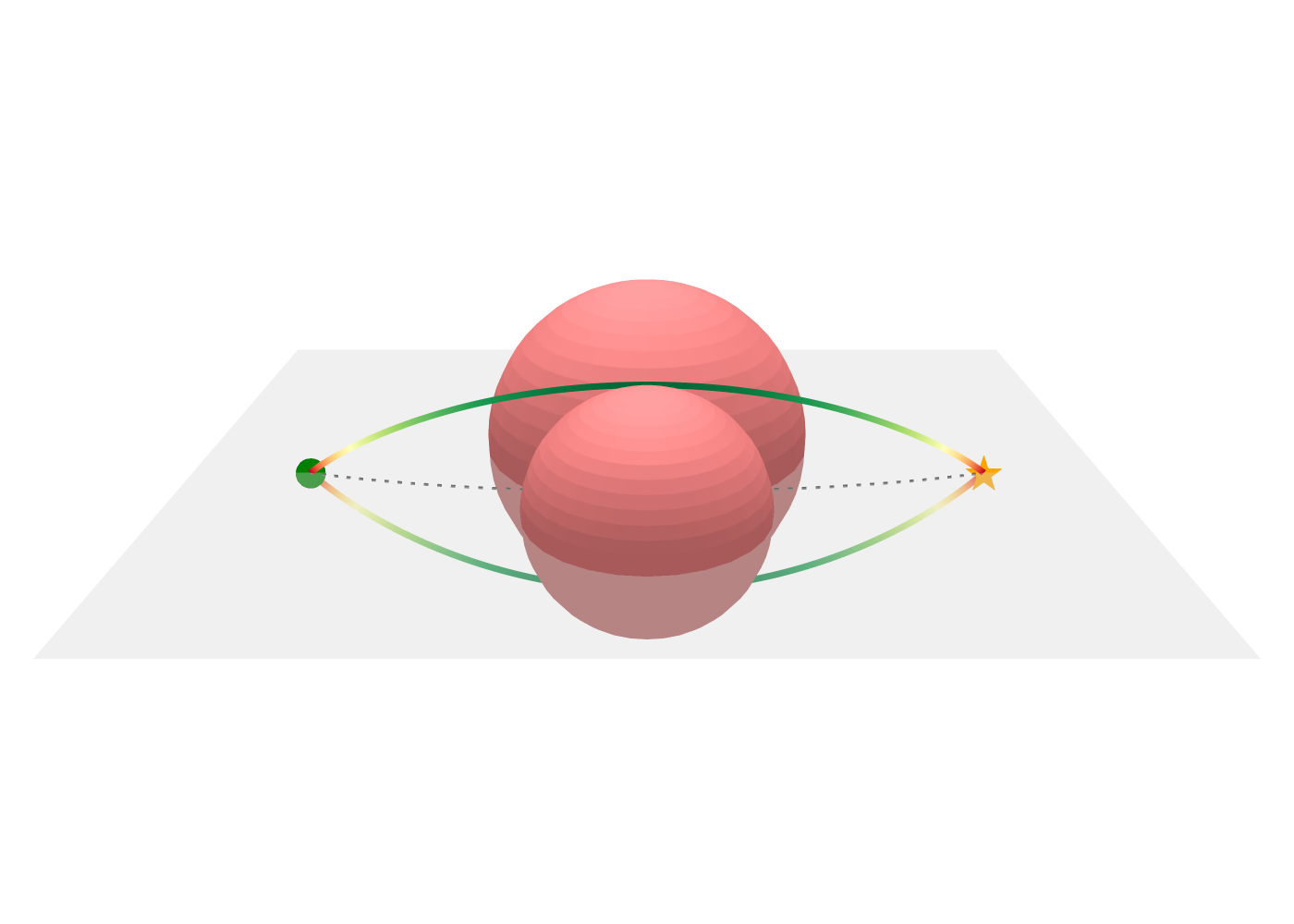}
        \caption{Not tight example.}
        \label{fig:not_tight}
    \end{subfigure}
    \caption{
Geometric interpretation of the relaxation for two examples in $\R^2$.
Each example shows the trajectory defined by the polynomial coefficients $\Gamma$ from the solution to the relaxation in the original ambient space $\R^2$ (left),
and the lifted trajectory defined by $(\Gamma, V)$ recovered from any factorization $V^T V = X - \Gamma^T \Gamma$, following \Cref{thm:geometric_interpretation} (right).
(a) Tight: $X = \Gamma^T \Gamma$, so $V = 0$ and the lifted trajectory coincides with the optimal $\R^2$ trajectory.
(b) Not tight: $\rho = \operatorname{rank}(X - \Gamma^T \Gamma) = 1$, so $V \in \R^{1 \times (d+1)}$ is nonzero and the lifted trajectory uses the extra dimension to find a lower-cost trajectory that, when projected back to $\R^2$, passes through the obstacles. The two trajectories shown are the two factorizations $\pm V$ of $X - \Gamma^T \Gamma$.}
    \label{fig:lifted_interpretation}
\end{figure*}

\subsection{When is the Relaxation Tight?}
We now present a necessary and sufficient condition for tightness, which follows immediately from the geometric interpretation: since \eqref{eq:first_order_relaxation_single} attains the minimum cost over the higher-dimensional problems~\eqref{eq:nonconvex_path_planning_lifted}, $\rho \in \N$, it is tight exactly when this minimum equals the cost of~\eqref{eq:nonconvex_path_planning}.
\begin{theorem}
\label{thm:tightness}
The optimal cost of the relaxation equals the true optimal cost, $c_\text{SDP}^* = c_\text{P}^*$, if and only if $c_\text{P}^* = c_{\text{P}_\rho}^*$ for all $\rho \in \N$.
\end{theorem}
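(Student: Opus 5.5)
The plan is to deduce the theorem directly from \Cref{thm:lifted_monotonicity,thm:lower_bound,thm:geometric_interpretation}, which together give $c_\text{SDP}^* = \min_{\rho \in \N} c_{\text{P}_\rho}^*$ with the minimum attained at a finite $\rho$. Throughout I use that \eqref{eq:nonconvex_path_planning_lifted} with $\rho = 0$ is exactly \eqref{eq:nonconvex_path_planning}, so $c_{\text{P}_0}^* = c_\text{P}^*$. I will also assume, as \Cref{thm:geometric_interpretation} implicitly does, that \eqref{eq:first_order_relaxation_single} attains its optimum. If feasibility or attainment is in doubt, I would treat the infeasible case separately: all costs are then $+\infty$ and the equivalence holds trivially.

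For the ``if'' direction, suppose $c_\text{P}^* = c_{\text{P}_\rho}^*$ for all $\rho \in \N$. Take a minimizer $(\Gamma^*, X^*)$ of \eqref{eq:first_order_relaxation_single} and set $\rho^* = \operatorname{rank}(X^* - (\Gamma^*)^T\Gamma^*)$. By \Cref{thm:geometric_interpretation}, $c_{\text{P}_{\rho^*}}^* = c_\text{SDP}^*$. The hypothesis applied at $\rho^*$ then gives $c_\text{P}^* = c_{\text{P}_{\rho^*}}^* = c_\text{SDP}^*$.

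For the ``only if'' direction, suppose $c_\text{SDP}^* = c_\text{P}^*$. For any $\rho$, \Cref{thm:lifted_monotonicity} applied repeatedly from $0$ to $\rho$ gives $c_\text{P}^* = c_{\text{P}_0}^* \ge c_{\text{P}_\rho}^*$. \Cref{thm:lower_bound} gives $c_{\text{P}_\rho}^* \ge c_\text{SDP}^* = c_\text{P}^*$. Combining the two inequalities yields $c_{\text{P}_\rho}^* = c_\text{P}^*$ for every $\rho$.

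There is no real obstacle here; the argument is bookkeeping on top of the three earlier theorems. The one point to state explicitly is the identification $c_{\text{P}_0}^* = c_\text{P}^*$: with $\rho = 0$, the variable $V$ is empty and $V^TV = 0$, so the two problems coincide.
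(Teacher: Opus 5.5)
Your proof is correct and follows the paper's argument essentially step for step. For the ``if'' direction you use \Cref{thm:geometric_interpretation} to obtain a $\rho$ with $c_{\text{P}_\rho}^* = c_{\text{SDP}}^*$. For the ``only if'' direction you bound $c_{\text{P}_\rho}^*$ from below by \Cref{thm:lower_bound} and from above by \Cref{thm:lifted_monotonicity}. Your explicit remarks that $c_{\text{P}_0}^* = c_{\text{P}}^*$ and that the SDP optimum is assumed to be attained state two points the paper leaves implicit.
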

In other words, the relaxation is tight exactly when extending the original problem~\eqref{eq:nonconvex_path_planning} to a higher dimension cannot produce a cheaper collision-free path.

\subsection{Recovering a Minimizer}

\Cref{thm:tightness} characterizes when the relaxation achieves the correct optimal cost, but does not say anything about the minimizer returned by the relaxation.
The next result says that when a unique trajectory is optimal for all the higher-dimensional problems \eqref{eq:nonconvex_path_planning_lifted}, the relaxation recovers it exactly:

\begin{theorem}
\label{thm:uniqueness}
If for all $\rho \in \N$,
every minimizer of~\eqref{eq:nonconvex_path_planning_lifted} has $\Gamma = \Gamma^*$ and $V = 0$,
then every minimizer of~\eqref{eq:first_order_relaxation_single} has $\Gamma = \Gamma^*$ and $X = (\Gamma^*)^T\Gamma^*$.
\end{theorem}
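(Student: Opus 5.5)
The plan is to take an arbitrary minimizer $(\Gamma', X')$ of \eqref{eq:first_order_relaxation_single} (with its certificates $Q_0, Q_1$) and push it through \Cref{thm:geometric_interpretation}.

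First, set $\rho' = \operatorname{rank}(X' - (\Gamma')^T\Gamma')$. Then choose any factorization $(V')^T V' = X' - (\Gamma')^T\Gamma'$ with $V' \in \R^{\rho' \times (d+1)}$, which exists because $X' \succeq (\Gamma')^T\Gamma'$ by the Schur complement \eqref{eq:schur_lmi}. \Cref{thm:geometric_interpretation} then says that $(\Gamma', V')$ is optimal for (P$_{\rho'}$). By hypothesis every minimizer of (P$_{\rho'}$) has $\Gamma = \Gamma^*$ and $V = 0$, so $\Gamma' = \Gamma^*$ and $V' = 0$. Consequently $X' - (\Gamma^*)^T\Gamma^* = (V')^T V' = 0$, that is, $X' = (\Gamma^*)^T\Gamma^*$. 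This is the claim.

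The only bookkeeping concern is feasibility of the factorization, which needs $V'A = 0$ in (P$_{\rho'}$). I would check that \Cref{thm:geometric_interpretation} already covers this: from $\Gamma'A = B$ and $X'A = (\Gamma')^TB$ we get $(V')^TV'A = (X' - (\Gamma')^T\Gamma')A = (\Gamma')^TB - (\Gamma')^TB = 0$. Multiplying on the left by $A^T$ gives $\|V'A\|_F^2 = 0$, so $V'A = 0$.

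A second subtlety arises in the tight case $\rho' = 0$. There $V'$ is empty, and the argument reduces to $(\Gamma', X')$ with $X' = (\Gamma')^T\Gamma'$ being optimal for (P$_0$) = \eqref{eq:nonconvex_path_planning}. The hypothesis at $\rho = 0$ then gives $\Gamma' = \Gamma^*$, so this case follows the same way.

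The main obstacle is mild: we must make sure \Cref{thm:geometric_interpretation} applies to \emph{every} minimizer of the SDP and to \emph{every} factorization, not just to some. Its statement is quantified exactly this way, so no additional argument such as strict convexity or compactness should be needed. The hypothesis must also be read for the specific $\rho'$ produced, which it is, since it holds for all $\rho \in \N$.
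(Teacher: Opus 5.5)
Your proposal is correct and follows essentially the same route as the paper: apply \Cref{thm:geometric_interpretation} to an arbitrary minimizer of \eqref{eq:first_order_relaxation_single}, then invoke the uniqueness hypothesis at the resulting $\rho$. The paper splits into the cases $X = \Gamma^T\Gamma$ (using $c_\text{SDP}^* = c_\text{P}^*$) and $X \neq \Gamma^T\Gamma$ (a contradiction with $V \neq 0$), whereas you handle both cases uniformly; the two presentations are equally valid.
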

The hypothesis implies $c_{\text{P}_\rho}^* = c_{\text{P}}^*$ for all $\rho$, i.e., the relaxation is tight by \Cref{thm:tightness}.
The uniqueness assumption is stronger but additionally guarantees recovery of the correct minimizer.
It requires uniqueness only in $\Gamma$ and $V$, and $Q_0$ and $Q_1$ need not be unique.

The hypothesis of \Cref{thm:uniqueness} can fail even when the relaxation is tight.
We identify two sufficient conditions for non-uniqueness of~\eqref{eq:nonconvex_path_planning_lifted}.
The first is that the obstacle-free problem has multiple minimizers.
Without obstacle constraints,~\eqref{eq:nonconvex_path_planning_lifted} reduces to a convex QP over $\Gamma$ (since $V = 0$ is then optimal), whose cost has zero-cost directions, which by \Cref{lemma:G_null} are the polynomials of degree below $k$.
The QP has a unique minimizer when the boundary conditions eliminate these directions, which holds when $k \leq 2(\ell+1)$.
This is a standard result in polynomial trajectory optimization, and follows from the uniqueness condition for convex QPs~\cite[Section 10.1.1]{boydConvexOptimization} and the Hermite interpolation theorem~\cite[Section 2.1.5]{stoerNumericalAnalysis}.
The second is the obstacle geometry: even when the obstacle-free problem has a unique minimizer, the nonconvex collision-avoidance constraints can create multiple isolated optima, such as symmetric paths around an obstacle.
In practice, a small random perturbation to the problem geometry typically breaks this symmetry.

\section{Empirical Analysis}
\label{sec:empirical_analysis}
\begin{figure*}[!t]
\centering
\begin{subfigure}[b]{0.32\textwidth}
    \centering
    \includegraphics[height=6cm,width=\linewidth,keepaspectratio]{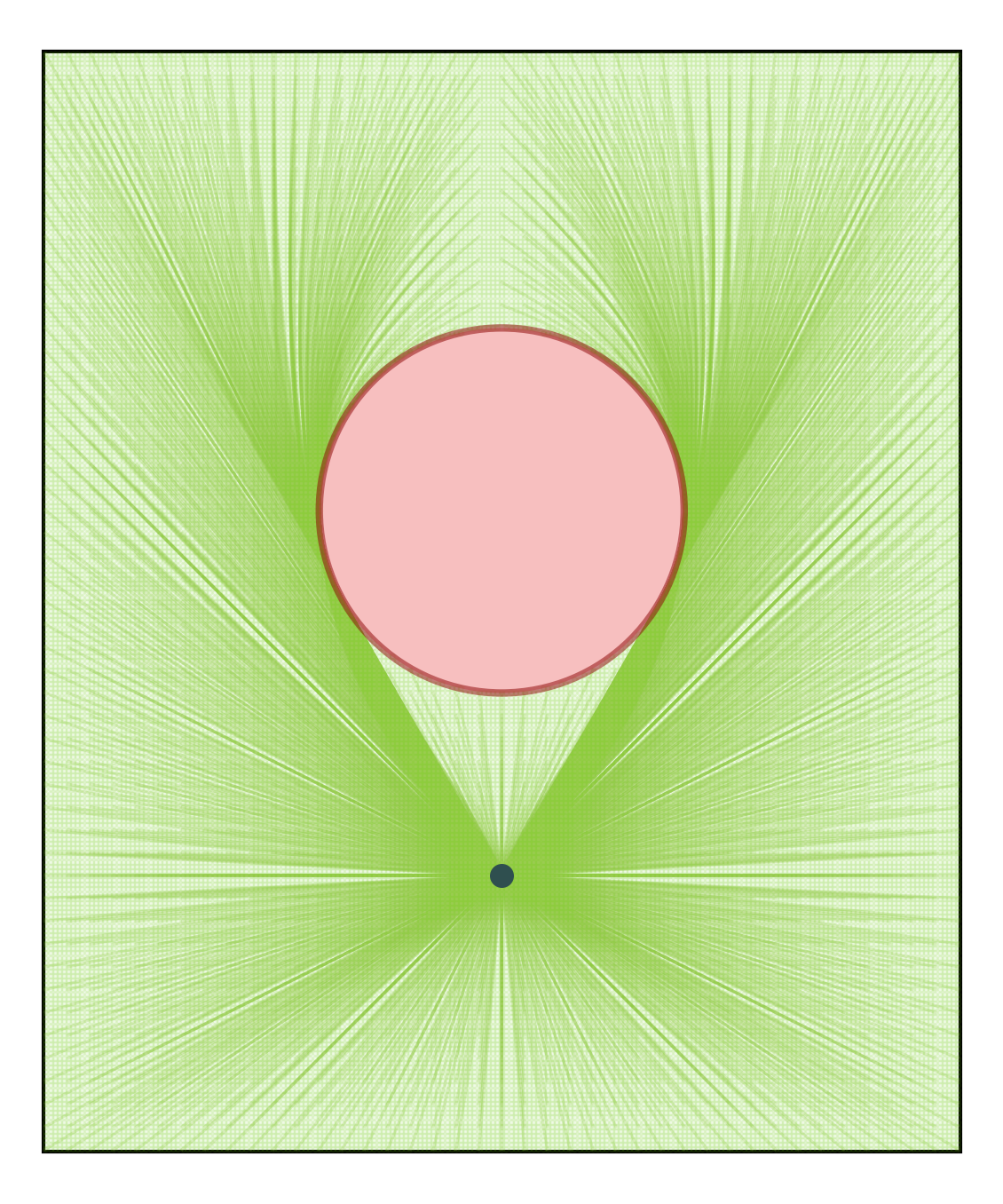}
\end{subfigure}
\hfill
\begin{subfigure}[b]{0.32\textwidth}
    \centering
    \includegraphics[height=6cm,width=\linewidth,keepaspectratio]{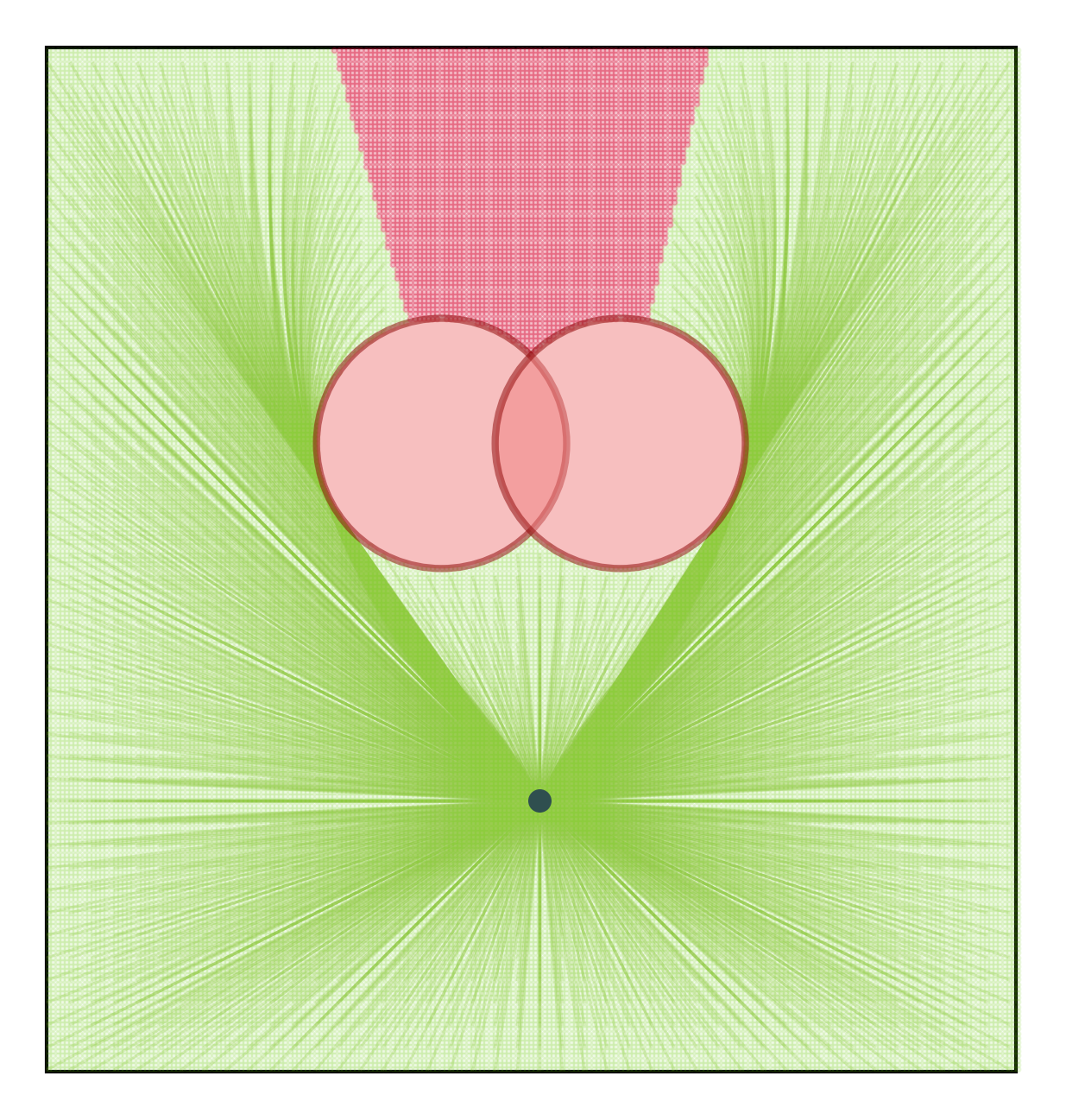}
\end{subfigure}
\hfill
\begin{subfigure}[b]{0.32\textwidth}
    \centering
    \includegraphics[height=6cm,width=\linewidth,keepaspectratio]{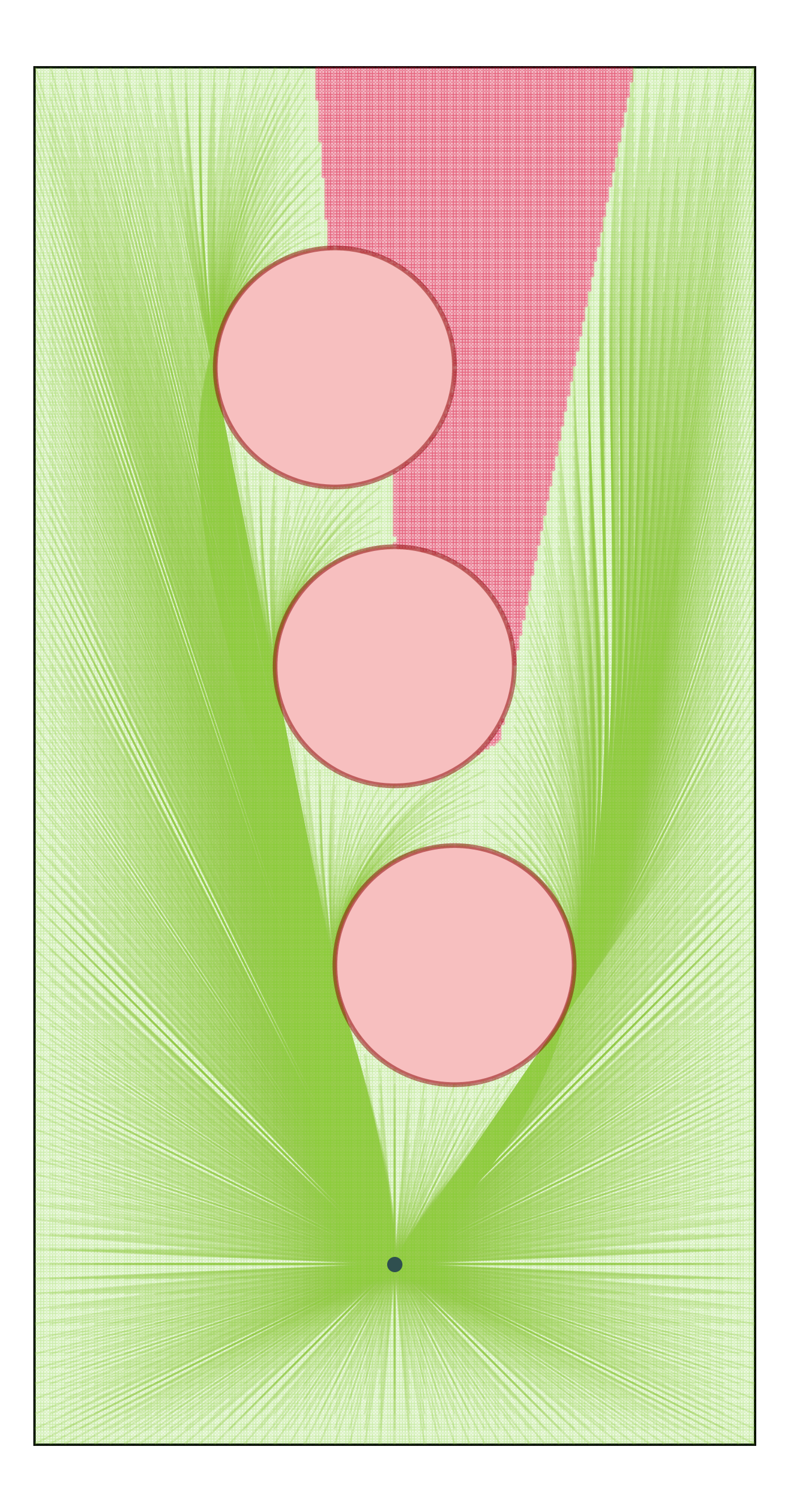}
\end{subfigure}
\caption{
Numerical evaluation of where the relaxation is tight and where it is loose for some representative obstacle configurations.
The green region shows the set of reachable positions from the dark dot for which the relaxation is tight, and the red region shows where it is loose.
The green lines show the trajectories obtained from solving the relaxation.
Results are shown for minimum-energy cost with degree $3$ polynomial trajectories.
}
\label{fig:tightness_geometry_2d}
\end{figure*}

\begin{figure*}[!t]
\centering
\begin{subfigure}[c]{0.48\textwidth}
    \centering
    \includegraphics[width=\linewidth]{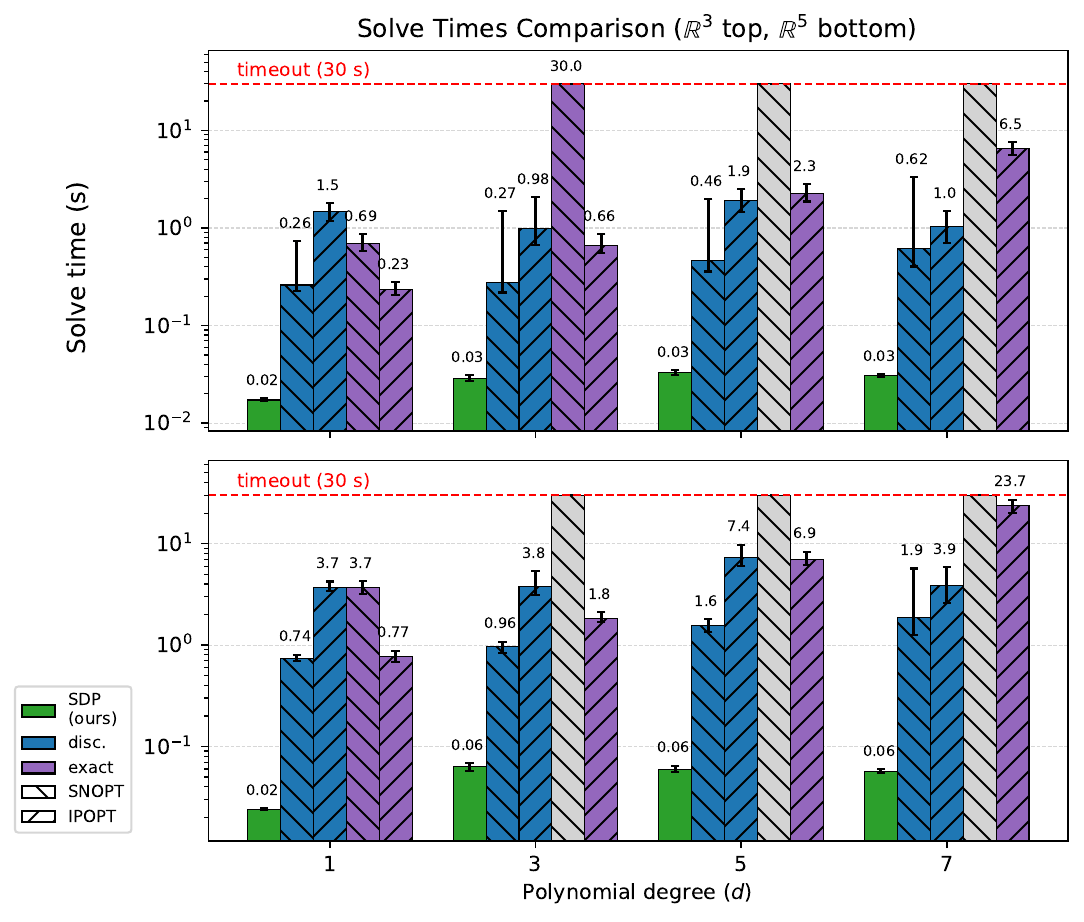}
    \caption{}
    \label{fig:solve_times_plot}
\end{subfigure}
\hfill
\begin{minipage}[c]{0.48\textwidth}
    \centering
    \begin{subfigure}[b]{\linewidth}
        \centering
        \small
        \setlength{\tabcolsep}{5pt}
        \begin{tabular}{lcccc}
        \multicolumn{5}{c}{\textit{Median speedup over baseline} ($\R^3$, $\R^5$)} \\
        \toprule
        Baseline & $d{=}1$ & $d{=}3$ & $d{=}5$ & $d{=}7$ \\
        \midrule
        SNOPT (disc.) & 15, 31 & 10, 15 & 14, 26 & 19, 28 \\
        SNOPT (exact) & 39, 159 & 440, N/A & N/A & N/A \\
        IPOPT (disc.) & 84, 156 & 33, 61 & 57, 126 & 34, 70 \\
        IPOPT (exact) & 14, 32 & 23, 30 & 71, 117 & 212, 397 \\
        \bottomrule
        \end{tabular}
        \caption{}
        \label{fig:speedup_table}
    \end{subfigure}

    \vspace{2.5em}

    \begin{subfigure}[b]{\linewidth}
        \centering
        \includegraphics[width=\linewidth]{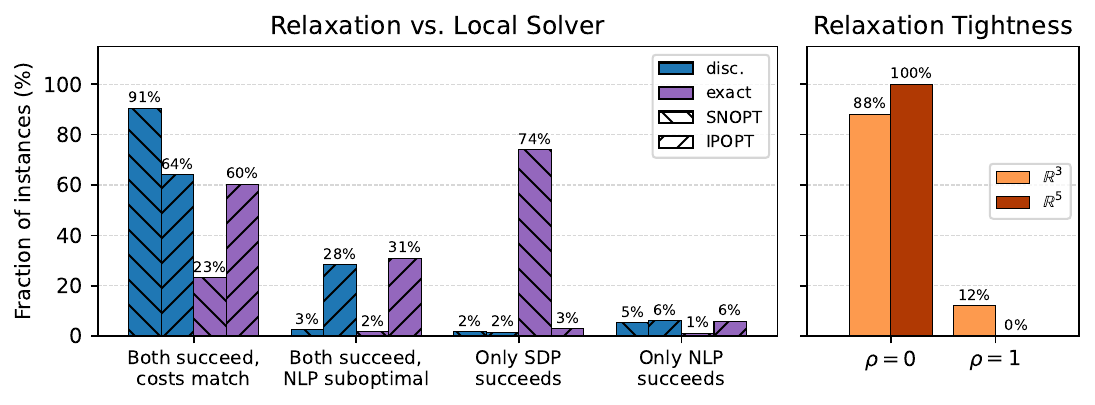}
        \caption{}
        \label{fig:sdr_vs_nlp_outcome}
    \end{subfigure}
\end{minipage}
\caption{
Empirical comparison of the relaxation against SNOPT and IPOPT across random problem instances (see~\cref{sec:computational_cost}).
(a) Solve times (log scale); grey bars indicate all instances timed out.
(b) Median speedup over each baseline, reported as $(\R^3, \R^5)$.
(c) Outcome breakdown against each baseline (left) and the distribution of the relaxation tightness (measured by $\rho = \operatorname{rank}(X - \Gamma^T \Gamma)$, see~\cref{sec:theoretical_results}) (right).
The relaxation is 1--2 orders of magnitude faster than the local solvers with 1--3 orders of magnitude smaller variance.
}
\label{fig:solve_times_comparison}
\end{figure*}

We empirically evaluate the tightness and computational cost of the relaxation.
We compare against multiple \acrshort*{nlp} baselines,
e.g. direct transcriptions of the same nonconvex problem solved with local nonlinear solvers,
and generally find that the relaxation is 1--2 orders of magnitude faster with 2--3 orders of magnitude lower variance in solve times.
Further,
when the strongest \acrshort*{nlp} baseline succeeds the relaxation is typically also tight, suggesting it captures primarily local information.

We compare against SNOPT~\cite{gillSNOPT2005} and IPOPT~\cite{waechterIPOPT2006} using two strategies for enforcing that each polynomial segment is collision-free.
The \textit{discretization approach} enforces~\cref{eq:nonneg_collision} at $20$ uniformly spaced values of $s \in [0,1]$ per segment, yielding a nonconvex quadratic constraint at each sample.
The \textit{exact approach} uses the exact condition in~\cref{eq:collision_free_conditions}.
For the latter approach we enforce $M \succeq 0$ through the factorization $M = LL^T$ for a lower triangular matrix $L$, as neither of the solvers support PSD constraints.

We compare across the problem configurations in~\Cref{tab:degree-configs}, matching the classical setup of~\cite{mellinger2011minimum}.
We evaluate the methods on 100 randomly generated instances in $\R^3$ (15 obstacles) and $\R^5$ (30 obstacles), with obstacle radii in $[0.3, 1.0]$ placed in the box $[-2, 2]^n$.
We apply the procedure in \cref{sec:facial_reduction} to similarly reduce the number of free variables for both the relaxation, SNOPT, and IPOPT.
We initialize SNOPT and IPOPT with a straight-line initial guess, reduce their feasibility tolerance to $10^{-4}$, and enforce a timeout of $30$ seconds.

\begin{table}[ht]
\centering
\begin{tabular}{ccccc}
\toprule
Degree & Cost & Continuity & BC & Segments \\
\midrule
1 & velocity ($k=1$) & $C^0$ & 0 & 12 \\
3 & acceleration ($k=2$) & $C^2$ & 2 & 6 \\
5 & jerk ($k=3$) & $C^3$ & 3 & 4 \\
7 & snap ($k=4$) & $C^4$ & 4 & 3 \\
\bottomrule
\end{tabular}
\caption{B\'{e}zier curve configurations. Cost minimizes $\int \|\gamma^{(k)}(t)\|^2\,dt$, polynomial degree is $2k-1$, continuity is $C^k$ ($C^0$ for $k=1$ to avoid overconstraining the problem), and BC order is the highest derivative fixed to zero at the endpoints.}
\label{tab:degree-configs}
\end{table}

\subsection{Computational Cost}
\label{sec:computational_cost}
We now compare solve times between the methods.
Against the discretization approach, the relaxation is 1--2 orders of magnitude faster across all polynomial degrees and both dimensions.
Against the exact approach, the gap is even larger: for $d \geq 3$, SNOPT times out on nearly all instances, and IPOPT solves with median solve times that grow steeply with the degree (reaching tens of seconds), while the relaxation solves with medians of 17--33\,ms in $\R^3$ and 24--63\,ms in $\R^5$.
The \acrshort*{sdp} relaxation also has a far smaller interquartile range: 1--4\,ms in $\R^3$ and 1--12\,ms in $\R^5$, 1--3 orders of magnitude smaller than the local solvers, whose interquartile ranges span 0.1--4.4\,s for the discretization approach and 0.07--6.9\,s for the exact approach (instances that reached the time limit excluded).

\subsection{Empirical Tightness}
\Cref{fig:tightness_geometry_2d} gives geometric intuition for when the relaxation is tight, showing the regions of tightness and looseness for a single start position across the workspace under different obstacle configurations.
Further, there is a strong correlation between when the relaxation is tight and when the strongest \acrshort*{nlp} baseline succeeds.
In \cref{fig:sdr_vs_nlp_outcome}, we show an outcome breakdown comparing the relaxation with each baseline, considering only instances where the relaxation is tight or the baseline succeeds.
Against the strongest baseline, both methods succeed with matching costs on 91\% of instances. On the remaining instances, either only one of the two methods succeeds, or both succeed but the relaxation reaches the global optimum while the baseline finds a suboptimal local solution.
Against the other baselines, this gap widens.
On a growing fraction of instances the relaxation finds the global optimum where the baseline returns only a local one, or succeeds where the baseline fails or times out.

By~\Cref{thm:tightness}, the relaxation is tight when extending the problem to higher dimensions does not reduce the cost, which empirically seems to approximately coincide with cases where local reasoning suffices to find the globally optimal path.
Further, we empirically observe values of $\rho = \operatorname{rank}(X - \Gamma^T \Gamma)$ of only $0$ (tight) or $1$ (not tight) and never higher, suggesting that the original nonconvex problems in both $\R^3$ and $\R^5$ do not get any easier after extending them by one extra dimension (in the sense defined in~\cref{sec:theoretical_results}, e.g. (P$_{\rho=1}$) might have lower cost than \eqref{eq:nonconvex_path_planning}, but the problems (P$_{\rho \geq 1}$) all have the same cost).

\section{Kinodynamic Motion Planning}
\label{sec:motion_planning_experiments}
\begin{figure*}[!t]
\centering
\includegraphics[width=\textwidth]{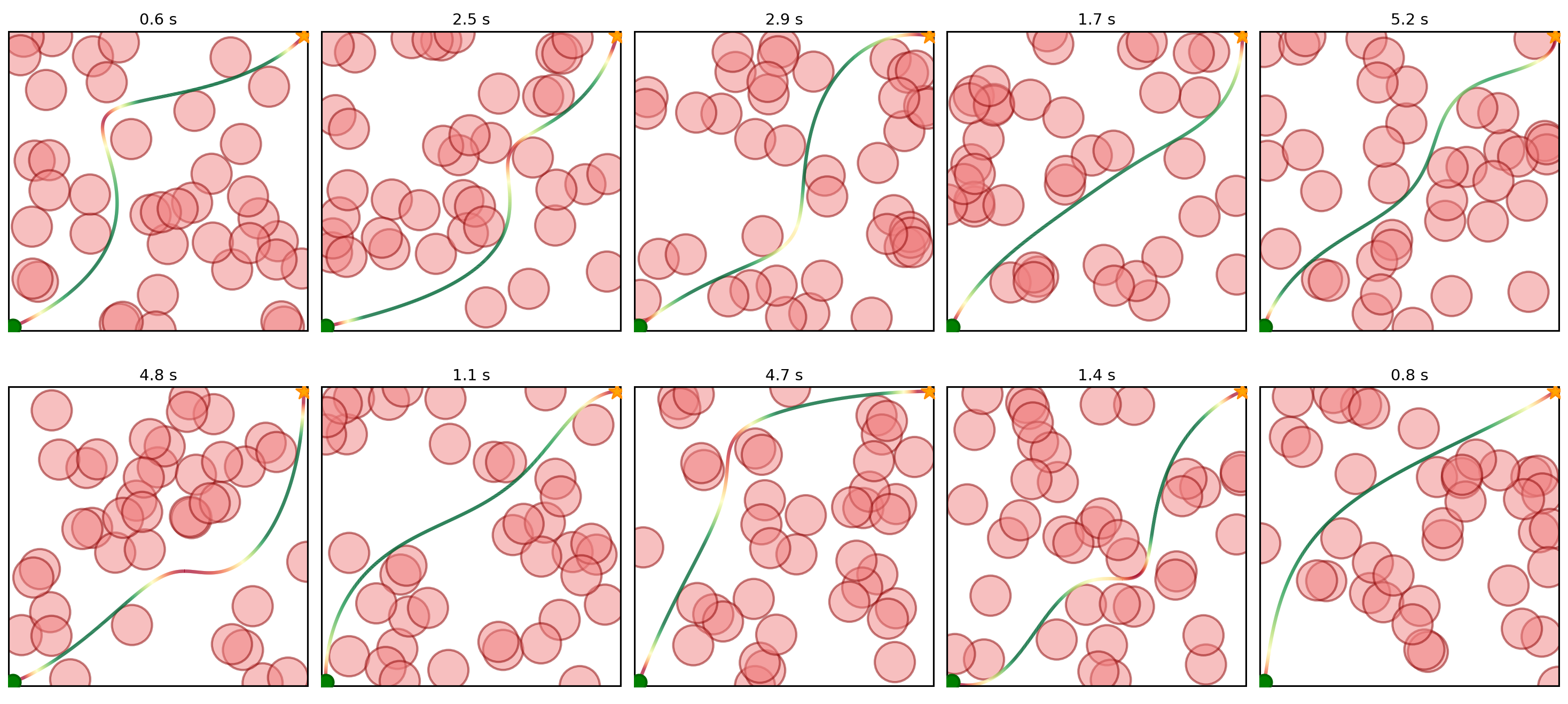}

\caption{Minimum-snap trajectories ($C^4$ continuous, degree~$9$ polynomials) found on 10 randomly generated $\R^2$ environments with 40 obstacles, by our RRT implementation that uses the \acrshort*{sdp} relaxation as the steering procedure in the \textsc{Extend} step and post-processes with a small convex program.
Each subplot title reports the total planning time.
Trajectories are colored by speed, from green (fast) to red (slow).}
\label{fig:rrt_sdp_steer_montage_2d}
\end{figure*}

We illustrate how the relaxation can serve as a convex subroutine within a higher-level planner.
Specifically, we use it as the local steering procedure in the \textsc{Extend} step of RRT~\cite{lavalle1998rapidly}, applied to minimum-snap quadrotor planning with degree~9, $C^4$-continuous trajectories.

We show that the semidefinite relaxation is well-suited as the steering function in an RRT.
By default, RRT uses straight-line steering that ignores the smoothness and dynamic constraints that motion planning typically requires.
A solution to this is to solve the two-point \acrfull*{bvp} analytically~\cite{webb2013kinodynamic, perez2012lqr}, but this applies only to certain systems and costs, and ignores state and input constraints.
Another approach plans collision-free waypoints and smooths them~\cite{richter2016polynomial}, or uses a local nonlinear solver as the steering function, but these inherit the brittleness of local solvers: reliance on an initial guess, high solve-time variance, and discretized collision avoidance.
On the other hand, the relaxation finds a collision-free, cost-optimal trajectory in a single convex solve, enforces collision avoidance exactly along the entire trajectory, requires no initial guess, has fast and consistent solve times, and naturally handles state and input constraints such as velocity and acceleration limits.

\subsection{RRT Implementation}
We compare three methods, which we refer to as \textit{SDP-RRT}, \textit{NLP-RRT}, and \textit{Geometric RRT}.
SDP-RRT uses the relaxation as the steering function, enforcing collision avoidance exactly along the entire trajectory.
NLP-RRT is an otherwise identical RRT that uses an optimized \acrshort*{nlp} trajectory optimizer as its steering function.
We enforce collision avoidance with Drake's~\cite{drake} \texttt{MinimumDistanceLowerBoundConstraint} (a smoothed signed-distance formulation), and apply several optimizations: inflate the obstacles to reduce the number of collision samples, loosen the tolerances, and switch between SNOPT and IPOPT depending on problem size.
Geometric RRT first uses straight-line geometric RRT to find collision-free waypoints, then smooths them with the same \acrshort*{nlp} trajectory optimizer.
We sample only in the space of positions, and thus for SDP-RRT and NLP-RRT we implement a partial-state RRT~\cite{zheng2021accelerating} that leaves the derivatives free.
These derivatives can grow rapidly as the tree deepens, and point-to-point steering functions cannot re-optimize across waypoints to control them, relying instead on heuristics such as terminal penalties~\cite{zheng2021accelerating}.
As a convex program, our SDP steering function instead optimizes directly over the free derivatives across multiple waypoints, removing the need for such heuristics.
In practice we re-optimize only the last two segments on each extension, which we find produces near-identical trajectories to re-optimizing the entire path back to the root, at much lower cost.
When connecting to the goal, the derivatives are fixed to enforce the boundary conditions.

\subsection{Trajectory Post-Processing}
After the RRT planners find a path, we post-process it to reduce cost.
For Geometric RRT and NLP-RRT, the path initializes a nonlinear program over the full trajectory.
For SDP-RRT, we use a cutting-plane procedure motivated by the geometric interpretation of the relaxation: since the relaxation finds the cheapest trajectory across a family of higher-dimensional problems, we add linear cuts that force this trajectory into the original subspace.
We sample points along the trajectory, add separating cuts between them and the obstacles, re-optimize, and repeat until the solution is collision-free, then resample to reduce cost further.
Because dense cuts leave the \acrshort*{lmi} obstacle constraints rarely active, we drop them, reducing each iteration to a convex \acrfull*{qp} that solves in a few milliseconds.
This is a heuristic post-processing step, and the method recently proposed in~\cite{werner2026biconvex} provides a more principled approach to a similar idea.

\subsection{Numerical Results}
\Cref{fig:hero} shows a collision-free quadrotor trajectory in $\R^3$.
We evaluate the full pipeline on 100 randomly generated $\R^2$ environments, each with 40 spherical obstacles of radius $1\,\text{m}$ in a $15 \times 15\,\text{m}$ square, comparing all three methods on pipeline success rate, total solve time (including refinement), steer time, and trajectory cost.
A run is successful if it produces a feasible, collision-free trajectory. For Geometric RRT this requires the \acrshort*{nlp} refinement to succeed, while \acrshort*{nlp}-RRT and \acrshort*{sdp}-RRT have a feasible trajectory once the RRT finds a path.
Steer time is the cumulative optimization time, excluding program construction overhead, which can be eliminated by pre-compiling the convex optimization programs with a tool such as CVXGEN~\cite{mattingley2012cvxgen}.

As shown in \cref{fig:rrt_pipeline_comparison}, with example trajectories in \cref{fig:rrt_sdp_steer_montage_2d}, \acrshort*{sdp}-RRT achieves the best performance across all metrics.
\acrshort*{nlp}-RRT shares our ability to reason about additional trajectory constraints, but its nonconvex steering is slow and brittle: it is over an order of magnitude slower, less reliable, and often falls back to the unrefined RRT path, yielding far higher-cost trajectories.
Geometric RRT, in contrast, is fast and reliable on this benchmark, which has no dynamic or input constraints, and \acrshort*{sdp}-RRT matches its solve time at a slightly higher success rate and comparable cost.
Crucially however, Geometric RRT decouples geometric planning from trajectory optimization and cannot incorporate constraints such as dynamics or input limits, whereas our convex steering can enforce additional convex constraints during exploration.

\begin{figure}[!t]
\centering
\includegraphics[width=\columnwidth]{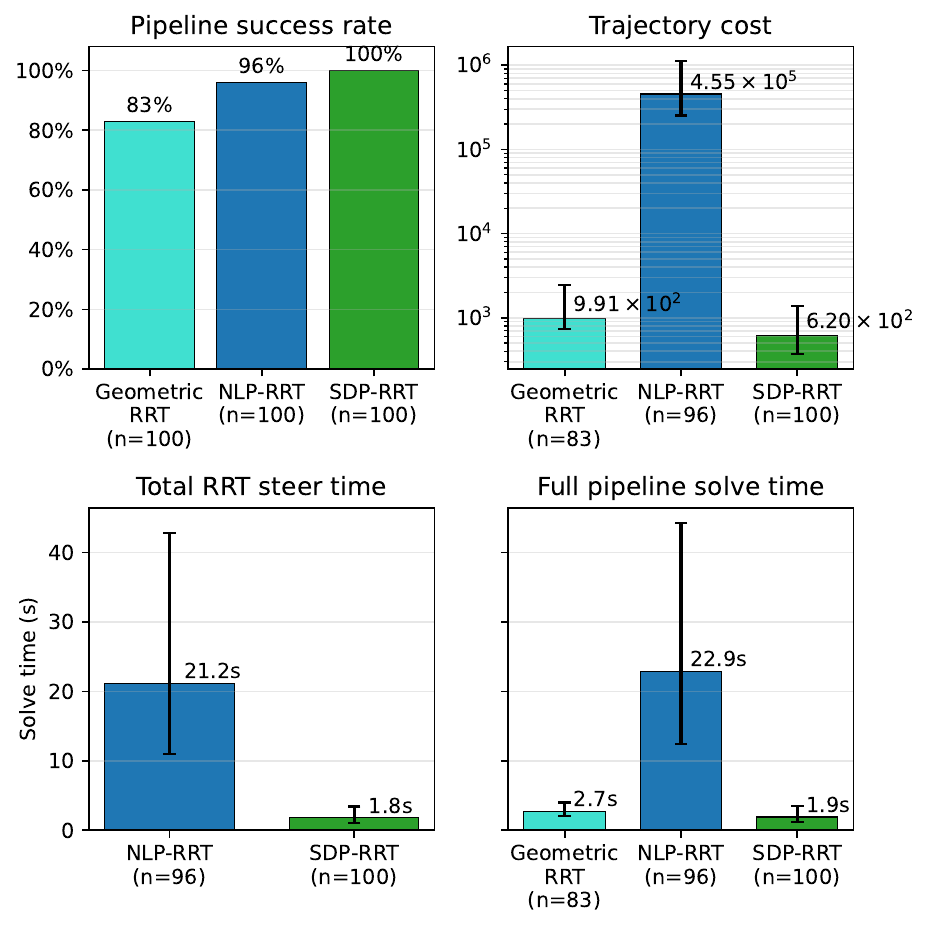}
\caption{Aggregate statistics showing median and interquartile ranges over 100 random environments for Geometric RRT, \acrshort*{nlp}-RRT, and \acrshort*{sdp}-RRT (ours), with costs and solve times reported over successful runs ($n$ per bar).
\acrshort*{sdp}-RRT substantially outperforms \acrshort*{nlp}-RRT,
and performs similar to Geometric RRT (with slightly higher success rate), which importantly, unlike our method, cannot incorporate additional constraints such as dynamics or input limits.}
\label{fig:rrt_pipeline_comparison}
\end{figure}

\section{Conclusion}
\label{sec:future_work}

While the relaxation is fast and reliable, our empirical analysis shows that it captures primarily local information and can be loose on instances where global reasoning is required.
A potentially interesting direction of research is to develop higher-order relaxations from the moment-SOS hierarchy~\cite{lasserre2001global,parrilo2003semidefinite} that can tighten the relaxation on such instances, and whether the symmetry reduction extends naturally to higher levels of the hierarchy.
Relatedly, since we never observe $\rho > 1$ in our experiments, an open question is to prove tighter bounds on $\rho$.
Another direction is to extend the framework and analysis to richer obstacle and robot geometries described by polynomials.
Further, since the relaxation is convex, it readily accommodates other convex constraints, extending the method to systems with linear dynamics and state or input constraints.
Finally, another promising direction is to develop a custom solver that exploits the highly structured nature of our formulation, which could potentially speed up the relaxation further.

\appendix
\label{sec:appendix}

\subsection{Connection To the Shor Relaxation}
\label{sec:symmetry}

In this section, we show that our relaxation~\eqref{eq:first_order_relaxation_single} is mathematically equivalent to the \textit{Shor relaxation}~\cite{shor1987quadratic}, i.e. the first level of the moment hierarchy, of~\eqref{eq:nonconvex_path_planning}.
The Shor relaxation lifts the full vector of decision variables $y \in \R^{n(d+1)}$ to a symmetric matrix $Y \in \mathbb{S}^{n(d+1)}$ that replaces the outer product $yy^T$, rendering each (potentially nonconvex) quadratic term linear, $y^T H y = \iprod{H, Y}$, and relaxes the nonconvex lifting equality $Y = yy^T$ to the convex constraint $Y \succeq yy^T$.
Our relaxation~\eqref{eq:first_order_relaxation_single} follows the same recipe, but instead lifts to the smaller Gram matrix $X = \Gamma^T \Gamma \in \mathbb{S}^{d+1}$.
Specifically, we show that \eqref{eq:first_order_relaxation_single} is a symmetry-reduced version of the Shor relaxation, where we exploit an $O(n)$ symmetry inherent to~\eqref{eq:nonconvex_path_planning} to reformulate the problem over a \acrshort*{psd} cone that is smaller by a factor of $n$, the ambient dimension.
We derive the Shor relaxation in vectorized form to match the existing literature, but note that since~\eqref{eq:nonconvex_path_planning} is a matrix problem, one could equivalently derive it directly in matrix form.
All proofs are in~\cref{sec:symmetry_proofs}.

The first step to deriving the Shor relaxation of~\eqref{eq:nonconvex_path_planning} is to rewrite~\eqref{eq:nonconvex_path_planning} in a vectorized form.
Define
$y = \mathrm{vec}(\Gamma) = 
(\gamma_0, \ldots, \gamma_d)
\in \R^{n(d+1)}$
as the vector of vertically stacked polynomial coefficients.
Using the Kronecker product $\otimes$, we rewrite the linear constraints directly in terms of $y$.
Applying the vectorization identity for Kronecker products,
$\mathrm{vec}(LMN) = (N^T \otimes L)\mathrm{vec}(M)$,
the boundary conditions $\Gamma A = B$ become $(A^T \otimes I_n)y = \mathrm{vec}(B)$.

Next, we rewrite the cost and the arguments to $\mathcal{L}$ in terms of $yy^T$ and $y$.
Define the \textit{partial trace} $\mathrm{tr}_n: \mathbb{S}^{n(d+1)} \rightarrow \mathbb{S}^{d+1}$ by $[\mathrm{tr}_n(M)]_{ij} = \mathrm{tr}(M_{ij})$, where $M_{ij} \in \R^{n \times n}$ is the $(i,j)$ block of $M$.
The partial trace will let us translate between the outer products that arise from vectorization and the Gram matrices that appear in \eqref{eq:nonconvex_path_planning}.
Specifically, a direct computation shows that for any $M, N \in \R^{n \times (d+1)}$,
\begin{align}
M^T N = \mathrm{tr}_n(\mathrm{vec}(M)\mathrm{vec}(N)^T).
\label{eq:partial_trace_identity}
\end{align}
Applying~\eqref{eq:partial_trace_identity} to the arguments of $\mathcal{L}$ in~\eqref{eq:nonconvex_path_planning} gives all the substitutions we need to vectorize the problem. Setting $(M, N)$ to $(\Gamma, \Gamma)$, $(\Gamma, cu^T)$, and $(cu^T, cu^T)$ yields $\Gamma^T \Gamma = \mathrm{tr}_n(yy^T)$, $\Gamma^T cu^T = \mathrm{tr}_n(y(u\otimes c)^T)$, and $(c^Tc)\,uu^T = \mathrm{tr}_n((uu^T) \otimes (cc^T))$, respectively, where we used $\mathrm{vec}(cu^T) = u \otimes c$ and the Kronecker mixed-product rule.
The vectorized form of \eqref{eq:nonconvex_path_planning} is then
\begin{align*}
    \min \quad & \mathrm{tr}(G^{(k)} \, \mathrm{tr}_n(yy^T)) \\
    \subto \quad
    & \mathcal{L}(\mathrm{tr}_n(yy^T - (u \otimes c)\, y^T - y\, (u \otimes c)^T \\
    & \quad\quad + (uu^T) \otimes (cc^T)), Q_0, Q_1) = 0, \\
    & Q_0 \succeq 0, \; Q_1 \succeq 0, \\
    & (A^T \otimes I_n)\, y = \mathrm{vec}(B),
\end{align*}
with $y \in \R^{n(d+1)}$ and $Q_0, Q_1$ unchanged from~\eqref{eq:nonconvex_path_planning}.

The Shor relaxation is then formed by introducing $Y \in \mathbb{S}^{n(d+1)}$ with the constraint $Y = yy^T$, and relaxing this nonconvex equality into $Y \succeq yy^T$.
The resulting relaxation is:
\begin{align*}
\tag{Shor}
\label{eq:shor}
    \min \quad &\mathrm{tr}(G^{(k)} \, \mathrm{tr}_n(Y))  \\
    \subto \quad
    & \mathcal{L}(\mathrm{tr}_n(
    Y - (u \otimes c) y^T - y (u \otimes c)^T \\
    & \quad\quad + (uu^T) \otimes (cc^T)), Q_0, Q_1) = 0, \\
    & Q_0 \succeq 0, \; Q_1 \succeq 0, \\
    &(A^T \otimes I_n)y = \mathrm{vec}(B), \\
    &(A^T \otimes I_n)Y = \mathrm{vec}(B)\, y^T, \\
    &\bmat{
       1 & y^T  \\
       y & Y
    } \succeq 0.
\end{align*}
We have included the implied constraint $(A^T \otimes I_n)Y = \mathrm{vec}(B)\, y^T$, which is redundant when $Y = yy^T$ but tightens the relaxation, analogously to $XA = \Gamma^T B$ in~\eqref{eq:first_order_relaxation_single}.
The \acrshort*{psd} variables of the two relaxations are related by $\mathrm{tr}_n(Y) = X$.

We now show how \eqref{eq:first_order_relaxation_single} is a symmetry-reduced version of (Shor).
To expose the symmetry, we decompose $Y = yy^T + S$, where $S \succeq 0$ is a positive semidefinite slack variable.
Since $(A^T \otimes I_n)y = \mathrm{vec}(B)$, the implied constraint reduces to $(A^T \otimes I_n)S = 0$.
With this substitution, (Shor) becomes
\begin{align*}
\tag{Shor$'$}
\label{eq:shor_prime}
    \min \quad & \mathrm{tr}(G^{(k)} \, \mathrm{tr}_n(yy^T + S)) \\
    \subto \quad
    & \mathcal{L}(\mathrm{tr}_n(
    yy^T + S - (u \otimes c) y^T \\
    & \quad\quad - y (u \otimes c)^T + (uu^T) \otimes (cc^T)), \\
    & \quad\quad Q_0, Q_1) = 0, \\
    & Q_0 \succeq 0, \; Q_1 \succeq 0, \\
    & (A^T \otimes I_n)y = \mathrm{vec}(B), \\
    & (A^T \otimes I_n) S = 0, \\
    & S \succeq 0.
\end{align*}
The key observation to see the symmetry in the problem is that both the cost and the linear map constraint depend on $S$ only through $\mathrm{tr}_n(S) \in \mathbb{S}^{d+1}$, a much smaller matrix than $S \in \mathbb{S}^{n(d+1)}$.
Let $O(n) := \{Q \in \R^{n \times n} \mid Q^T Q = I_n\}$ denote the orthogonal group.

\begin{lemma}
\label{lem:shor_invariance}
\textnormal{(Shor$'$)} is invariant under the transformation
\begin{align}
    T_Q(S) = (I_{d+1} \otimes Q^T) \, S \, (I_{d+1} \otimes Q)
    \label{eq:symmetry_transform}
\end{align}
for any $Q \in O(n)$.
That is, if $(y, S)$ is feasible, then so is $(y, T_Q(S))$ with the same cost.
\end{lemma}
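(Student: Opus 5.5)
The plan is to check each constraint of (Shor$'$) and its cost separately, using two facts about the transformation $T_Q$: it preserves the partial trace, and it commutes with the boundary-condition operator. Throughout, $y$ is left unchanged, so every term depending only on $y$ ($yy^T$, $(u\otimes c)y^T$, $(uu^T)\otimes(cc^T)$, the constraint $(A^T\otimes I_n)y=\mathrm{vec}(B)$) is untouched. Only the terms involving $S$ need attention: the cost, the argument of $\mathcal{L}$, the constraint $(A^T\otimes I_n)S=0$, and $S\succeq 0$. The multipliers $Q_0,Q_1$ are kept fixed.

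\emph{Step 1: partial-trace invariance.} The main step is to show $\mathrm{tr}_n(T_Q(S))=\mathrm{tr}_n(S)$. Since $I_{d+1}\otimes Q$ is block diagonal with $n\times n$ diagonal blocks equal to $Q$, the $(i,j)$ block of $T_Q(S)$ is $Q^T S_{ij} Q$. By cyclicity of the trace, $\mathrm{tr}(Q^T S_{ij}Q)=\mathrm{tr}(S_{ij}QQ^T)=\mathrm{tr}(S_{ij})$, using $QQ^T=I_n$ for $Q\in O(n)$. With this identity, the cost $\mathrm{tr}(G^{(k)}\mathrm{tr}_n(yy^T+S))$ and the argument of $\mathcal{L}$ are literally unchanged. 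Hence the equality constraint still holds with the same $Q_0,Q_1$, and the cost is the same.

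\emph{Step 2: the boundary constraint on $S$.} By the mixed-product rule,
\[
(A^T\otimes I_n)(I_{d+1}\otimes Q^T)=A^T\otimes Q^T=(I_{\cdot}\otimes Q^T)(A^T\otimes I_n),
\]
where the identity in the last expression has the size of the column count of $A$. Therefore
\[
(A^T\otimes I_n)T_Q(S)=(I\otimes Q^T)\,(A^T\otimes I_n)S\,(I_{d+1}\otimes Q)=0 .
\]

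\emph{Step 3: positive semidefiniteness.} $T_Q(S)$ is a congruence of $S$ by the orthogonal matrix $I_{d+1}\otimes Q$, so $S\succeq 0$ implies $T_Q(S)\succeq 0$ and $T_Q(S)$ is symmetric.

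None of these steps is a real obstacle. The only point requiring care is the block-indexing convention in Step 1. The convention must match $y=\mathrm{vec}(\Gamma)=(\gamma_0,\ldots,\gamma_d)$, so that the $n\times n$ blocks are indexed by control points and $I_{d+1}\otimes Q$ acts blockwise. Under the column-major vec this is exactly the convention used in~\eqref{eq:partial_trace_identity}, so I expect it to go through directly.
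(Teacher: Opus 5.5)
Your proposal is correct and follows the paper's proof essentially step for step. The three ingredients match: the blockwise action $S_{ij}\mapsto Q^T S_{ij} Q$ with cyclicity of the trace to preserve $\mathrm{tr}_n(S)$, and hence the cost and the $\mathcal{L}$-constraint; the mixed-product rule to commute $I\otimes Q^T$ past $A^T\otimes I_n$ for the implied boundary constraint; and congruence for $S\succeq 0$.
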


Since (Shor$'$) is invariant under the action of $O(n)$ on $S$, and the feasible set is convex in $S$ for any fixed $y$, we can restrict $S$ to the
\textit{fixed-point subspace} defined as
$\{S : T_Q(S) = S \text{ for all } Q \in O(n)\}$
without loss of optimality.
This follows from the standard averaging argument for symmetry reduction of semidefinite programs; see,
e.g.,~\cite{gatermannParrilo2004} for details.
  
\begin{lemma}
\label{lem:fixed_point_subspace}
The fixed-point subspace of the transformation~\eqref{eq:symmetry_transform} is
\begin{align}
    \bar{S} = \frac{1}{n} \Lambda \otimes I_n, \quad \Lambda \in \mathbb{S}^{d+1},
    \label{eq:fixed_point_subspace}
\end{align}
i.e., the set of matrices satisfying 
$T_Q(S) = S$ for all $Q \in O(n)$.
\end{lemma}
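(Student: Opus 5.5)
The plan is to show two inclusions: every matrix of the form $\frac1n \Lambda\otimes I_n$ is fixed by all $T_Q$, and conversely every fixed matrix has this form.

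\emph{Sufficiency.} By the mixed-product rule,
\[
T_Q(\Lambda\otimes I_n) = (I_{d+1}\otimes Q^T)(\Lambda\otimes I_n)(I_{d+1}\otimes Q) = \Lambda\otimes Q^TQ = \Lambda\otimes I_n .
\]
Hence every matrix of the stated form is fixed. Such a matrix is symmetric exactly when $\Lambda$ is symmetric.

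\emph{Necessity.} I will work block-wise. Write $S$ as a $(d+1)\times(d+1)$ array of blocks $S_{ij}\in\R^{n\times n}$. Since $I_{d+1}\otimes Q$ is block diagonal with $Q$ on the diagonal, the transformation acts block-wise: $[T_Q(S)]_{ij} = Q^T S_{ij} Q$. The fixed-point condition therefore says that each block $S_{ij}$ commutes with every orthogonal $Q$, i.e. $S_{ij}Q = QS_{ij}$ for all $Q\in O(n)$.

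The key step is the classical fact that the commutant of $O(n)$ in $\R^{n\times n}$ consists only of the scalar matrices. I would prove it directly with two families of test matrices.

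\begin{itemize}
\item \emph{Diagonal sign flips.} Take $Q=\mathrm{diag}(\pm1)$, flipping only coordinate $k$. Commuting with this $Q$ forces the off-diagonal entries in row $k$ and column $k$ to vanish. Doing this for every $k$ shows $M$ is diagonal.
\item \emph{Permutation matrices.} Take $Q$ to be the transposition swapping coordinates $k$ and $l$. Commuting with it forces $M_{kk}=M_{ll}$. Hence $M=\lambda I_n$.
\end{itemize}

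This works for all $n\ge1$; for $n=1$ the statement is trivial.

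Applying this to every block gives $S_{ij}=\lambda_{ij} I_n$. Setting $\Lambda_{ij} = n\lambda_{ij}$ then gives $S = \frac1n\Lambda\otimes I_n$, and symmetry of $S$ gives $\lambda_{ij}=\lambda_{ji}$, so $\Lambda\in\mathbb{S}^{d+1}$.

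The factor $\frac1n$ is only a normalization, chosen so that $\mathrm{tr}_n(\bar S)=\Lambda$; I will remark on this in passing. The only step requiring any care is the commutant argument, which is elementary given the sign-flip and permutation test matrices.
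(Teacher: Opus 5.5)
Your proof is correct and follows the same route as the paper. Both arguments observe that $T_Q$ acts block-wise as $S_{ij}\mapsto Q^T S_{ij} Q$ and conclude that each block is a scalar multiple of $I_n$; the paper cites this last fact as standard and leaves out the easy reverse inclusion, while your sign-flip and transposition argument proves it directly (and shows that invariance under the finite group of signed permutation matrices already suffices).
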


\begin{theorem}
\label{thm:sdp_is_symmetry_reduced}
\eqref{eq:first_order_relaxation_single} is the symmetry-reduced version of \textnormal{(Shor)}.
Specifically, restricting $S$ in \textnormal{(Shor$'$)} to the fixed-point subspace~\eqref{eq:fixed_point_subspace} recovers~\eqref{eq:first_order_relaxation_single}.
\end{theorem}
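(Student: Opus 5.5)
The plan is to substitute the parametrization $S = \frac{1}{n}\Lambda \otimes I_n$ of \Cref{lem:fixed_point_subspace} into each constraint and the cost of (Shor$'$). Writing $y = \mathrm{vec}(\Gamma)$, I will then check that the resulting program is \eqref{eq:first_order_relaxation_single} under the change of variables $X = \Gamma^T\Gamma + \Lambda$. By \Cref{lem:shor_invariance} and the averaging argument, restricting $S$ to this subspace loses no optimality. So it suffices to show that the restricted (Shor$'$) and \eqref{eq:first_order_relaxation_single} have the same feasible set in $(\Gamma, X, Q_0, Q_1)$ and the same cost.

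\textbf{Step 1 (partial trace).} Since the blocks of $\frac1n\Lambda\otimes I_n$ are $\frac1n \Lambda_{ij} I_n$, we get $\mathrm{tr}_n(\frac1n \Lambda \otimes I_n) = \Lambda$. Combining this with~\eqref{eq:partial_trace_identity} gives
\begin{align}
\mathrm{tr}_n(yy^T + S) = \Gamma^T\Gamma + \Lambda = X. \nonumber
\end{align}
The same identity turns the terms $(u\otimes c)y^T$, $y(u\otimes c)^T$ and $(uu^T)\otimes(cc^T)$ into $u(\Gamma^T c)^T$, $(\Gamma^Tc)u^T$ and $(c^Tc)uu^T$. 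Hence the cost becomes $\mathrm{tr}(G^{(k)}X)$, and the $\mathcal{L}$-constraint becomes exactly the one in \eqref{eq:first_order_relaxation_single}. The constraints $Q_0, Q_1 \succeq 0$ are unchanged, and $(A^T\otimes I_n)y = \mathrm{vec}(B)$ is $\Gamma A = B$.

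\textbf{Step 2 (PSD constraint).} The eigenvalues of $\frac1n\Lambda\otimes I_n$ are those of $\Lambda/n$, each repeated $n$ times. So $S\succeq 0$ iff $\Lambda \succeq 0$, which is $X \succeq \Gamma^T\Gamma$. By~\eqref{eq:schur_lmi} this is the Schur-complement LMI of \eqref{eq:first_order_relaxation_single}.

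\textbf{Step 3 (implied constraint).} This is the step I expect to need the most care. Using the mixed-product rule,
\begin{align}
(A^T\otimes I_n)(\Lambda\otimes I_n) = (A^T\Lambda)\otimes I_n, \nonumber
\end{align}
and this vanishes iff $A^T\Lambda = 0$, i.e.\ $\Lambda A = 0$. Given $\Gamma A = B$, the condition $\Lambda A = 0$ is equivalent to $XA = \Gamma^T\Gamma A + \Lambda A = \Gamma^T B$, which is the tightening constraint of \eqref{eq:first_order_relaxation_single}.

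\textbf{Step 4 (bijection).} The map $(y,\Lambda)\mapsto(\Gamma, X=\Gamma^T\Gamma+\Lambda)$ is a bijection with inverse $\Lambda = X - \Gamma^T\Gamma$. By Steps 1--3 it carries the feasible set of the restricted (Shor$'$) onto that of \eqref{eq:first_order_relaxation_single} and preserves the cost, with $Q_0,Q_1$ carried along unchanged. The two programs are therefore equivalent. Note that (Shor$'$) itself is equivalent to (Shor) via $Y = yy^T+S$: the Schur complement shows that the condition $\bmat{1 & y^T\\ y & Y}\succeq 0$ is equivalent to $S\succeq 0$, and the implied constraint reduces to $(A^T\otimes I_n)S=0$ as noted in the text. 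So the chain (Shor) $\to$ (Shor$'$) $\to$ restricted (Shor$'$) $\to$ \eqref{eq:first_order_relaxation_single} is complete.
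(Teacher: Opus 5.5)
Your proof is correct and follows the paper's argument essentially step for step. You substitute $S = \frac{1}{n}\Lambda\otimes I_n$, set $X = \Gamma^T\Gamma + \Lambda$, and then check three things: the partial-trace identity $\mathrm{tr}_n(yy^T+S)=X$; the reduction of $(A^T\otimes I_n)S=0$ to $\Lambda A=0$, equivalently $XA=\Gamma^T B$; and $S\succeq 0 \iff \Lambda\succeq 0$ from the spectrum of a Kronecker product. Your additional checks are details the paper leaves implicit but are correct: the $c$-dependent terms, both directions of the equivalence in Step 3, the explicit bijection, and the (Shor)$\leftrightarrow$(Shor$'$) equivalence.
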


The symmetry reduction thus replaces the PSD variable $S \in \mathbb{S}^{n(d+1)}$ with $\Lambda \in \mathbb{S}^{d+1}$, reducing the size of the PSD cone by a factor of $n$.

\subsection{Proofs for the Theoretical Results}
\label{sec:proofs_theoretical_results}

In the proofs below, $Q_0$ and $Q_1$ are carried over unchanged between the programs, so we omit them from feasible points.

\begin{proof}[Proof of \Cref{thm:lifted_monotonicity}]
Start by noticing that the cost in \eqref{eq:nonconvex_path_planning_lifted} is unchanged by $\rho$, and thus all \eqref{eq:nonconvex_path_planning_lifted} share the same cost (to see this, consider introducing $X = \Gamma^T \Gamma + V^T V$).
Let
$\mathcal{F}_{\text{P}_\rho}$
be the feasible set of \eqref{eq:nonconvex_path_planning_lifted}, and let
$\Pi_\rho$ be the operator that projects a set onto the variables from \eqref{eq:nonconvex_path_planning_lifted}.
As the cost is shared across \eqref{eq:nonconvex_path_planning_lifted}, it is enough to show
$\mathcal{F}_{\text{P}_\rho}
\subseteq
\Pi_\rho(
\mathcal{F}_{\text{P}_{\rho+1}}
)
$
to show that the cost is monotonically decreasing as we increase $\rho$.
Let
$(\Gamma', V')$ be any feasible point to \eqref{eq:nonconvex_path_planning_lifted}, where $V' \in \R^{\rho \times (d+1)}$.
Then $(\Gamma', (V', 0))$ is feasible for $(\text{P}_{\rho+1})$, where $(V', 0) \in \R^{(\rho+1) \times (d+1)}$.
\end{proof}

\begin{proof}[Proof of \Cref{thm:lower_bound}]
Start by defining a semidefinite relaxation for the lifted program \eqref{eq:nonconvex_path_planning_lifted} for each $\rho \in \N$ (by following the same procedure in \cref{sec:first_order_relaxation}), but with $X = \Gamma^T \Gamma + V^T V$, as:
\begin{align*}
\tag{SDP$_\rho$}
\label{eq:sdpr}
    \min \quad &\mathrm{tr}(G^{(k)} \, X) \\
    \subto
    \quad& \mathcal{L}(X - (\Gamma^T c) u^T - u (\Gamma^T c)^T \\
    &\qquad + (c^T c) uu^T, Q_0, Q_1) = 0, \\
    \quad & Q_0 \succeq 0, \; Q_1 \succeq 0, \\
    & \Gamma A = B, \quad V A = 0 \\
    & X \succeq \Gamma^T \Gamma + V^T V.
\end{align*}
The collision avoidance constraint in~(SDP$_\rho$) is identical to the one in~\eqref{eq:first_order_relaxation_single}, and the last constraint can equivalently be expressed as an LMI via the Schur complement, as in~\eqref{eq:schur_lmi}.
Because (SDP$_\rho$) is a convex relaxation of \eqref{eq:nonconvex_path_planning_lifted} we immediately have
$c_{\text{SDP}_\rho}^* \leq
c_{\text{P}_\rho}^*$.
It remains to show
$c_{\text{SDP}_\rho}^* =
c_{\text{SDP}}^*$ for all $\rho$.
Let
$\mathcal{F}_{\text{SDP}_\rho}$
be the feasible set of (SDP$_\rho$)
and
$
\Pi(
\mathcal{F}_{\text{SDP}_{\rho}}
)
$
its projection onto the variables from \eqref{eq:first_order_relaxation_single}.
The cost for \eqref{eq:first_order_relaxation_single} and (SDP$_\rho$) is the same, so we will show that
$\mathcal{F}_{\text{SDP}}
=
\Pi(
\mathcal{F}_{\text{SDP}_{\rho}}
)
$.
First, let $(\Gamma', X')$ be any feasible point to \eqref{eq:first_order_relaxation_single}.
Then $(\Gamma', V' = 0, X')$ is feasible to (SDP$_\rho$):
the collision avoidance, tightening, and boundary constraints for $\Gamma$ carry over directly, $V' A = 0$ is trivially satisfied, and $X' \succeq (\Gamma')^T \Gamma' = (\Gamma')^T \Gamma' + 0$ satisfies the last constraint in~(SDP$_\rho$).
For the other direction,
let $(\Gamma', V', X')$ be feasible to (SDP$_\rho$), where $\Gamma' \in \R^{n \times (d+1)}$ and $V' \in \R^{\rho \times (d+1)}$.
We show that $(\Gamma', X')$ is feasible to \eqref{eq:first_order_relaxation_single}.
The boundary constraint $\Gamma' A = B$ and tightening constraint $X' A = (\Gamma')^T B$ carry over directly from~(SDP$_\rho$).
Finally, from the Schur complement of the LMI in~(SDP$_\rho$), we have
$X' \succeq (\Gamma')^T \Gamma' + (V')^T V' \succeq (\Gamma')^T \Gamma'$,
which satisfies the constraint in~\eqref{eq:first_order_relaxation_single}.
\end{proof}

\begin{proof}[Proof of \Cref{thm:geometric_interpretation}]
    Let $(\Gamma^*, X^*)$ be an optimal solution to~\eqref{eq:first_order_relaxation_single}, and let $\rho = \text{rank}(X^* - (\Gamma^*)^T\Gamma^*)$.
    First, consider the case when $\rho = 0$, i.e.\ $X^* = (\Gamma^*)^T \Gamma^*$.
    In this case, $\Gamma^*$ is immediately feasible for $(\text{P})$,
    and since
    $c_\text{SDP}^*$ provides a lower bound that is attained by this feasible point, it must be optimal.
    Next, consider the case when $\rho > 0$.
    From \eqref{eq:schur_lmi} we have
    $X^* - (\Gamma^*)^T \Gamma^* \succeq 0$,
    so we can factorize
    \begin{align*}
        X^* - (\Gamma^*)^T \Gamma^* = (V^*)^T V^*
    \end{align*}
    where $V^* \in \R^{\rho \times (d+1)}$, $V^* \neq 0$.
    By construction,
    $(\Gamma^*, V^*)^T (\Gamma^*, V^*) = (\Gamma^*)^T \Gamma^* + (V^*)^T V^* = X^*$.
    We now show that
    $(\Gamma^*, V^*)$ is optimal for $(\text{P}_\rho)$.
    First, we show feasibility.
    The collision avoidance constraint in $(\text{P}_\rho)$ is satisfied because
    $(\Gamma^*, V^*)^T (\Gamma^*, V^*) = X^*$
    so the constraint reduces to the one in~\eqref{eq:first_order_relaxation_single}.
    For the boundary constraint, we have from \eqref{eq:first_order_relaxation_single} that $\Gamma^* A = B$ and $X^*A = (\Gamma^*)^T B$, which gives
    \begin{align*}
        (\Gamma^*)^T B &= X^*A \\
        &= (\Gamma^*)^T \Gamma^* A + (V^*)^T V^* A \\
        &= (\Gamma^*)^T B + (V^*)^T V^* A,
    \end{align*}
    so $(V^*)^T V^* A = 0$, implying $V^* A = 0$.
    This shows that the tightening constraints $X^* A = (\Gamma^*)^T B$ in~\eqref{eq:first_order_relaxation_single} are necessary to ensure that $V^*$ satisfies the boundary conditions $V^* A = 0$ in~\eqref{eq:nonconvex_path_planning_lifted}.
    Finally, we show optimality.
    By \Cref{thm:lower_bound}, we have
    $c_\text{SDP}^* \leq c_{\text{P}_\rho}^*$.
    By construction,
    $c_\text{SDP}^* = \mathrm{tr}(G^{(k)} X^*) = \mathrm{tr}(G^{(k)} (
    (\Gamma^*)^T\Gamma^* + (V^*)^T V^*))$.
    As $(\Gamma^*, V^*)$ is feasible for $(\text{P}_\rho)$ and attains the lower bound, it must be optimal.
\end{proof}

\begin{proof}[Proof of \Cref{lem:rho_bound}]
Let $M = X - \Gamma^T\Gamma$ for an optimal $(\Gamma, X)$, so $\rho = \operatorname{rank}(M)$. By the boundary constraint $\Gamma A = B$ and the tightening constraint $X A = \Gamma^T B$, we have $MA = XA - \Gamma^T\Gamma A = 0$, so $\operatorname{col}(A) \subseteq \ker M$ and $\rho \le (d+1) - \operatorname{rank}(A) = (d+1) - 2(\ell+1)$, as $\operatorname{rank}(A) = 2(\ell+1)$ since the boundary conditions are independent Hermite interpolation conditions~\cite{stoerNumericalAnalysis}.
\end{proof}

\begin{proof}[Proof of \Cref{thm:tightness}]
    We start by showing necessity.
    Suppose
    $c_\text{P}^* =
    c_{\text{P}_\rho}^*
    $ for all $\rho \in \N$.
    Then \Cref{thm:geometric_interpretation} immediately gives us a $\rho$ such that
    $
    c_\text{SDP}^* =
    c_{\text{P}_\rho}^* =
    c_\text{P}^*$.
    Next, we show sufficiency.
    Suppose
    $c_\text{SDP}^* =
    c_{\text{P}}^*$.
    \Cref{thm:lower_bound} gives
    $
    c_{\text{P}}^* =
    c_\text{SDP}^* \leq
    c_{\text{P}_\rho}^*$ for all $\rho \in \N$.
    The reverse inequality follows immediately from \Cref{thm:lifted_monotonicity}, and equality must hold.
\end{proof}

\begin{proof}[Proof of \Cref{thm:uniqueness}]
    Let $(\Gamma, X)$ be any optimal solution to~\eqref{eq:first_order_relaxation_single}. We consider two cases.
    First, suppose $X = \Gamma^T\Gamma$.
    Then $\Gamma$ is feasible for~\eqref{eq:nonconvex_path_planning} with cost
    $\mathrm{tr}(G^{(k)} \Gamma^T\Gamma) = c_\text{SDP}^* = c_\text{P}^*$,
    so $\Gamma$ achieves the lower bound of the relaxation and must therefore also be optimal for~\eqref{eq:nonconvex_path_planning}.
    By the uniqueness assumption (with $\rho = 0$),
    $\Gamma = \Gamma^*$, and therefore $X = (\Gamma^*)^T\Gamma^*$.
    Second, suppose $X \neq \Gamma^T\Gamma$.
    Let $\rho = \operatorname{rank}(X - \Gamma^T\Gamma) > 0$
    and factor $V^TV = X - \Gamma^T\Gamma$, where $V \neq 0$ due to $\rho > 0$.
    By \Cref{thm:geometric_interpretation}, $(\Gamma, V)$ is optimal for~\eqref{eq:nonconvex_path_planning_lifted}.
    By the uniqueness assumption, $(\Gamma, V) = (\Gamma^*, 0)$,
    which contradicts $V \neq 0$.
    Therefore, every optimal solution to~\eqref{eq:first_order_relaxation_single} equals
    $(\Gamma^*, (\Gamma^*)^T\Gamma^*)$.
\end{proof}

\subsection{Proofs for Symmetry Reduction}
\label{sec:symmetry_proofs}
\begin{proof}[Proof of \Cref{lem:shor_invariance}]
Let $\tilde{S} = (I_{d+1} \otimes Q^T) S (I_{d+1} \otimes Q)$.
Writing $S$ in terms of its $n \times n$ blocks $S_{ij}$, the transformation acts as $S_{ij} \mapsto Q^T S_{ij} Q$.
By the cyclic property of the trace, $\mathrm{tr}(Q^T S_{ij} Q) = \mathrm{tr}(S_{ij})$, so the partial trace is invariant under this transformation, $\mathrm{tr}_n(\tilde{S}) = \mathrm{tr}_n(S)$, and the cost and linear map constraint are preserved.
For the boundary constraint, the mixed-product property of Kronecker products gives
$(A^T \otimes I_n)(I_{d+1} \otimes Q^T) = A^T \otimes Q^T = (I \otimes Q^T)(A^T \otimes I_n)$
, so $(A^T \otimes I_n) \tilde{S} = (I_p \otimes Q^T)(A^T \otimes I_n)S(I_{d+1} \otimes Q) = 0$
(by the implied boundary constraint).
Finally, $\tilde{S} \succeq 0$ since it is a congruence of $S \succeq 0$.
\end{proof}

\begin{proof}[Proof of \Cref{lem:fixed_point_subspace}]
The transformation acts independently on each $n \times n$ block as $\bar{S}_{ij} \mapsto Q^T \bar{S}_{ij} Q$, so invariance requires $Q^T \bar{S}_{ij} Q =
\bar{S}_{ij}$ for all $Q \in O(n)$.
It is a standard result that this implies $\bar{S}_{ij} = (1/n)\lambda_{ij} I_n$ for some $\lambda_{ij} \in \R$.
Collecting these scalars into $\Lambda = [\lambda_{ij}] \in \mathbb{S}^{d+1}$ gives $\bar{S} = \frac{1}{n}\Lambda \otimes I_n$.
\end{proof}

\begin{proof}[Proof of \Cref{thm:sdp_is_symmetry_reduced}]
We substitute $S = \frac{1}{n} \Lambda \otimes I_n$ into (Shor$'$) and define $X = \Gamma^T\Gamma + \Lambda$.
Since $\mathrm{tr}_n(S) = \Lambda$ and $\mathrm{tr}_n(yy^T) = \Gamma^T\Gamma$, we have
$\mathrm{tr}_n(yy^T + S) = X$,
recovering the cost and linear map constraint of~\eqref{eq:first_order_relaxation_single}.
The boundary condition $\Gamma A = B$ is unchanged, and $(A^T \otimes I_n)S = 0$ becomes $\Lambda A = 0$, so $XA = (\Gamma^T\Gamma + \Lambda)A = \Gamma^T B$.
Finally,
$S \succeq 0$ if and only if $\Lambda \succeq 0$, since the eigenvalues of $\Lambda \otimes I_n$ are exactly those of $\Lambda$, each with multiplicity $n$, which is equivalent to $X \succeq \Gamma^T\Gamma$, i.e., $\bmat{I_n & \Gamma \\ \Gamma^T & X} \succeq 0$.
\end{proof}

\subsection{Bézier Curves and the Bernstein Basis}
\label{sec:bezier_curves}
In our implementation, we use the standard Bernstein basis, which differs from the scaled Bernstein basis used in the main text by a binomial scaling.
The $i$-th standard Bernstein basis polynomial of degree $d$ is defined as
\begin{align}
    B_i^d(s) := \binom{d}{i}s^i(1-s)^{d-i},
    \label{eq:bernstein_poly_def}
\end{align}
for $i = 0, \ldots, d$ and $s \in [0,1]$.
We use the following identities, which follow from the basis definition:
\begin{align}
    s(1-s) B^d_i(s) &= \frac{(d+1-i)(i+1)}{(d+1)(d+2)} B^{d+2}_{i+1}(s),
    \label{eq:bernstein_s_sm1_rule} \\
    B_i^d(s) B_j^d(s) &= \frac{\binom{d}{i} \binom{d}{j}}{\binom{2d}{i+j}} B^{2d}_{i+j}(s).
    \label{eq:bernstein_product_rule}
\end{align}

\subsubsection{Curve Costs for Bézier Curves}
\label{sec:gram_matrix}

Derivatives of Bézier curves are Bézier curves of reduced degree:
\begin{align}
    \gamma^{(k)}(s) = \frac{d!}{(d-k)!} \, \Gamma D_d^{(k)} b_{d-k}(s),
    \label{eq:bezier_derivative}
\end{align}
where $D_d^{(k)} := D_d D_{d-1} \cdots D_{d-k+1} \in \R^{(d+1) \times (d-k+1)}$ and $D_d \in \R^{(d+1) \times d}$ is the forward difference matrix with $-1$ on the diagonal and $1$ on the subdiagonal.
The Gram matrix $G_d$ from ~\eqref{eq:gram_matrix_def} has entries
\begin{align}
    [G_d]_{ij} = \frac{1}{2d+1} \frac{\binom{d}{i}\binom{d}{j}}{\binom{2d}{i+j}},
    \label{eq:gram_matrix_bezier}
\end{align}
which follows from the product rule~\eqref{eq:bernstein_product_rule} and $\int_0^1 B_i^{d}(s) \, ds = 1/(d+1)$.
Substituting~\eqref{eq:bezier_derivative} into the cost integral gives $\int_0^1 \|\gamma^{(k)}(s)\|_2^2 \, ds = \mathrm{tr}(G^{(k)} \Gamma^T \Gamma)$, where
\begin{align}
    G^{(k)} := \left(\frac{d!}{(d-k)!}\right)^2 D_d^{(k)} \, G_{d-k} \, (D_d^{(k)})^T.
    \label{eq:gram_matrix_kth_derivative}
\end{align}

\subsubsection{Collision Avoidance for Bézier Curves}
\label{sec:bernstein_ids}
We now write the collision-avoidance condition from \Cref{lem:collision_avoidance_bernstein} for Bézier curves, i.e., in the standard Bernstein basis. The derivation follows exactly as in the main body, with the standard Bernstein basis replacing the scaled one.

The partition of unity property of the standard Bernstein basis immediately gives $u = e$:
\begin{align}
    \sum_{i=0}^d B_i^d(s) = e^T b_d(s) = 1.
    \label{eq:bezier_sum_one}
\end{align}
From the product rule~\eqref{eq:bernstein_product_rule}, the map $\mathcal{S}_d: \mathbb{S}^{d+1} \rightarrow \R^{2d+1}$ is given by:
\begin{align}
    [\mathcal{S}_d(M)]_k = \sum_{\substack{i+j=k}}
    \frac{\binom{d}{i} \binom{d}{j}}{\binom{2d}{k}} M_{ij}.
\end{align}

Similarly, using \eqref{eq:bernstein_s_sm1_rule}, $\mathcal{T}_d: \mathbb{S}^d \rightarrow \R^{2d+1}$ is given by:
\begin{align}
    [\mathcal{T}_d(M)]_k = \begin{cases}
        \frac{k(2d-k)}{2d(2d-1)} [\mathcal{S}_{d-1}(M)]_{k-1} & \text{if } 1 \leq k \leq 2d-1, \\
        0 & \text{otherwise}.
    \end{cases}
\end{align}

\section*{Acknowledgement}
\label{sec:acknowledgement}

This material is based upon work supported by the National Science Foundation under ``Center: NSF Engineering Research Center for Human AugmentatioN via Dexterity (HAND)''.
This work was also supported by Amazon Inc., under Awards \#2D-19158851, \#2D-19158853, and \#2D-15694048, and by the Aker Scholarship.
Any opinions, findings, conclusions, or recommendations expressed in this material are those of the authors and do not necessarily reflect the views of the funding agencies.

\bibliographystyle{IEEEtran}
\bibliography{IEEEabrv,ref}

\begin{thebibliography}{10}
\providecommand{\url}[1]{#1}
\csname url@rmstyle\endcsname
\providecommand{\newblock}{\relax}
\providecommand{\bibinfo}[2]{#2}
\providecommand\BIBentrySTDinterwordspacing{\spaceskip=0pt\relax}
\providecommand\BIBentryALTinterwordstretchfactor{4}
\providecommand\BIBentryALTinterwordspacing{\spaceskip=\fontdimen2\font plus
\BIBentryALTinterwordstretchfactor\fontdimen3\font minus
  \fontdimen4\font\relax}
\providecommand\BIBforeignlanguage[2]{{%
\expandafter\ifx\csname l@#1\endcsname\relax
\typeout{** WARNING: IEEEtran.bst: No hyphenation pattern has been}%
\typeout{** loaded for the language `#1'. Using the pattern for}%
\typeout{** the default language instead.}%
\else
\language=\csname l@#1\endcsname
\fi
#2}}

\bibitem{chang2005shortest}
E.-C. Chang, S.~W. Choi, D.~Kwon, H.~Park, and C.~K. Yap, ``Shortest path
  amidst disc obstacles is computable,'' in \emph{Proceedings of the
  Twenty-First Annual Symposium on Computational Geometry}, 2005, pp. 116--125.

\bibitem{canny1987new}
J.~Canny and J.~Reif, ``New lower bound techniques for robot motion planning
  problems,'' in \emph{28th Annual Symposium on Foundations of Computer Science
  ({SFCS} 1987)}.\hskip 1em plus 0.5em minus 0.4em\relax IEEE, 1987, pp.
  49--60.

\bibitem{toth2017handbook}
C.~D. Toth, J.~O'Rourke, and J.~E. Goodman, \emph{Handbook of discrete and
  computational geometry}.\hskip 1em plus 0.5em minus 0.4em\relax CRC Press,
  2017.

\bibitem{hargraves1987direct}
C.~R. Hargraves and S.~W. Paris, ``Direct trajectory optimization using
  nonlinear programming and collocation,'' \emph{Journal of Guidance, Control,
  and Dynamics}, vol.~10, no.~4, pp. 338--342, 1987.

\bibitem{zucker2013chomp}
M.~Zucker, N.~Ratliff, A.~D. Dragan, M.~Pivtoraiko, M.~Klingensmith, C.~M.
  Dellin, J.~A. Bagnell, and S.~S. Srinivasa, ``{CHOMP}: Covariant
  {Hamiltonian} optimization for motion planning,'' \emph{The International
  Journal of Robotics Research}, vol.~32, no. 9-10, pp. 1164--1193, 2013.

\bibitem{tracy2023differentiable}
K.~Tracy, T.~A. Howell, and Z.~Manchester, ``Differentiable collision detection
  for a set of convex primitives,'' in \emph{2023 {IEEE} International
  Conference on Robotics and Automation ({ICRA})}.\hskip 1em plus 0.5em minus
  0.4em\relax IEEE, 2023, pp. 3663--3670.

\bibitem{dai2023certifiedpolyhedraldecompositionscollisionfree}
H.~Dai, A.~Amice, P.~Werner, A.~Zhang, and R.~Tedrake, ``Certified polyhedral
  decompositions of collision-free configuration space,'' \emph{The
  International Journal of Robotics Research}, vol.~43, no.~9, pp. 1322--1341,
  2024.

\bibitem{werner2024faster}
P.~Werner, T.~Cohn, R.~H. Jiang, T.~Seyde, M.~Simchowitz, R.~Tedrake, and
  D.~Rus, ``Faster algorithms for growing collision-free convex polytopes in
  robot configuration space,'' \emph{arXiv preprint arXiv:2410.12649}, 2024.

\bibitem{deits2015computing}
R.~Deits and R.~Tedrake, ``Computing large convex regions of obstacle-free
  space through semidefinite programming,'' in \emph{Algorithmic Foundations of
  Robotics {XI}: Selected Contributions of the Eleventh International Workshop
  on the Algorithmic Foundations of Robotics}.\hskip 1em plus 0.5em minus
  0.4em\relax Springer, 2015, pp. 109--124.

\bibitem{deits2015efficient}
------, ``Efficient mixed-integer planning for {UAVs} in cluttered
  environments,'' in \emph{2015 {IEEE} International Conference on Robotics and
  Automation ({ICRA})}.\hskip 1em plus 0.5em minus 0.4em\relax IEEE, 2015, pp.
  42--49.

\bibitem{marcucci2023motion}
T.~Marcucci, M.~Petersen, D.~Von~Wrangel, and R.~Tedrake, ``Motion planning
  around obstacles with convex optimization,'' \emph{Science Robotics}, vol.~8,
  no.~84, p. eadf7843, 2023.

\bibitem{marcucci2025biconvex}
T.~Marcucci, M.~Halm, W.~Yang, D.~Lee, and A.~D. Marchese, ``A biconvex method
  for minimum-time motion planning through sequences of convex sets,''
  \emph{arXiv preprint arXiv:2504.18978}, 2025.

\bibitem{werner2025superfast}
P.~Werner, R.~Cheng, T.~Stewart, R.~Tedrake, and D.~Rus, ``Superfast
  configuration-space convex set computation on {GPUs} for online motion
  planning,'' \emph{arXiv preprint arXiv:2504.10783}, 2025.

\bibitem{kondo2026mighty}
K.~Kondo, Y.~Wu, V.~Kumar, and J.~P. How, ``{MIGHTY}: {Hermite} spline-based
  efficient trajectory planning,'' \emph{{IEEE} Robotics and Automation
  Letters}, vol.~11, no.~6, pp. 6664--6671, 2026.

\bibitem{lavalle1998rapidly}
S.~LaValle, ``Rapidly-exploring random trees: A new tool for path planning,''
  \emph{Research Report 9811}, 1998.

\bibitem{kavraki2002probabilistic}
L.~E. Kavraki, P.~Svestka, J.-C. Latombe, and M.~H. Overmars, ``Probabilistic
  roadmaps for path planning in high-dimensional configuration spaces,''
  \emph{{IEEE} Transactions on Robotics and Automation}, vol.~12, no.~4, pp.
  566--580, 2002.

\bibitem{karaman2011sampling}
S.~Karaman and E.~Frazzoli, ``Sampling-based algorithms for optimal motion
  planning,'' \emph{The International Journal of Robotics Research}, vol.~30,
  no.~7, pp. 846--894, 2011.

\bibitem{richter2016polynomial}
C.~Richter, A.~Bry, and N.~Roy, ``Polynomial trajectory planning for aggressive
  quadrotor flight in dense indoor environments,'' in \emph{Robotics Research:
  The 16th International Symposium {ISRR}}.\hskip 1em plus 0.5em minus
  0.4em\relax Springer, 2016, pp. 649--666.

\bibitem{ortiz2024idb}
J.~Ortiz-Haro, W.~H{\"o}nig, V.~N. Hartmann, M.~Toussaint, and L.~Righetti,
  ``{iDb-RRT}: Sampling-based kinodynamic motion planning with motion
  primitives and trajectory optimization,'' in \emph{2024 {IEEE/RSJ}
  International Conference on Intelligent Robots and Systems ({IROS})}.\hskip
  1em plus 0.5em minus 0.4em\relax IEEE, 2024, pp. 10\,702--10\,709.

\bibitem{stoneman2014embedding}
S.~Stoneman and R.~Lampariello, ``Embedding nonlinear optimization in {RRT*}
  for optimal kinodynamic planning,'' in \emph{53rd {IEEE} Conference on
  Decision and Control}.\hskip 1em plus 0.5em minus 0.4em\relax IEEE, 2014, pp.
  3737--3744.

\bibitem{parrilo2003semidefinite}
P.~A. Parrilo, ``Semidefinite programming relaxations for semialgebraic
  problems,'' \emph{Mathematical Programming}, vol.~96, no.~2, pp. 293--320,
  2003.

\bibitem{lasserre2001global}
J.~B. Lasserre, ``Global optimization with polynomials and the problem of
  moments,'' \emph{{SIAM} Journal on Optimization}, vol.~11, no.~3, pp.
  796--817, 2001.

\bibitem{yang2020teaser}
H.~Yang, J.~Shi, and L.~Carlone, ``{TEASER}: Fast and certifiable point cloud
  registration,'' \emph{{IEEE} Transactions on Robotics}, vol.~37, no.~2, pp.
  314--333, 2020.

\bibitem{rosen2019se}
D.~M. Rosen, L.~Carlone, A.~S. Bandeira, and J.~J. Leonard, ``{SE-Sync}: A
  certifiably correct algorithm for synchronization over the special
  {Euclidean} group,'' \emph{The International Journal of Robotics Research},
  vol.~38, no. 2-3, pp. 95--125, 2019.

\bibitem{papalia2024certifiably}
A.~Papalia, A.~Fishberg, B.~W. O'Neill, J.~P. How, D.~M. Rosen, and J.~J.
  Leonard, ``Certifiably correct range-aided {SLAM},'' \emph{{IEEE}
  Transactions on Robotics}, vol.~40, pp. 4265--4283, 2024.

\bibitem{graesdal2024towards}
B.~P. Graesdal, S.~Y.~C. Chia, T.~Marcucci, S.~Morozov, A.~Amice, P.~A.
  Parrilo, and R.~Tedrake, ``Towards tight convex relaxations for contact-rich
  manipulation,'' \emph{arXiv preprint arXiv:2402.10312}, 2024.

\bibitem{kang2025global}
S.~Kang, G.~Liu, and H.~Yang, ``Global contact-rich planning with sparsity-rich
  semidefinite relaxations,'' \emph{arXiv preprint arXiv:2502.02829}, 2025.

\bibitem{shor1987quadratic}
N.~Z. Shor, ``Quadratic optimization problems,'' \emph{Soviet Journal of
  Computer and Systems Sciences}, vol.~25, pp. 1--11, 1987.

\bibitem{gillSNOPT2005}
P.~E. Gill, W.~Murray, and M.~A. Saunders, ``{SNOPT}: An {SQP} algorithm for
  large-scale constrained optimization,'' \emph{{SIAM} Review}, vol.~47, no.~1,
  pp. 99--131, 2005.

\bibitem{waechterIPOPT2006}
A.~W{\"a}chter and L.~T. Biegler, ``On the implementation of an interior-point
  filter line-search algorithm for large-scale nonlinear programming,''
  \emph{Mathematical Programming}, vol. 106, no.~1, pp. 25--57, 2006.

\bibitem{teng2023convex}
S.~Teng, A.~Jasour, R.~Vasudevan, and M.~Ghaffari, ``Convex geometric motion
  planning on {Lie} groups via moment relaxation,'' \emph{arXiv preprint
  arXiv:2305.13565}, 2023.

\bibitem{haring2024trajectory}
H.~H{\"a}ring, D.~Gramlich, C.~Ebenbauer, and C.~W. Scherer, ``Trajectory
  generation for the unicycle model using semidefinite relaxations,''
  \emph{European Journal of Control}, vol.~80, p. 101063, 2024.

\bibitem{vega2025convex}
F.~Vega, J.~Arrizabalaga, R.~Watson, and Z.~Manchester, ``Convex maneuver
  planning for spacecraft collision avoidance,'' \emph{arXiv preprint
  arXiv:2510.19058}, 2025.

\bibitem{mahajan2026tinysdp}
I.~Mahajan, J.~Arrizabalaga, A.~Grillo, F.~Vega, J.~Anderson, Z.~Manchester,
  and B.~Plancher, ``Tinysdp: Real time semidefinite optimization for
  certifiable and agile edge robotics,'' \emph{arXiv preprint
  arXiv:2605.13748}, 2026.

\bibitem{yang2025new}
L.~Yang, T.~Marcucci, P.~A. Parrilo, and R.~Tedrake, ``A new semidefinite
  relaxation for linear and piecewise-affine optimal control with time
  scaling,'' in \emph{2025 {IEEE} International Conference on Robotics and
  Automation ({ICRA})}.\hskip 1em plus 0.5em minus 0.4em\relax IEEE, 2025, pp.
  9228--9235.

\bibitem{marcucci2024shortest}
T.~Marcucci, J.~Umenberger, P.~A. Parrilo, and R.~Tedrake, ``Shortest paths in
  graphs of convex sets,'' \emph{{SIAM} Journal on Optimization}, vol.~34,
  no.~1, pp. 507--532, 2024.

\bibitem{dong2026fast}
S.~Dong, Z.~Shen, R.~Reiter, H.~Huang, B.~Gao, H.~Chen, and W.-H. Chen, ``A
  fast semidefinite convex relaxation for optimal control problems with
  spatio-temporal constraints,'' \emph{arXiv preprint arXiv:2601.03055}, 2026.

\bibitem{wei2026global}
Z.~Wei and F.~D{\"u}mbgen, ``Global sampling-based trajectory optimization for
  contact-rich manipulation via {KernelSOS},'' \emph{arXiv preprint
  arXiv:2604.27175}, 2026.

\bibitem{szeg1939orthogonal}
G.~Szeg{\H{o}}, \emph{Orthogonal polynomials}.\hskip 1em plus 0.5em minus
  0.4em\relax American Mathematical Society, 1939, vol.~23.

\bibitem{farouki1988algorithms}
R.~T. Farouki and V.~T. Rajan, ``Algorithms for polynomials in {Bernstein}
  form,'' \emph{Computer Aided Geometric Design}, vol.~5, no.~1, pp. 1--26,
  1988.

\bibitem{permenter2017reduction}
F.~N. Permenter, ``Reduction methods in semidefinite and conic optimization,''
  Ph.D. dissertation, Massachusetts Institute of Technology, 2017.

\bibitem{boydConvexOptimization}
S.~Boyd and L.~Vandenberghe, \emph{Convex Optimization}.\hskip 1em plus 0.5em
  minus 0.4em\relax Cambridge University Press, 2004.

\bibitem{stoerNumericalAnalysis}
J.~Stoer and R.~Bulirsch, \emph{Introduction to Numerical Analysis}, 3rd~ed.,
  ser. Texts in Applied Mathematics.\hskip 1em plus 0.5em minus 0.4em\relax
  Springer, 2002, vol.~12.

\bibitem{mellinger2011minimum}
D.~Mellinger and V.~Kumar, ``Minimum snap trajectory generation and control for
  quadrotors,'' in \emph{{IEEE} International Conference on Robotics and
  Automation}, 2011, pp. 2520--2525.

\bibitem{webb2013kinodynamic}
D.~J. Webb and J.~Van Den~Berg, ``Kinodynamic {RRT*}: Asymptotically optimal
  motion planning for robots with linear dynamics,'' in \emph{2013 {IEEE}
  International Conference on Robotics and Automation}.\hskip 1em plus 0.5em
  minus 0.4em\relax IEEE, 2013, pp. 5054--5061.

\bibitem{perez2012lqr}
A.~Perez, R.~Platt, G.~Konidaris, L.~Kaelbling, and T.~Lozano-Perez,
  ``{LQR-RRT*}: Optimal sampling-based motion planning with automatically
  derived extension heuristics,'' in \emph{2012 {IEEE} International Conference
  on Robotics and Automation}.\hskip 1em plus 0.5em minus 0.4em\relax IEEE,
  2012, pp. 2537--2542.

\bibitem{drake}
\BIBentryALTinterwordspacing
R.~Tedrake and the Drake Development~Team, ``Drake: Model-based design and
  verification for robotics,'' 2019. [Online]. Available:
  \url{https://drake.mit.edu}
\BIBentrySTDinterwordspacing

\bibitem{zheng2021accelerating}
D.~Zheng and P.~Tsiotras, ``Accelerating kinodynamic {RRT*} through
  dimensionality reduction,'' in \emph{2021 {IEEE/RSJ} International Conference
  on Intelligent Robots and Systems ({IROS})}.\hskip 1em plus 0.5em minus
  0.4em\relax IEEE, 2021, pp. 3674--3680.

\bibitem{werner2026biconvex}
P.~Werner, T.~Marcucci, and D.~Rus, ``Biconvex optimization for smooth
  minimum-time trajectories around convex obstacles,'' 2026, manuscript in
  preparation.

\bibitem{mattingley2012cvxgen}
J.~Mattingley and S.~Boyd, ``{CVXGEN}: A code generator for embedded convex
  optimization,'' \emph{Optimization and Engineering}, vol.~13, no.~1, pp.
  1--27, 2012.

\bibitem{gatermannParrilo2004}
K.~Gatermann and P.~A. Parrilo, ``Symmetry groups, semidefinite programs, and
  sums of squares,'' \emph{Journal of Pure and Applied Algebra}, vol. 192, no.
  1--3, pp. 95--128, 2004.

\end{thebibliography}

\end{document}